% Sample LaTeX file for creating a paper in the Morgan Kaufmannn two
% column, 8 1/2 by 11 inch proceedings format.

\documentclass[]{article}
\usepackage{proceed2e}

% Set the typeface to Times Roman
\usepackage{times}
%below are the packages added by cheng
%\usepackage[utf8]{inputenc} % allow utf-8 input
\usepackage[T1]{fontenc}    % use 8-bit T1 fonts
\usepackage{hyperref}       % hyperlinks
\usepackage{url}            % simple URL typesetting
\usepackage{booktabs}       % professional-quality tables
\usepackage{amsfonts}       % blackboard math symbols
\usepackage{nicefrac}       % compact symbols for 1/2, etc.
\usepackage{microtype}      % microtypography
\usepackage{epstopdf}
\usepackage{graphicx}
\usepackage{amsmath}
\usepackage{amssymb}
\usepackage{mathptmx}
\usepackage{graphicx}
\usepackage{subfigure}
\usepackage[textwidth=2cm,colorinlistoftodos]{todonotes}
\usepackage{xspace}
\usepackage{latexsym}
\usepackage{amsmath,amssymb,graphicx,xspace}
\usepackage{times}   % assumes new font selection scheme installed
\usepackage{multirow}
\usepackage{algorithm}
\usepackage{amsthm}
\usepackage{amsmath}
\usepackage[algo2e]{algorithm2e}
\usepackage[font=small,skip=5pt]{caption}
\usepackage[noend]{algpseudocode}
\usepackage{booktabs}

\usepackage{lipsum}

\newcommand\blfootnote[1]{%
  \begingroup
  \renewcommand\thefootnote{}\footnote{#1}%
  \addtocounter{footnote}{-1}%
  \endgroup
}

\let\OLDthebibliography\thebibliography
\renewcommand\thebibliography[1]{
  \OLDthebibliography{#1}
  \setlength{\parskip}{0pt}
  \setlength{\itemsep}{0pt plus 0.3ex}
}

\setlength{\belowcaptionskip}{-10pt} 
\expandafter\def\expandafter\normalsize\expandafter{%
    \normalsize
    \setlength\abovedisplayskip{5pt}
    \setlength\belowdisplayskip{5pt}
    \setlength\abovedisplayshortskip{5pt}
    \setlength\belowdisplayshortskip{5pt}
}
\usepackage{color,comment,rotating,subfigure,url,xspace} % author packages
\usepackage{tikz}
\usetikzlibrary{arrows,shadows,shapes,backgrounds,decorations,snakes,fit}
\usepackage{todonotes}
\usepackage{enumerate}
\usepackage{lipsum}
\usepackage{titlesec}

\titlespacing\section{0pt}{2pt plus 2pt minus 2pt}{2pt plus 2pt minus 2pt}
\titlespacing\subsection{0pt}{2pt plus 2pt minus 2pt}{2pt plus 2pt minus 2pt}
\titlespacing\subsubsection{0pt}{0pt plus2pt minus 2pt}{0pt plus 2pt minus 2pt}
\titlespacing\paragraph{0pt}{0pt plus 2pt minus 2pt}{0pt plus 2pt minus 2pt}

\theoremstyle{definition}

\newtheorem{theorem}{Theorem}
\newtheorem{proposition}{Proposition}

\DeclareMathOperator*{\E}{\mathbb{E}}
%\colorlet{red}{black}

%%%%%%%%%%%%%%%%%%%%%%%%%%%%%%%%%%%%%%%%%%%%%5
%\title{Kernel based stratification using DPP for balanced population learning }
%\title{Minibatch Diversification using Determinantal Point Processes and its Connection to Population Stratification}
%\title{Balanced Mini-batch Sampling for SGD Using Determinantal Point Processes}
\title{Determinantal Point Processes for Mini-Batch Diversification}

\author{
Cheng Zhang\\
Disney Research\\
Pittsburgh, PA, USA\\
\tt\small{cheng.zhang@disneyresearch.com}
\And Hedvig Kjellstr\"{o}m\\
KTH Royal Institute of Technology\\
Stockholm, Sweden\\
\tt\small{hedvig@kth.se}\\
\And Stephan Mandt \\
Disney Research\\
Pittsburgh, PA, USA\\
\tt\small{stephan.mandt@disneyresearch.com}
}
\begin{document}
\maketitle
\begin{abstract}
We study a mini-batch diversification scheme for stochastic gradient descent (SGD). While classical SGD relies on uniformly sampling data points to form a mini-batch, we propose a non-uniform sampling scheme based on the Determinantal Point Process (DPP). The DPP relies on a similarity measure between data points and gives low probabilities to mini-batches which contain redundant data, and higher probabilities to mini-batches with more diverse data.
This simultaneously balances the data and leads to stochastic gradients with lower variance. We term this approach Diversified Mini-Batch SGD (DM-SGD).  We show that regular SGD and a biased version of stratified sampling emerge as special cases. Furthermore, DM-SGD generalizes stratified sampling to cases where no discrete features exist to bin the data into groups. We show experimentally that our method results more interpretable and diverse features in unsupervised setups, and in better classification accuracies in supervised setups.  
\end{abstract}
\blfootnote{In the proceedings of Uncertainty in Artificial Intelligence (UAI 2017).}
\vspace{-8pt}
\section{INTRODUCTION}

Stochastic gradient descent (SGD) is one of the most important algorithms for scalable machine learning \cite{bottou2010large,robbins1985convergence,mandt2017stochastic}. SGD optimizes an objective function by successively following noisy estimates of its gradient based on mini-batches from a large underlying dataset. We usually assure that this gradient is unbiased, meaning that the expected stochastic gradient equals the true gradient. When combined with a suitably decreasing learning rate schedule, the algorithm converges to a local optimum of the objective~\cite{bottou2010large}.

Often we are not interested in learning an unbiased estimator of the gradient, but are rather willing to introduce some bias. There are many reasons for why this might be the case. First, biased SGD schemes such as momentum \cite{polyak1964some}, iterate averaging \cite{schmidt2013minimizing}, or preconditioning \cite{duchi2011adaptive, kingma2014adam, tieleman2012lecture, zeiler2012adadelta} may reduce the stochastic gradient noise or ease the optimization problem, and therefore often lead to faster convergence. Another reason is that we may decide to actively select samples based on their relevance or difficulty levels such as boosting \cite{freund1995desicion}, or because we believe that our dataset is in some respect imbalanced \cite{he2009learning}. In this paper, we propose and investigate a biased mini-batch subsampling scheme for imbalanced data.

Real-world data sets are naturally imbalanced. 
%%SM: I don't understand this argument.---Cheng: it is the argument from diversified priors, which means the true population is imbalanced. But to learn a useful represetation. We still want to rebalance the data.
%, which is caused by two factors. First, the frequency of true information is long tailed (imbalanced) \cite{kwok2012priors,xie2015diversifying}, for example, 
For instance, the sports topic appears more often in the news than biology; the internet contains more images of young people than of senior people, and Youtube has more videos of cats  than of bees or ants.
Aiming to maximize the probability of generating such training data, machine learning models will refine the dominant information with redundancy but ignore the important but scarce data. For example, a model trained on Youtube data might be very sensitive to different cats but unable to recognize ants. We may therefore decide to try to learn on a more balanced data set by actively selecting diversified mini-batches.

The currently most common tool for mini-batch diversification is stratified sampling \cite{neyman1934two,zhao2014accelerating}. In this approach, one groups the data into a finite set of \emph{strata} based on discrete or continuous features such as a label or cluster assignment. To re-balance the data set, the data can then be subsampled such that each stratum occurs with equal probability in the mini-batch (in the following, we refer to this method as \emph{biased stratified sampling}). Unfortunately, the data are not always amenable to biased stratified sampling because discrete features may not exist, or the data may not be unambiguously clustered. Instead of subsampling based on discrete strata, it would be desirable to diversify the mini-batch based on a soft similarity measure between data points. As we show in this paper, this can be achieved using Determinantal Point Processes (DPPs) \cite{kulesza2012determinantal}.

The DPP is a point process which mimics repulsive interactions between samples. Being based on a similarity matrix between the data points, a draw from a DPP yields diversified subsets of the data. The main contribution of this paper is using this mechanism to diversify the mini-batches in stochastic gradient-based learning and analyzing this setup theoretically. In more detail, our main achievements are:
%thereby as follows:
 \begin{itemize}
     \vspace{-10pt}
     \item We present a  mini-batch diversification scheme based on DPPs for stochastic gradient algorithms.  
     This approach requires a similarity measure among data points, which can be constructed using low-level features of the data. Since the sampling strategy is independent of the learning objective, diversified mini-batches can be precomputed in parallel and reused for different learning tasks. Our approach applies to both supervised and unsupervised models.
     \vspace{-5pt}
     \item  We prove that our method is a  generalization of stratified sampling and i.i.d.~mini-batch sampling. Both cases emerge for specific similarity kernels of the data.  
     \vspace{-7pt}
     \item  We theoretically analyze the conditions under which the variance of the DM-SGD gradient gets reduced. We also give an unbiased version of DM-SGD which optimizes the original objective without re-balancing the data. 
     \vspace{-5pt}
     \item  
     %SM: I don't think that any of this is important to mention since these are just terms that we introduce.
     %We introduce the concept of diversified risk in comparison with the empirical risk. For the special case of stochastic variational inference \cite{hoffman13stochastic}, we optimize the balanced-population posterior. 
     We carry out extensive experiments on several models and datasets. Our approach leads to faster learning and higher classification accuracies in deep supervised learning. For topic models we find that that the resulting document features are more interpretable and are better suited for subsequent supervised learning tasks. 
     \vspace{-10pt}
 \end{itemize}

Our paper is structured as follows. In Section \ref{sec:related} we list related work. Section \ref{sec:method} discusses our main concepts of a diversifed risk, and discuses the DM-SGD method. Section \ref{sec:theory} discusses theoretical properties of our approach such as variance reduction. Finally, in Section \ref{sec:exp}, we give empirical evidence that our approach leads to higher classification accuracy and better feature extractions than i.i.d. sampling.

\section{RELATED WORK}
\label{sec:related}
%There is a large volume of related work of stochastic optimization, imbalanced learning and data subsampling. 
We revisit the most relevant prior work based on the following aspects. \emph{Diversification and Stratification} comprises methods which aim at re-balancing the empirical distribution of the data. \emph{Variance reduction} summarizes stochastic gradient methods that aim at faster convergence by reducing the stochastic gradient noise. Finally, we list related applications and extensions of \emph{determinantal point processes}.

%Firstly, we review several population learning works  which associated with the population balance aspect of our method. Then, we focus on works that relates to stochastic optimization with mini-batches. This group of work mainly share the common advancement of fast convergence with our proposed method. In the end, related work on DPP is summarized shortly since it is an important component of our work. 

%\paragraph{Population learning}
\paragraph{Diversification and stratification.}~
Since our method suggests to diversify the mini-batches by of non-uniform subsampling from the data, it relates to stratification methods.

Stratification~\cite{neyman1934two,mckay1979comparison} assumes that the data decomposes into disjoint sub-datasets, called strata. These are formed based on certain criteria such as a class-label. Instead of uniformly sampling from the whole dataset, each stratum is sub-sampled independently, which reduces the variance of the estimator of interest.

Stratified sampling has been suggested as a variance reduction method for stochastic gradient algorithms \cite{fu2017CPSGMCMC,zhao2014accelerating}. 
%This subsampling strategy can be biased or unbiased, meaning that samples from thin classes (classes with few data points) may be effectively associated with a higher weight. 
If one subsamples the same number of data points from every stratum to form a mini-batch as in \cite{zhao2014accelerating}, one naturally balances the training procedure. This approach was also used in \cite{facebookcode}. Our work relates closely to this type of biased stratified sampling. It is different in that it does not rely on discrete strata, but only requires a measure a measure of similarity between data points to achieve a similar effect. This applies more broadly.

%\paragraph{Fast convergence of stochastic methods}
\paragraph{Variance reduction.}~Besides re-balancing the dataset, our approach also reduces the variance of the stochastic gradients. Several ways of variance reduction of stochastic gradient algorithms have been proposed, an important class relying on control variates~\cite{mandt2014smoothed, paisley2012variational, ranganath2014black, wang2013variance,   salimans2014using}.
%Several prior arts have provided theoretical profs of the coupling relationship between mini-batch selection and learning rates \cite{balles2016coupling,friedlander2012hybrid}. In general, lower variance of the gradient estimates from mini-batch allows larger learning rate. Commonly, 
%Firstly, increasing the mini-batch size corespondents to decreasing of variance of the gradients \cite{friedlander2012hybrid}. Hence, to accelerate the learning procedure, prior studies either focus on adapting the learning rate based on general statistics of the mini-batches, such as  \cite{kingma2014adam, meng2015objective,ranganath2013adaptive,tieleman2012lecture, zeiler2012adadelta}, or focus on the mini-batch selection. 
%For mini-batch selection, we can group the related work by whether the sampling is random (naive sampling). With naive sampling, prior work either focus on optimizing the learning gain through optimizing the mini-batch size \cite{balles2016coupling,byrd2012sample,de2017big} or reduce the variances with correction terms \cite{wang2013variance}. 
A second class of methods relies on non-uniform sampling of mini-batches \cite{csiba2016importance,fu2017CPSGMCMC, perekrestenko2017faster, schmidt2015non,zhao2014accelerating,zhao2015stochastic}. None of these methods rely on similarity measures between data points. 
%Our method is a non-uniform sampling method which balances the population and accelerates the convergence at the same time. Our proposed methods thus relates closely with this group of work.  Among all non-uniform sampling methods, importance sampling based methods as been applied mainly on traditional optimization settings \cite{csiba2016importance, perekrestenko2017faster, zhao2015stochastic}. Although effective, it is very hard to apply to general machine learning models, because the computation of the sampling probability is highly coupled with high-dimensional model parameters \cite{fu2017CPSGMCMC} which make it non practical. 

Our approach is most closely related to clustering-based sampling (CBS) \cite{fu2017CPSGMCMC} and stratified sampling (StS) \cite{zhao2014accelerating}. 
%SM: is the following statement true? This sounds like a boild claim. I simply added a sentence to the last paragraph to have the desired effect.
%(Compared to importance sampling \cite{csiba2016importance, perekrestenko2017faster, zhao2015stochastic}, CBS and StS are more general and computationally efficient \cite{fu2017CPSGMCMC}.) 
StS applies stratified sampling to SGD and builds on pre-specified strata. For every stratum, the same number of data points are uniformly selected, and then re-weighted according to the size of the stratum to make the sampling scheme un-biased. CBS uses a similar strategy, but does not require a pre-speficied set of strata.   
Instead, the strata are formed by pre-clustering the raw data with k-means. (Thus, if the data are clustered based on a class label, CBS is identical to StS.) The problem is that the data are not always amenable to clustering. Second, both StS and CBS ignore the within-cluster variations between data points. In contrast, our approach relies on a continuous measure of similarity between samples. We furthermore show that it is a strict generalization of both setups for particular choices of similarity kernels.

%Clustering is an on-going research topic by itself and k-means clustering is not able to provide satisfactory results especially when the data are imbalanced. Our method tackles this problem and can be viewed as generalization of StS and CBS on a sample level rather than on a strata level. 

\paragraph{Determinantal point processes.}~
The DPP \cite{kulesza2012determinantal,macchi1975coincidence} has been proposed \cite{kwok2012priors,lee2016individualness,xie2015diversifying} and  advanced \cite{affandi2013nystrom,li2016fast,li2015efficient} in the machine learning community in the recent years. 
%It is a probabilistic distribution of subsets from a fixed ground set. Briefly, the probability of a subset is positive proportional to the determinant of its corresponding similarity kernel matrix. In this way, DPP encourages diverse subsets, which makes DPP useful for, e.g., subset sampling
It has been applied in subset sampling \cite{kulesza2011k,li2015efficient} and results filtering \cite{lee2016individualness}.
%As we discussed in the previous section, imbalanced datasets commonly lead to under-represented models.

The DPP has also been used as a diversity-enhancing prior in Bayesian models \cite{kwok2012priors,xie2015diversifying}. 
In big data setups, the data may overwhelm the prior such that the strength of the prior has to scale with the number of data points; introducing a bias. The approach is furthermore constrained to hierarchical Bayesian models, while our approach applies to all empirical risk minimization problems.

Recently, efficient algorithms have been proposed to make sampling using the DPP more scalable. 
In the traditional formulation, mini-batch sampling  costs $\mathcal{O}(Nk^3)$, with an initial fixed cost of diagonalizing the similarity matrix \cite{kulesza2012determinantal}, where $N$ is the size of the data and $k$ is the size of the mini-batch. 
Recent scalable versions of the DPP rely on core-sets and low-rank approximations and scale more favorably \cite{affandi2013nystrom,li2015efficient}. These versions were used in our large-scale experiments. 

%SM: I don't think we need a summary for teh related work section.
%In summary, data diversification and convergence acceleration are two important research directions that our work closely connects to. Our proposed method achieves both goals without adding any running time cost but archives the goal of variance reduction for  balanced dataset.

%\paragraph{Other} 

%\begin{itemize}
%\item Boosting
%\item bootstrapping 
%\end{itemize}

\section{METHOD}
\label{sec:method}
Our method, DM-SGD, uses a version of the DPP for mini-batch sampling in stochastic gradient descent. We show that this balances the underlying data distribution and simultaneously accelerates the convergence due to variance reduction. We briefly revisit DPP first, and then introduce our mini-batch diversification method. Theoretical aspects are then discussed in Section \ref{sec:theory}.

%%%%%%%%%%%%%%%%%%%%%%%%%%%%%%%%%%%%%%%%%%%
%\subsection{Determinantal point processes revisited}
\subsection{DETERMINANTAL POINT PROCESSES}
%introduction for DPP
A point process is a collection of points randomly located in some mathematical space. The most prominent example is the Poisson process on the real line \cite{kingman1993poisson}, which models independently occurring events. In contrast, the DPP \cite{kulesza2012determinantal,macchi1975coincidence} models repulsive correlations between these points. 

In this paper, we restrict ourselves to a finite set of $N$ points. Denote by  $L\in {\mathbb R}^{N\times N}$ a similarity kernel matrix between these points, e.g.~based on spatial distances or some other criterion. $L$ is real, symmetric and positive definite, and its elements $L_{ij}$ are some appropriately defined measure of similarity between the $i_\text{th}$ and $j_\text{th}$ data. The DPP assigns a probability to subsampling any subset $Y$ of $\{1,\dots,N\}$, which is proportional to the determinant of the sub-matrix $L_Y$ of $L$ which indexes the subset,
\begin{equation}
\mathcal{P}(Y) = \frac{det(L_Y)}{det(L + I)} \propto det(L_Y).
\label{eq:DPP}
\end{equation}
%$L$ is commonly constructed with different choices of kernels on data feature.  In the example of text document, we can simply use a linear kernel $L(x_i,y_j) = x_i^{\rho}x_j'^{\rho}$ with TF-IDF feature vectors ($x$) of the documents. 
%DPP naturally diversifying the samples based on point-wise similarity measures since the data are more diversified with bigger determinant. 
For instance, if $Y=\{i,j\}$ consists of only two elements, then $\mathcal{P}(Y) \propto L_{ii} L_{jj} - L_{ij} L_{ji}$.
Because $L_{ij}$ and $L_{ji}$ measure the similarity between elements $i$ and $j$, being more similar lowers
the probability of co-occurrence. On the other hand, when the subset is very diverse, the determinant is bigger and correspondingly its co-occurrence is more likely. The DPP thus naturally diversifies the selection of subsets.

In this paper, we propose to use the DPP to diversify mini-batches. 
%In case of mini-batch sampling, a range of mini-batch sizes is preferred. 
In practice, the mini-batch size is usually constrained by empirical bounds or hardware restrictions. 
In this case, we want to use DPP conditioned on a given size $k$.  Therefore, a slightly modified version of the DPP is needed, which is called $k$-DPP~\cite{kulesza2011k}. It assigns probabilities to subsets of size $k$,
\begin{equation}
\mathcal{P}_L^k(Y) = \frac{det(L_Y)}{\sum_{|Y'|=k} det(L_{Y'})} .
\label{eq:kDPP}
\end{equation}
Apart from conditioning on the size of the subset of points, the $k$-DPP has the same diversification effect as the DPP~\cite{kulesza2011k}.
In order to have a fixed mini-batch size we use the $k$-DPP in this work. 

%%%%%%%%%%%%%%%%%%%%%%%%%%%%%%%%%%%%%%%%%%%%
%\subsection{Mini-batch diversification using the $k$-DPP}
\subsection{MINI-BATCH DIVERSIFICATION}
The diversifying property of the $k$-DPP makes it well-suited to diversify mini-batches. We first discuss our learning objective---the diversified risk. We then introduce our algorithm and qualitatively discuss its properties.

%%%%%%%%%%%%%%%%%%%%%%%%%%%%%%%%%%%
%\begin{figure}[t]
%\centering
%\scalebox{0.8}{\input{PopulationFi%g.tex}}
%\caption{Visualization of our working hypothesis. Our data is as a sample from an underlying population $D$, but we only observe an imbalanced realization $\hat{D}$. To better approximate $D$, we average our loss function over diversified subsets of $\hat{D}$, which results in the balanced population $D^*$, giving rise to the diversified risk $J^*$.}
%\label{fig:DataRelationship}
%\end{figure}

\begin{figure}[t]
\centering
\scalebox{0.47}{\pgfdeclarelayer{background}
\pgfdeclarelayer{foreground}
\pgfsetlayers{background,main,foreground}

\begin{tikzpicture}

\tikzstyle{Vnode}=[circle,draw=black, radius=3pt];
\tikzstyle{Snode}=[rectangle,draw=black, minimum size=3pt];
\tikzstyle{Tnode}=[regular polygon,draw=black, regular polygon sides=3, minimum size=1pt,inner sep=2pt];
\tikzstyle{surround} = [thick,draw=black,rounded corners=1mm];

%% data
\node [Vnode, fill=blue] at ( 0, 0) (){};
\node [Vnode, fill=blue] at ( 0.5, 0.1) (){};
\node [Vnode, fill=blue] at ( -0.5, -0.3) (){};
\node [Vnode, fill=blue] at ( 0.4, 0.1) (){};
\node [Vnode, fill=blue] at ( 0.9, 0.3) (){};
\node [Vnode, fill=blue] at ( 0.6, -0.1) (){};
\node [Vnode, fill=blue] at ( -0.3, 0.05) (){};
\node [Vnode, fill=blue] at ( 0.1, 0.4) (){};
\node [Vnode, fill=blue] at ( 0.1, -0.4) (){};
\node [Vnode, fill=blue] at ( 0.6, -0.1) (){};
\node [Vnode, fill=blue] at ( -0.3, -0.6) (b){};
\node [Vnode, fill=blue] at ( -0.6, 0.5) (){};

\node [Snode, fill=red] at ( 2.2, 1.8) (){};
\node [Snode, fill=red] at ( 2.0, 2.3) (r){};
\node [Snode, fill=red] at ( 2.2, 2.1) (){};

\node [Tnode, fill=green] at ( -0.5, 2.5) (g){};

\draw [->, thick] (3,1) arc (100:80:8) ;
\node [] at ( 4.53 , 1.6) (){\large Sampling with $k$-DPP};

\node [] at ( 4.2+2.4, 1.6) (){ \big \{ };
\node [] at ( 4.2+2.4, 1) (){\big \{};
\node [] at ( 4.2+2.4, 0.4) (){ \big \{};
\node [Vnode, fill=blue] at ( 4.5+2.4, 1.6) (){};
\node [Vnode, fill=blue] at ( 4.5+2.4, 1) (){};
\node [Vnode, fill=blue] at ( 4.5+2.4, 0.4) (){};
\node [Snode, fill=red] at ( 5+2.4, 1.6) (){};
\node [Snode, fill=red] at (5+2.4, 1) (){};
\node [Snode, fill=red] at ( 5+2.4, 0.4) (){};
\node [Tnode, fill=green] at ( 5.5+2.4, 1.6) (){};
\node [Tnode, fill=green] at (5.5+2.4, 1) (){};
\node [Tnode, fill=green] at ( 5.5+2.4, 0.4) (){};
\node [] at ( 6.8+1.4, 1.6) (){\big \}};
\node [] at ( 6.8+1.4, 1) (){\big \}};
\node [] at ( 6.8+1.4, 0.4) (){\big \}};

\draw [->, thick] (-1.5,1) arc (80:100:8) ;
\node [ ] at ( -3.2 , 1.6) (){\large Sampling randomly};

\node [] at ( 4.2-11, 1.6) (){ \big \{ };
\node [] at ( 4.2-11, 1) (){\big \{};
\node [] at ( 4.2-11, 0.4) (){ \big \{};
\node [Vnode, fill=blue] at ( 4.5-11, 1.6) (){};
\node [Vnode, fill=blue] at ( 4.5-11, 1) (){};
\node [Vnode, fill=blue] at ( 4.5-11, 0.4) (){};
\node [Vnode, fill=blue] at ( 5-11, 1.6) (){};
\node [Snode, fill=red] at ( 5-11, 1) (){};
\node [Vnode, fill=blue] at ( 5-11, 0.4) (){};
\node [Vnode, fill=blue] at ( 5.5-11, 1.6) (){};
\node [Vnode, fill=blue] at ( 5.5-11, 1) (){};
\node [Vnode, fill=blue] at ( 5.5-11, 0.4) (){};
\node [] at ( 5.8-11, 1.6) (){\big \}};
\node [] at ( 5.8-11, 1) (){\big \}};
\node [] at ( 5.8-11, 0.4) (){\big \}};

%%plate
\node[surround, inner sep = .5cm] (f_N) [fit = (g)(r)(b) ] {};

\end{tikzpicture}}
\caption{Sampling mini-batches using the $k$-DPP. For an imbalanced dataset, our method results in diversified mini-batches.}
\vspace{-13pt}
\label{fig:DPP}
\end{figure}

\begin{figure*}[t]
    \centering
    \subfigure[True population]{
    \includegraphics[width=3.7cm, height=3cm]{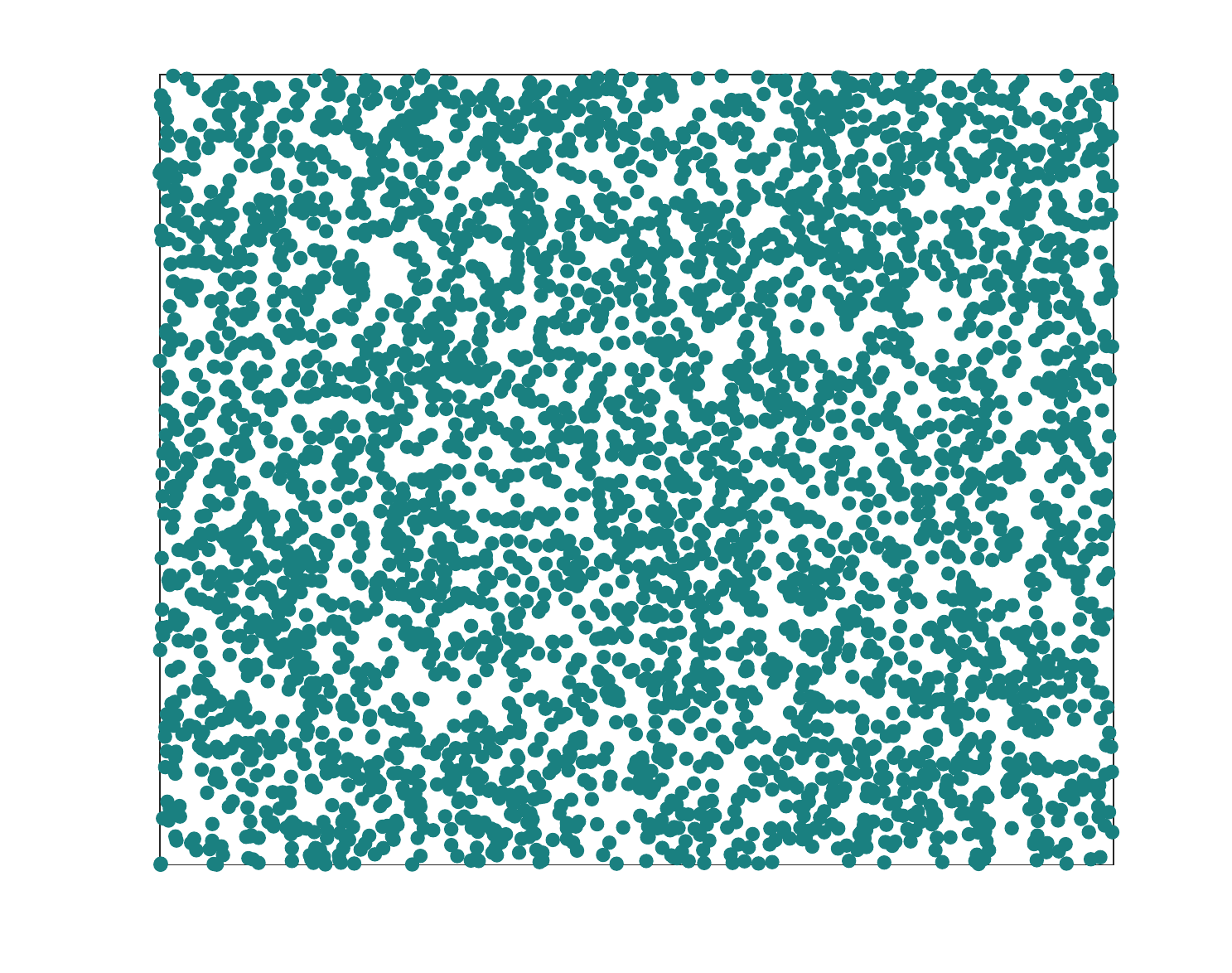}
    }
    \hspace{-20pt}
    \subfigure[Dataset]{
    \includegraphics[width=3.7cm, height=3cm]{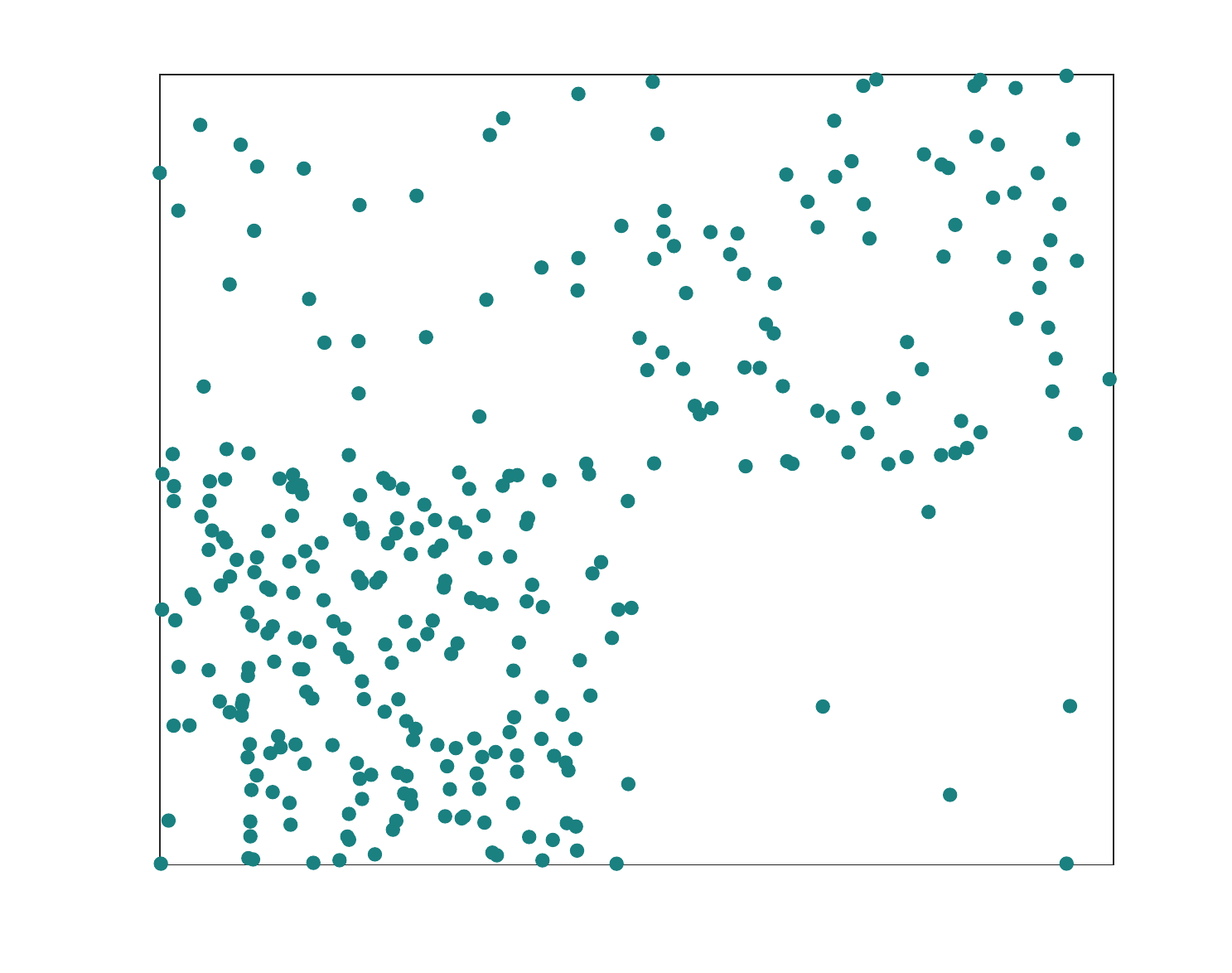}
    }
    \hspace{-20pt}
    \subfigure[Stratified Sampling]{
    \includegraphics[width=3.7cm, height=3cm]{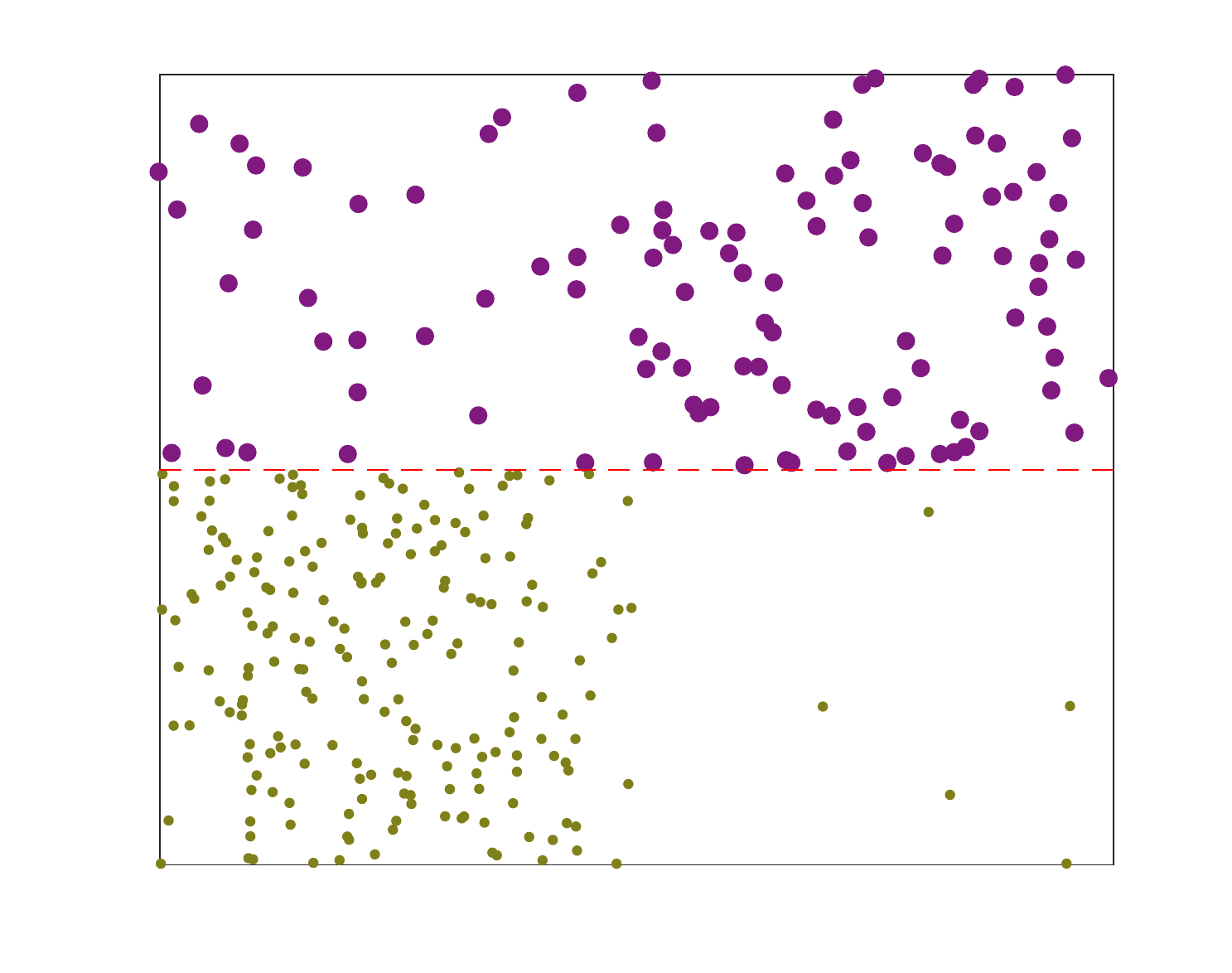}
    }
    \hspace{-20pt}
    \subfigure[Pre-clustering (K-means)]{
    \includegraphics[width=3.7cm, height=3cm]{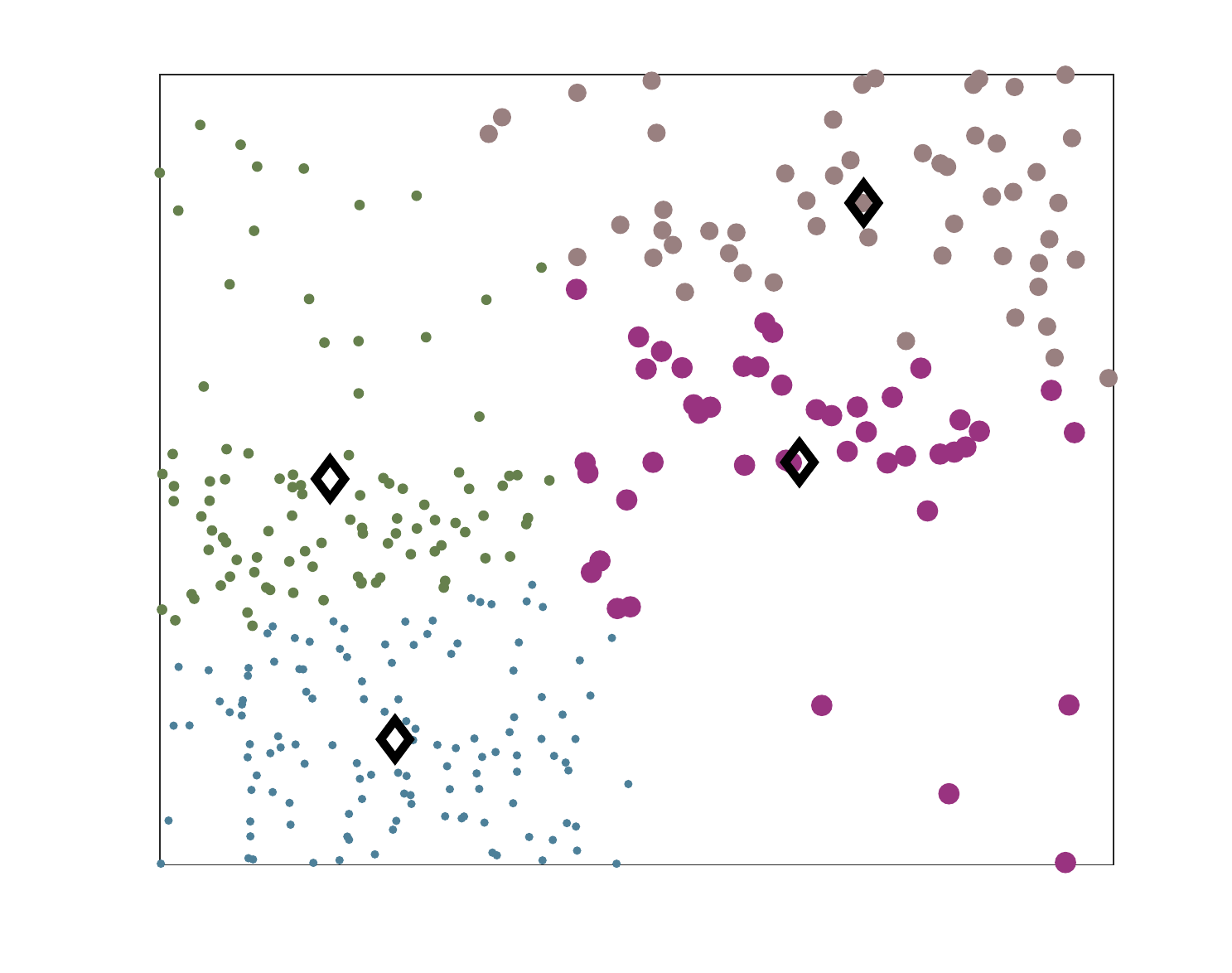}
    }
    \hspace{-20pt}
    \subfigure[$k$-DPP sampling]{
    \includegraphics[width=3.7cm, height=3cm]{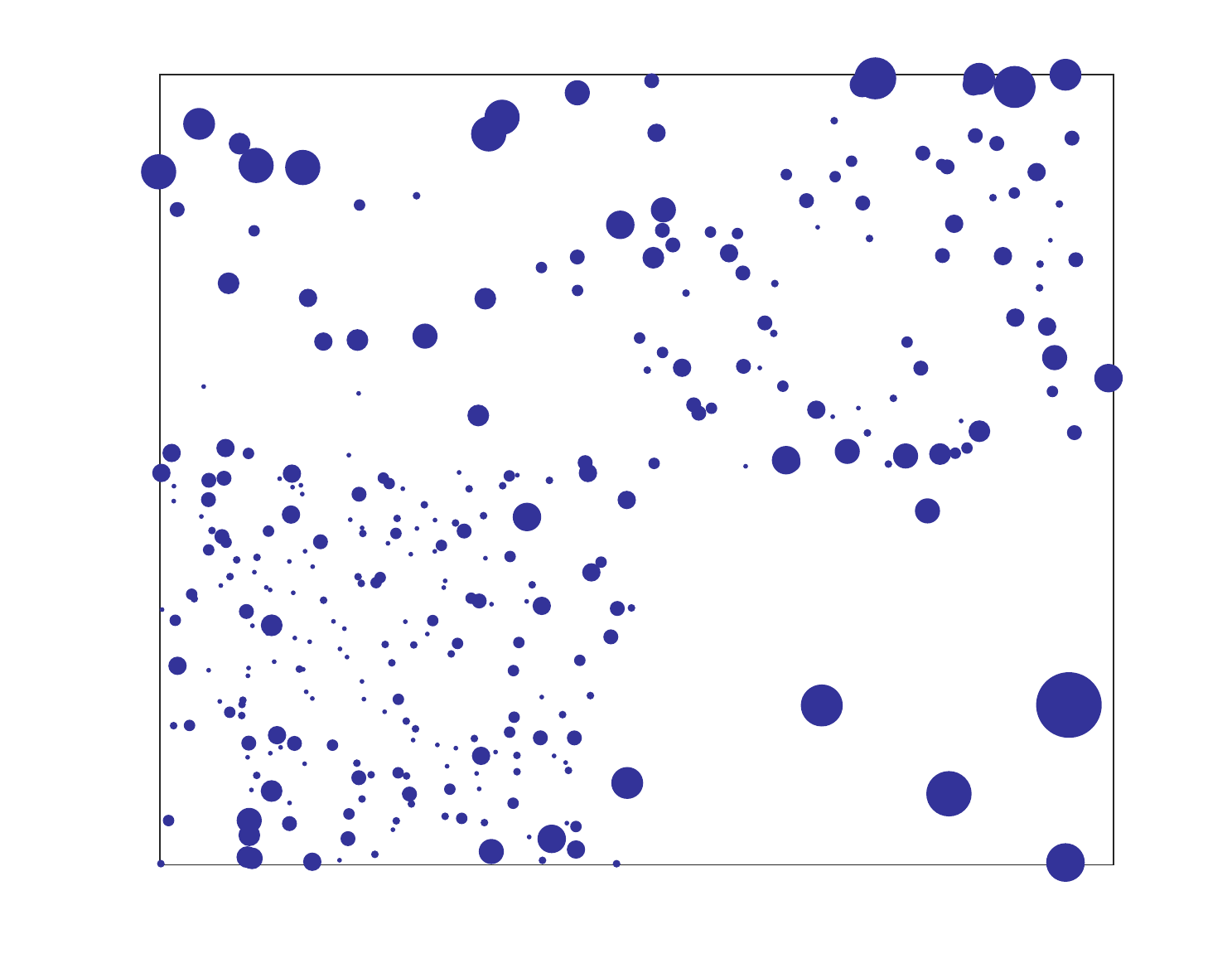}
    }
    \hspace{-20pt}
    \caption{Visualization of different non-uniform data subsampling schemes on toy data. Panel (a) shows a homogeneous distribution of data. We assume that we only observe an imbalanced subset, shown in panel (b). 
    Panels (c), (d), (e) demonstrate different biased sampling methods that aim at restoring balance in the data. Thicknesses of data points thereby indicate their sampling frequency. Biased stratified sampling (c) relies on dividing the feature space vertically along certain dimensions, whereas pre-clustering (d) defines the strata as clusters obtained from k-means~\cite{fu2017CPSGMCMC} (we used $k=4$). The black diamonds show the cluster centers and data are colored with respect to their cluster membership. 
    %It advances the traditional stratified sampling since it is a data-driving approach to find strata. However, the clustering method itself suffers from data imbalance and gives unsatisfying result. 
    Panel (e) shows the results using the $k$-DPP, using an RPF kernel of spatial distances as similarity measure between data points. In this example, the $k$-DPP best restores the balance of the original data set. %The sampling size $k=10$ is used. 
    %We can see that our method lead to population balance with respect to point relationship directly.
    }
    \label{fig:toyDataCompare}
\end{figure*}

\paragraph{Expected, empirical, and diversified risk.}~ Many problems in machine learning amount to minimizing some loss function $\ell(x,\theta)$ which both depends on a set of parameters $\theta$ and on data $x$. In probabilistic modeling, $\ell$ could be the negative logarithm of the likelihood of a probabilistic model, or a variational lower bound \cite{blei2003latent,jordan1999introduction}. We often thereby assume that the data were generated as draws from some underlying unknown data-generating distribution $p_{\text{data}}(x)$, also called the population distribution. To best generalize to unseen data, we would ideally like to minimize this function's expectation under $p_{\text{data}}$,
\begin{equation}
J(\theta) = \E_{x\sim p_\text{data}} [\ell(x;\theta)]
\end{equation}
This objective function is also called expected risk~\cite{bottou2010large}.
Since $p_{\text{data}}(x)$ is unknown and we believe that our observed data are in some sense a representative draw from the population distribution, we can replace the expectation by an expectation over the \emph{empirical} distribution of the data $p_\text{emp}$, which leads to the empirical risk~\cite{bottou2010large},
\begin{equation}
\vspace{-5pt}
\hat{J}(\theta) = \E_{x\sim p_\text{emp}} [\ell(x;\theta)] = {\frac{1}{N}}\sum_{i=1}^N \ell(x_i,\theta).
\label{eq:empirical_risk}
\vspace{-3pt}
\end{equation}
%However, in many cases, our observed data are not representative of the population. Instead,  we are observing a \emph{biased} sample, meaning that some types of the data are systematically more or less present in our sample. These types of biases occur very frequently and may arise due to the costs of data collection or other forms of sampling biases. 

A typical goal in machine learning is not to minimize the empirical risk with high accuracy, but to learn model parameters that generalize well to unseen data. For every data point in a test set, we wish our model to have high predictive accuracy. If this test set is more balanced than the training set (for instance, because it contains all classes to equal proportions in a classification setup), we would naturally like to train our model on a more balanced training set than the original one without throwing away data. In this work, we present a systematic way to achieve this goal based on biased subsampling of the training data.
 We term the collection of all samples generated from biased subsampling the \emph{balanced dataset}.

%In this work, we assume that the true population is more homogeneously distributed than our sample. This assumption forms a working hypothesis of our approach and empirically justified as shown in the experiment session. 
%Our goal is to substitute the empirical distribution of the data by a more homogeneous distribution which better approximates the underlying population. 

To this end, we introduce the \emph{diversified risk}, where we average the loss function over diversified mini-batches $\vec{x}$ of size $k$,
\vspace{-5pt}
\begin{equation}
\vspace{-5pt}
{J}^*(\theta) = {\textstyle \frac{1}{k}}\, \E_{\vec{x}\sim {\rm k-DPP}} [\ell(\vec{x};\theta)],
\label{eq:diversified_risk}
\end{equation}
%Our working hypothesis implies that this balanced population is closer to the underlying population distribution than the empirical distribution of the data, which is also sketched in Figure~\ref{fig:DataRelationship}. 
Due to the repulsive nature of $k$-DPP, similar data points are less likely to co-occur in the same draw. Thus, data points which are very different from the rest are more likely to be sampled and obtain a higher weight, as illustrated in Figure~\ref{fig:toyDataCompare} (e). 

The diversified risk depends both on the mini-batch size and on the similarity kernel $L$ of the data. A more theoretical analysis of the diversified risk is carried out in Section~\ref{sec:theory}.

\paragraph{Algorithm.}~  Our proposed algorithm directly optimizes the diversified risk in Eq.~\ref{eq:diversified_risk}. To this end, we propose SGD updates on diversified mini-batches of fixed size $k$,
\begin{equation}
\theta_{t+1} = \theta_t - \rho_t {\textstyle \frac{1}{k}}\sum_{i \in B} \nabla \ell(\theta,x_i), \quad B\sim {\rm k-DPP}.
\end{equation}
Above, $B \subset \{1,\dots,N\}$ is a collection of $k$ indices, drawn from the $k$-DPP. In every stochastic gradient step, we thus sample mini-batches from the $k$-DPP and carry out an update with decreasing learning rate $\rho_t$. 

Sampling from the $k$-DPP first requires an eigendecomposition of its kernel.
%, $L = \sum_{n=1}^N \lambda_n v_n v_n^T$. 
This decomposition can also be approximated and has to be computed only once for one dataset. Drawing a sample then has the computational complexity $\mathcal{O}(N k^3)$, where $k$ is the mini-batch size, which is much more efficient since $k$ is commonly small. This approach is briefly summarized in Algorithm \ref{alg:BP-SGD}; details on the sampling procedure are given in the supplementary material. For more details, we refer to~\cite{kulesza2012determinantal} and to~\cite{affandi2013nystrom,li2015efficient} for more efficient sampling procedures.

\begin{algorithm}[h]
% Define mini-batch size $S$ and initialize $\lambda$ randomly\\
 %Compute the similarity measure of the whole dataset $L$;\\
 %Compute the eigendecomposition $\{ (v_n, \lambda_n)\}^N _{n=1} $ of $L$;\\
{\bf Input}: Data $X$, mini-batch size $k$, eigendecomposition $\{ (v_n, \lambda_n)\}^N _{n=1} $ of similarity matrix $K$.\\
 \For{$t=0$ to $MaxIter$}{
 %%%%
  \textbf{Sample a mini-batch using the $k$-DPP}\\
   Sample $k$ eigenvectors $V$ using eigenvalues;\\
   \hspace{20pt} Sample mini-batch $\vec{x}$ of size $k$ using $V$. (See supplement.)\\
 %%%%
  \textbf{Update  parameters}\\
$\theta_{t+1} = \theta_{t} + \rho_t g^*(\theta_t; \vec{x})$ (g* is the gradient estimate)

  }
 \caption{DM-SGD}
 \label{alg:BP-SGD}
\end{algorithm}

Stochastic Variational Inference (SVI) employs SGD for training probabilistic graphical models, such as Latent Dirichlet Allocation (LDA).
%In case of probabilistic graphical models, such as Latent Dirichlet Allocation (LDA), stochastic variational inference (SVI) is needed, which is a special case of SGD. 
Every SVI update involves an inner loop.  Algorithm \ref{alg:BP-SVI-LDA} shows an application of DM-SGD to SVI for LDA~\cite{blei2003latent}. We thus term it \emph{DM-SVI}.

% Note that at first sight, the linear scaling with the number of data points $N$ seems to spoil the efficiency of SGD, which does not depend on $N$. However, the sampling procedure is independent of the machine learning model. This indicates that we can draw the samples in parallel while learning the model or even draw the samples as a pre-processing step across all models. 

% Moreover, we can benefit from this sampling scheme in different ways.  First, SVI with inner loops as in latent variable models is computationally expensive, which makes a more sophisticated choice of mini-batches desirable. 
% Second, as we show, diversified mini-batches have the benefit of variance reduction, which may also compensate for the overhead of sampling from the DPP. Third, there are approximate versions of $k$-DPP sampling which are scalable to big datasets  \cite{affandi2013nystrom,li2016fast}. 
% For example, fast $k$-DPP is used in our experiment in section \ref{sec:cnn} for a large scale dataset.

\begin{algorithm}[h]
\textrm{We adopt the notation from~\cite{hoffman2010online}.} \\
 \For{$t=0$ to $MaxIter$}{
 %%%%
  \textbf{Sample a mini-batch using the $k$-DPP};\\   
  \textbf{Update variational parameters};\\
  \For{$j=0$ to Mini-batch Size}{
  Update local variational parameters ( e.g. $\phi$ and $\lambda$ for LDA)  for mini-batch. \\
  }
  Compute the intermediate global parameters as if the mini-batch is replicated $\frac{D}{S}$ times. \\
  ( e.g. $\tilde{\lambda}_{kw} = \eta + \frac{D}{S}\sum_{s=1}^{S} n_{tw} \phi_{twk}$ for LDA)\\
  Update the current estimate of the global variational
parameters with $\rho_t = (\tau_0+t)^{-k}$. \\
$\lambda = (1-\rho_t) \lambda + \rho_t \tilde{\lambda}$
  }
 \caption{DM-SVI}
 \label{alg:BP-SVI-LDA}
 \vspace{-3pt}
\end{algorithm}

%SM: the following is unclear, can you explain this better? How do you compute the polynomials?
%We thereby follow three steps. First, compute the elementary symmetric polynomials $e$. Then, sample $k$ eigenvectors $V$ with index $J$ during the eigenvalues and $e$. Finally, sample $k$ data points indexed by Y using $V$. 

\paragraph{Variance reduction and connections to biased stratified sampling.}~
Dividing the data into different strata and sampling data from each stratum with adjusted probabilities may reduce the variance of SGD. This insight forms the basis of stratified sampling \cite{zhao2014accelerating}, and the related pre-clustering based method \cite{fu2017CPSGMCMC}. As we will demonstrate rigorously in the next section, our approach also enjoys variance reduction but does not require an artificial partition of the data into clusters.
%strata.

For many models, the gradient varies smoothly as a function of the data. Subsampling data from diversified regions in data space will therefore decorrelate the gradient contributions. This, in turn, may reduce the variance of the stochastic gradient.  
%Sampling only a sub-region of data 
%in a mini-batch may make the stochastic gradient change drastically. This induces gradient noise. It is therefore beneficial when each mini-batch contains a diversified set of data points, such that all regions of the data space are evenly covered. 
To some degree, methods such as biased stratified sampling or pre-clustering sample data from diversified regions, but ignore the fact that gradients \emph{within} clusters may still be highly correlated.  If the data are not amenable to clustering, this variance may be just as large as the inter-cluster variance. Our approach does not rely on the notion of clusters. Instead, we have a continuous measure of similarity between samples, given by the similarity kernel. This applies more broadly.

 In Figure \ref{fig:toyDataCompare}, we investigate how well our subsampling procedure using the $k$-DPP allows us to recover an original distribution of data from which we only observe an imbalanced subset. Panel (a) shows the original (uniform) distribution of data points, and (b) shows the observed data set which we use to re-estimate the original dataset. While biased stratified sampling (c) or pre-clustering based on k-means (d) need an artificial way of dividing the data into finitely many strata and re-balance their corresponding weights, our approach (e) relies on a continuous similarity measure between data and takes into account both intra-strata and inter-strata variations.
 %intra-strata and inter-strata variations. 

\paragraph{Computational overhead.}
Sampling from the $k$-DPP  implies a computational overhead over classical SGD. 
Regarding the overall runtime, the benefits of the approach therefore come mainly into play in setups where each gradient update is expensive. One example is stochastic variational inference for models with local latent variables. For example, in LDA, the computational bottleneck is to update the per-document topic proportions. The time spent on sampling a mini-batch using the $k$-DPP is only about $10\%$ of the time to infer these local variables and estimate the gradient (See Table \ref{tab:R8time} in Section \ref{sec:exp}). Spending this tiny overhead on actively selecting training examples is well invested as the resulting stochastic gradient has a lower variance.

Since the sampling procedure is independent of the learning algorithm, we can parallelize it or even draw the samples as a pre-processing step and reuse them for different hyperparameter settings.
Moreover, there are approximate versions of $k$-DPP sampling which are scalable to big datasets \cite{affandi2013nystrom,li2016fast}. In this paper, we use the \emph{fast $k$-DPP} \cite{li2016fast} in our large-scale experiments (Section \ref{sec:cnn}).

\section{THEORETICAL CONSIDERATIONS}
\label{sec:theory}
In this section, we give the theoretical foundation of the DM-SGD scheme.  
We first prove that biased stratified sampling and pre-clustering emerge as special cases of our algorithm for particular choices of the kernel matrix $L$. We then prove that the diversified risk of DM-SGD is a re-weighted variant of the empirical risk, where the weights are given by the marginal likelihoods of the $k$-DPP (we also present an unbiased DM-SGD scheme which approximates the true gradients, but which performs less favorably in practice). Last, we investigate under which circumstances DM-SGD reduces the variance of the stochastic gradient.

\paragraph{Notation.~}~~ For what follows, let $m_i \in \{0,1\}$ denote a variable which indicates whether the $i_{th}$ data point was sampled under the $k$-DPP.  Furthermore, let ${\mathbb E}[\cdot] = {\mathbb E} _{m \sim{\rm k-DPP}}[\cdot]$ always denote the expectation under the $k$-DPP. This lets us express the expectation $F(x)=\sum_i f(x_i)$ which depends additively on the data points $x_i$ as
\begin{equation}
    \E[{\textstyle \sum_{i=1}^N} m_i f(x_i)]\equiv \E_{x\sim {\rm k-DPP}}[F(x)] 
\end{equation}
Next, we introduce short hand notations for  first and second moments. Denote the marginal probability for a point $x_i$ being sampled as
\begin{equation}
\vspace{-5pt}
    b_i \equiv \E[m_i], 
    %b_i  \equiv \E _{x \sim k-\text{DPP}} \Big[ I [ x_i \text{is in the mini-batch}] \Big]
\end{equation}
which has an analytic form and can be computed efficiently. 
We also introduce the correlation matrix
\begin{equation}
    C_{ij} = \frac{\E[(m_i-b_i) (m_j-b_j)]}{\E[m_i] \E[m_j]} = \frac{\E[m_i m_j]}{b_i b_j} - 1.
    \label{eq:Correlation}
\end{equation}
In contrast to minibatch SGD where $\E[m_i m_j] = \E[m_i]\E[m_j]$ and hence $C_{ij}=0$, this is no longer true under the $k$-DPP. Instead, the correlation can be both negative (when data points are similar) and even positive (when data points are very dissimilar).

Lastly, let $g(\theta,x) = \sum_{i=1}^N g(\theta,x_i)$ denote the gradient of the empirical risk, which is the batch gradient, and $g(\theta,x_i)$ its individual contributions from the data $x_i$.

% Connection to stratification 
We first prove that our algorithm captures two important limiting cases, namely (biased) stratified sampling and pre-clustering.  
\begin{proposition}
Biased stratified sampling (StS) \cite{zhao2014accelerating}, where data from different strata are subsampled with equal probability, is equivalent to DM-SGD with a similarity matrix $L$, defined as a block-diagonal matrix with 
\begin{equation}
L_{ij} = 
\begin{cases}
1 & H_i = H_j\\    
0 & H_i \neq H_j,
\end{cases}
\end{equation}
where $H_i$ denotes the label for the stratum of data point $i$.
\label{thrm:stratified}
\end{proposition}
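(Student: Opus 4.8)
The plan is to evaluate the $k$-DPP probability $\mathcal{P}_L^k(Y)$ from Eq.~\ref{eq:kDPP} directly for the prescribed block-diagonal kernel, exploiting the fact that each diagonal block is the all-ones matrix $\mathbf{1}\mathbf{1}^\top$ and hence has rank one. Writing $n_s$ for the number of data points carrying stratum label $s$, I would first observe that for any two points $i,j$ in the same stratum ($H_i=H_j$) the corresponding rows of $L$ are identical: each has a $1$ in every column belonging to that stratum and a $0$ everywhere else. Consequently, for any candidate subset $Y$ containing two points from a single stratum, the submatrix $L_Y$ has two equal rows, so $\det(L_Y)=0$ and that mini-batch receives zero probability.

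Next I would compute the determinant for the surviving subsets. If $Y=\{i_1,\dots,i_k\}$ draws its points from $k$ pairwise-distinct strata, then $L_{i_\ell i_\ell}=1$ and $L_{i_\ell i_m}=0$ for $\ell\ne m$, so $L_Y=I_k$ and $\det(L_Y)=1$. Hence every ``transversal'' subset --- one that meets each stratum at most once --- has determinant $1$, and all other subsets have determinant $0$. Plugging into Eq.~\ref{eq:kDPP}, the normalizing sum merely counts the transversal subsets, so the $k$-DPP collapses to the \emph{uniform} distribution over all size-$k$ subsets that contain exactly one point from each of $k$ distinct strata (this presupposes $k$ equal to the number of strata, the regime in which StS draws one point per stratum; larger-$k$ variants follow by the same argument on blocks).

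I would then identify this uniform-over-transversals law with biased stratified sampling. Sampling uniformly among the $\prod_s n_s$ transversals is the same as independently drawing one point uniformly at random from each stratum, which is precisely StS with one sample per stratum. To make the match with the diversified-risk weighting explicit, I would compute the marginal inclusion probability $b_i=\E[m_i]$ (the first moment introduced just before Eq.~\ref{eq:Correlation}): a point $i$ in stratum $s$ lies in $\prod_{s'\ne s} n_{s'}$ of the $\prod_{s'} n_{s'}$ transversals, giving $b_i=1/n_s$. Thus every point in a stratum of size $n_s$ carries weight $1/n_s$ and each stratum contributes total weight $1$, which is exactly the equal-strata re-balancing that defines biased stratified sampling.

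The step I expect to require the most care is not any individual determinant but pinning down the precise sense of the claimed ``equivalence'': I must state the matching between the mini-batch size $k$ and the number of strata, note that the kernel $L$ here is only positive semi-definite (still admissible for the $k$-DPP since $\det(L_Y)\ge 0$), and confirm that the induced law agrees with StS not merely in support but in its probabilities and marginals. Once the uniform-over-transversals characterization is established, the remaining bookkeeping is routine.
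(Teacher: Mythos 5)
Your proof is correct, and its core step is the same as the paper's: any subset containing two points of the same stratum gets determinant zero (the paper phrases this via the block factorization $\det(L_A)=\det(L_a)\det(L_{\bar{a}})$ with $\det(L_a)=0$ for an all-ones block; you phrase it via repeated rows of $L_Y$ --- the same observation). Where you go beyond the paper: its proof stops at the support characterization (``every draw contains at most one point per stratum'') and simply asserts equivalence with StS, but equivalence is a statement about distributions, not supports. You supply the missing pieces: for any transversal subset $L_Y=I_k$, so $\det(L_Y)=1$ and Eq.~\ref{eq:kDPP} collapses to the uniform law over transversals; this uniform law factorizes as independent uniform draws within each stratum, which is exactly biased StS; and the marginals come out to $b_i=1/n_s$, confirming the equal per-stratum re-weighting in the diversified risk. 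So your argument is a strictly more complete version of the paper's proof, and your cautionary points (the kernel is only positive semi-definite, and the equivalence requires pinning down $k$ relative to the number of strata) address things the paper glosses over. One small correction: your parenthetical claim that ``larger-$k$ variants follow by the same argument on blocks'' does not hold for this kernel --- if $k$ exceeds the number of strata, the pigeonhole principle forces every size-$k$ subset to contain two points from one stratum, so all determinants vanish and the $k$-DPP in Eq.~\ref{eq:kDPP} is not even well defined; drawing $s>1$ points per stratum would require a different (higher-rank) block kernel. The paper accordingly restricts to $k$ at most the number of strata, recovering StS exactly at $k=M$ and a generalization for smaller $k$.
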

\begin{proof}
\vspace{-10pt}
It is enough to show that a draw $A$ from the $k$-DPP which has multiple data points with the same strata assignment has probability zero.

Let $A = a\cup \bar{a}$, where $a$ is a collection of indices which come from the same stratum, and $\bar{a}$ is its disjoint complement. Because of the block-structure of $L$, we have that
$$\det(L_A) = \det(L_a)\det(L_{\bar{a}}).$$
However, $\det(L_a) = 0$ because it is a matrix of all-ones. Therefore, $\det(L_A)=0$, and hence $A$ has zero probability under the $k$-DPP. Therefore, every draw from the $k$-DPP  with $L_{ij}$ defined as above contains at most one data point from each stratum. When $k$ is the same as the number of classes, we recover StS. If $k$ is smaller than the number of classes, we provide a direct generalization of StS. 
%, which is a version of stratified sampling.
%Let $M$ be the number of strata. Sub-sampling $k=M$ subset with $k$-DPP corresponds to draw one data point from each stratum. Subsampling multiple times $h$ with $k$-DPP corresponds to sample $h$ points from each stratum. T 
\end{proof}

\begin{proposition}
Pre-clustering \cite{fu2017CPSGMCMC} results as a special case of DM-SGD, with $L_{ij} =1$ if the data points $i$ and $j$ are assigned to the same cluster, and otherwise $L_{ij}=0$. 
%Thus, the clusters are treated as strata.
\end{proposition}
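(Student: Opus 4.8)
The plan is to reuse the argument of Proposition~\ref{thrm:stratified} almost verbatim, since the proposed kernel has exactly the same block-diagonal structure, now with blocks indexed by cluster membership rather than by stratum labels. First I would observe that, with $L_{ij}=1$ when $i$ and $j$ lie in the same cluster and $L_{ij}=0$ otherwise, the matrix $L$ is block-diagonal up to a permutation of indices, with one all-ones block per cluster.

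Next I would show that any draw $A$ from the $k$-DPP containing two or more points from a common cluster has probability zero. Writing $A = a \cup \bar{a}$, where $a$ collects indices from a single cluster and $\bar{a}$ is the remainder, the block structure gives $\det(L_A) = \det(L_a)\det(L_{\bar{a}})$, and $\det(L_a)=0$ because $L_a$ is an all-ones matrix of size at least two. Hence $\mathcal{P}(A)\propto \det(L_A)=0$, so every admissible draw picks at most one point per cluster.

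I would then identify the resulting conditional law with pre-clustering. For a subset $Y$ consisting of one point taken from each of $k$ distinct clusters, the submatrix $L_Y$ is the $k\times k$ identity, so $\det(L_Y)=1$. Since all such admissible subsets share the same determinant, the $k$-DPP of Eq.~\ref{eq:kDPP} assigns them \emph{equal} probability. Thus, when $k$ equals the number of clusters, the $k$-DPP draws exactly one data point uniformly from each cluster---precisely the pre-clustering (CBS) scheme of~\cite{fu2017CPSGMCMC}; when $k$ is smaller than the number of clusters, it selects $k$ distinct clusters uniformly and one uniform point from each, which is a direct generalization.

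The only real obstacle is bookkeeping: I must confirm that the determinant genuinely factorizes over blocks (which needs the off-block entries to vanish, as they do here) and that CBS indeed samples \emph{uniformly} within each chosen cluster, so that the match with the uniform-over-admissible-subsets law of the $k$-DPP is exact rather than merely qualitative. Both checks are immediate given the all-ones/identity structure, so no delicate estimates are required.
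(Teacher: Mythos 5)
Your proposal is correct and takes essentially the same route the paper intends: the paper states this proposition without a separate proof precisely because it follows from the block-determinant argument of Proposition~\ref{thrm:stratified}, which you reproduce (the factorization $\det(L_A)=\det(L_a)\det(L_{\bar a})$ and $\det(L_a)=0$ for an all-ones block of size at least two). Your additional observation that every admissible subset $Y$ (one point per cluster) has $L_Y=I$ and hence $\det(L_Y)=1$, so that the $k$-DPP is \emph{uniform} over admissible subsets, is a worthwhile completion that the paper leaves implicit; it is exactly what ties the conditional law to uniform within-cluster sampling and makes the match with pre-clustering exact rather than qualitative.

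One small slip in your final identification: when $k$ is smaller than the number of clusters, the $k$-DPP does \emph{not} select the $k$ clusters uniformly. Since the law is uniform over admissible subsets, a particular set $S$ of $k$ clusters is used with probability proportional to $\prod_{c\in S} n_c$, the number of admissible subsets supported on $S$, where $n_c$ denotes the size of cluster $c$; only conditionally on $S$ are the points drawn uniformly within each chosen cluster. This does not affect the case the proposition actually asserts ($k$ equal to the number of clusters, recovering one-point-per-cluster pre-clustering), but the ``uniformly chosen clusters'' phrasing of your generalization is accurate only when all clusters have equal size.
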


It is furthermore simple to see that regular minibatch SGD results from DM-SGD when choosing the identity kernel.

Next, we analyze the objective function of DM-SGD. We  prove that the diversified risk (Eq.~\ref{eq:diversified_risk}) is given by a re-weighted version of the empirical risk (Eq.~\ref{eq:empirical_risk}) of the data.

\begin{proposition}
The diversified risk (Eq.~\ref{eq:diversified_risk})
can be expressed as a re-weighted empirical risk with the marginal $k$-DPP weights $b_i$,
$$
J^*(\theta) = \frac{1}{k}\sum_{i=1}^N b_i \,\ell(x_i,\theta).
$$
As $b_i \rightarrow k/N$ in case of a trivial similarity kernel $L = I$, this quantity just becomes the empirical risk.
\end{proposition}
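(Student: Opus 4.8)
The plan is to exploit the additive structure of the mini-batch loss together with the indicator variables $m_i$ introduced in the Notation paragraph. First I would write the loss on a sampled mini-batch $\vec{x}$ (with index set $B$ drawn from the $k$-DPP) in the additive form used by the algorithm, namely $\ell(\vec{x};\theta)=\sum_{i\in B}\ell(x_i,\theta)=\sum_{i=1}^N m_i\,\ell(x_i,\theta)$, where $m_i\in\{0,1\}$ indicates whether $x_i$ was selected. This is exactly the setting of the identity $\E[\sum_i m_i f(x_i)]\equiv\E_{x\sim{\rm k\text{-}DPP}}[F(x)]$ stated earlier, with $f(x_i)=\ell(x_i,\theta)$.

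The key step is then a single application of linearity of expectation under the $k$-DPP. Since the per-point losses $\ell(x_i,\theta)$ are deterministic numbers (the only randomness lives in the selection variables $m_i$), I can pull the expectation through the sum:
\begin{equation}
\E[\ell(\vec{x};\theta)]=\E\Bigl[{\textstyle\sum_{i=1}^N} m_i\,\ell(x_i,\theta)\Bigr]={\textstyle\sum_{i=1}^N}\E[m_i]\,\ell(x_i,\theta)={\textstyle\sum_{i=1}^N} b_i\,\ell(x_i,\theta),
\end{equation}
using the definition $b_i\equiv\E[m_i]$. Dividing by $k$ and comparing with the definition of the diversified risk in Eq.~\ref{eq:diversified_risk} immediately gives $J^*(\theta)=\frac{1}{k}\sum_{i=1}^N b_i\,\ell(x_i,\theta)$, which is the claimed first identity.

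For the second claim I would specialize to $L=I$. Here $\det(L_Y)=1$ for every size-$k$ subset $Y$, so the $k$-DPP in Eq.~\ref{eq:kDPP} reduces to the uniform distribution over all $\binom{N}{k}$ subsets of size $k$. The marginal inclusion probability of any fixed point is therefore $b_i=\binom{N-1}{k-1}/\binom{N}{k}=k/N$; equivalently, by exchangeability all $b_i$ are equal, and the hard size constraint $\sum_i m_i=k$ forces $\sum_i b_i=k$, hence $b_i=k/N$. Substituting $b_i=k/N$ into the first identity collapses the weights and recovers $J^*(\theta)=\frac{1}{N}\sum_{i=1}^N\ell(x_i,\theta)=\hat{J}(\theta)$, the empirical risk of Eq.~\ref{eq:empirical_risk}.

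I do not expect a genuine obstacle here, as the argument is essentially linearity of expectation plus a marginal-probability computation. The only points requiring care are (i) justifying that the mini-batch loss decomposes additively so that the indicator representation applies, and (ii) making the $b_i\to k/N$ claim rigorous rather than merely asserting symmetry; I would handle (ii) with the explicit binomial count or the $\sum_i b_i=k$ normalization, both of which are clean.
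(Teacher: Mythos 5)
Your proof is correct and follows essentially the same route as the paper's: write the mini-batch loss additively via the indicator variables $m_i$, apply linearity of expectation, and identify $\E[m_i]=b_i$. The only difference is that you also verify the $L=I$ limit explicitly (via $\det(L_Y)=1$, uniformity over size-$k$ subsets, and $b_i=\binom{N-1}{k-1}/\binom{N}{k}=k/N$), a claim the paper states in the proposition but leaves unproven; that addition is correct and a welcome tightening.
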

\begin{proof}
\vspace{-10pt}
We employ the indicators $m_i$ defined above:
\begin{align*}
   k\, {J}^*(\theta) &= \E_{x\sim {\rm kDPP}} [\ell(x;\theta)] 
      =  \E [\sum_{i=1}^N m_i \ell(x_i;\theta)] \\
    &  = \sum_{i=1}^N \E [ m_i ]\ell(x_i;\theta) =\sum_{i=1}^N b_i\ell(x_i;\theta).
\end{align*}
\vspace{-18pt}
\end{proof}

The following corollary allows us to construct an unbiased stochastic gradient based on
DM-SGD in case we are not interested in re-balancing the population. 

\begin{proposition}
The following SGD scheme leads to an unbiased stochastic gradient:
\begin{equation}
\theta_{t+1} = \theta_t - \rho_t \frac{1}{k}\sum_{i \in B}  \frac{1}{b_i}\nabla\ell(\theta,x_i), \quad B\sim {\rm kDPP}.
\label{eq:BM-gradient}
\end{equation}
This is a simple consequence of the identity $\E[\sum_{i=1}^N \frac{m_i}{b_i} g(\theta; x_i) ] =\sum_{i=1}^N \E[ \frac{m_i}{b_i} ] g(\theta; x_i) = g(\theta,x)$.
\label{cor:unbiased}
\end{proposition}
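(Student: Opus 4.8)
The plan is to reduce the statement to the expectation identity already quoted in the proposition, using the indicator variables $m_i$ introduced in the notation paragraph. First I would rewrite the stochastic gradient appearing in the update. Since $B$ is a draw from the $k$-DPP and $m_i$ indicates membership of point $i$ in that draw, the sum $\sum_{i\in B}\frac{1}{b_i}\nabla\ell(\theta,x_i)$ is identically equal to $\sum_{i=1}^N \frac{m_i}{b_i}\nabla\ell(\theta,x_i)$. This converts a sum over the random index set into a sum over all $N$ points carrying random $\{0,1\}$ weights, which is the form to which linearity of expectation applies cleanly.

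Next I would take the $k$-DPP expectation and push it inside the finite sum by linearity. The only random quantity in each term is $m_i$, so $\E[\frac{m_i}{b_i}\nabla\ell(\theta,x_i)] = \frac{\nabla\ell(\theta,x_i)}{b_i}\,\E[m_i]$. Invoking the definition $b_i \equiv \E[m_i]$ from the notation section, each factor $\E[m_i]/b_i$ collapses to $1$, so the expected gradient equals $\frac{1}{k}\sum_{i=1}^N \nabla\ell(\theta,x_i)$, i.e.\ the batch gradient $g(\theta,x)$ up to the fixed prefactor $1/k$. This is precisely the quoted identity $\E[\sum_i \frac{m_i}{b_i}g(\theta;x_i)] = g(\theta,x)$, and it establishes that the update direction is, in expectation, the true empirical-risk gradient direction rather than the diversified one.

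There is essentially no hard step here: the argument is a one-line application of linearity together with the marginal-probability definition, so the real work is stating the two caveats correctly. The first is normalization. The expectation returns $\frac{1}{k}$ times the \emph{sum} of per-point gradients, which is proportional to the empirical-risk gradient $\frac{1}{N}\sum_i\nabla\ell$ by the deterministic factor $N/k$; since this factor is constant it can be folded into the learning-rate schedule $\rho_t$ and does not affect unbiasedness in the sense relevant for SGD. The second, and the only genuine condition to check, is that $b_i>0$ for every $i$, so that the inverse-propensity weights $1/b_i$ are well defined. I would close by noting that this always holds in our setting: since $L$ is positive definite, every principal minor $\det(L_Y)$ with $|Y|=k\le N$ is strictly positive, hence every size-$k$ subset has positive $k$-DPP probability and every index $i$ lies in at least one such subset, giving $b_i>0$. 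The reweighting is therefore always valid, which completes the justification of the scheme.
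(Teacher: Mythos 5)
Your proposal is correct and follows exactly the paper's own argument: rewriting the sum over the random subset $B$ as $\sum_{i=1}^N \frac{m_i}{b_i}\nabla\ell(\theta,x_i)$, applying linearity of expectation, and invoking $\E[m_i]=b_i$ is precisely the one-line identity the paper cites as its proof. Your two added caveats—that the constant factor $N/k$ is absorbed into $\rho_t$, and that $b_i>0$ follows from positive definiteness of $L$ (so all principal minors $\det(L_Y)$ are strictly positive)—are sound refinements the paper leaves implicit, not a different route.
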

%Corollary \ref{cor:unbiased} is valid for BM-SGD with either $k$-DPP or DPP. 

Finally, we investigate under which circumstances the DM-SGD gradient has a lower variance compared to simple mini-batch SGD on the diversified risk. To this end, consider the gradient components $g(x_i,\theta)$, $g(x_i,\theta)$ of data points $i$ and $j$, respectively, as well as their correlation $C_{ij}$ under the $k$-DPP.  A sufficient condition for BN-SGD to reduce the variance is given as follows. 

\begin{theorem}
\label{thrm:3}
Assume that for all data points $x_i$ and $x_j$ and for all parameters $\theta$ in a region of interest, the scalar product $g(x_i,\theta)^\top g(x_j,\theta)$ is always positive (negative) whenever the correlation $C_{ij}$ is negative (positive), respectively, i.e.
\begin{align}
\label{eq:conditions_varreduction}
\forall_{i\neq j}: C_{ij} \, g(x_i,\theta)^\top g(x_j,\theta) < 0.
\end{align}
Then, DM-SGD has a lower variance than SGD.
\end{theorem}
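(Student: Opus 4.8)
The plan is to write both the DM-SGD estimator and the baseline mini-batch SGD estimator in one common form, $\hat g=\frac{1}{k}\sum_{i=1}^N m_i\,g(x_i,\theta)$, where the two schemes differ \emph{only} in the joint law of the indicator vector $(m_1,\dots,m_N)$: under DM-SGD it is the $k$-DPP, while under plain mini-batch SGD the $m_i$ carry the same marginals $b_i=\E[m_i]$ but are uncorrelated, i.e.\ $C_{ij}=0$ as noted after Eq.~\ref{eq:Correlation}. Because the marginals agree, the same indicator computation used for the diversified risk shows that both estimators share the mean $\frac{1}{k}\sum_i b_i\,g(x_i,\theta)=\nabla J^*(\theta)$, so the comparison is between two unbiased estimators of the same gradient. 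I take ``variance'' to mean the trace of the gradient covariance, $\mathrm{Var}(\hat g)=\E\big[\|\hat g-\E\hat g\|^2\big]$, which is what makes the scalar products $g(x_i,\theta)^\top g(x_j,\theta)$ appear.

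First I would expand the variance as a double sum over data points,
\[
\mathrm{Var}(\hat g)=\frac{1}{k^2}\sum_{i,j}\E[(m_i-b_i)(m_j-b_j)]\;g(x_i,\theta)^\top g(x_j,\theta),
\]
and split it into diagonal and off-diagonal parts. The crucial structural fact is that $m_i\in\{0,1\}$ forces $m_i^2=m_i$, so each diagonal term equals $b_i(1-b_i)\,\|g(x_i,\theta)\|^2$ and depends only on the marginal $b_i$. Hence the diagonal contribution is \emph{identical} for the two sampling schemes. For the off-diagonal terms, the definition in Eq.~\ref{eq:Correlation} gives $\E[(m_i-b_i)(m_j-b_j)]=b_i b_j\,C_{ij}$, which vanishes for plain SGD but equals $b_i b_j C_{ij}$ under the $k$-DPP.

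Subtracting the two variances therefore cancels the diagonal part exactly and leaves only the cross term,
\[
\mathrm{Var}_{\mathrm{DPP}}-\mathrm{Var}_{\mathrm{SGD}}=\frac{1}{k^2}\sum_{i\neq j} b_i b_j\,C_{ij}\,g(x_i,\theta)^\top g(x_j,\theta).
\]
Since $b_i,b_j>0$, hypothesis~\eqref{eq:conditions_varreduction} makes every summand strictly negative, so the whole difference is negative and DM-SGD has the smaller variance, which proves the claim.

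The step I expect to be most delicate is fixing the right baseline and justifying the diagonal cancellation, since the entire argument rests on it. Concretely, one must argue that ``simple mini-batch SGD on the diversified risk'' is exactly the scheme with the same marginals $b_i$ but $C_{ij}=0$, and then use $m_i^2=m_i$ to see that the self-variance $b_i(1-b_i)$ carries no dependence on the joint distribution of the $m_i$. Once that cancellation is in hand the remainder is a one-line sign check on the off-diagonal sum; notably, no bound on the magnitude of $C_{ij}$ or of the gradients is required, only the sign condition in~\eqref{eq:conditions_varreduction}.
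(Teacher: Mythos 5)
Your proposal is correct and takes essentially the same route as the paper's proof: the paper likewise expands $\mathrm{Var}(g^*)=\E[\Delta g^\top \Delta g]$ into a diagonal part $\frac{1}{k^2}\sum_i b_i(1-b_i)\|g(x_i,\theta)\|_2^2$, identified as the variance of mini-batch SGD with the same marginals $b_i$, plus the off-diagonal part $\frac{1}{k^2}\sum_{i\neq j} C_{ij} b_i b_j\, g(x_i,\theta)^\top g(x_j,\theta)$, using $m_i^2=m_i$ and the definition of $C_{ij}$ exactly as you do, and concludes by the same sign argument. If anything, your explicit construction of the baseline as indicator sampling with identical marginals but $C_{ij}=0$, and the explicit subtraction of the two variances, makes precise a comparison the paper states only informally.
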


\paragraph{Remark.\;} 
The sufficient conditions outlined in Theorem~\ref{thrm:3} are very strong, but its proof provides us with valuable insights of why variance reduction occurs.

\begin{proof}
\vspace{-10pt}
To begin with, define 
\begin{align}
 g^{F}(\theta,x) & = \textstyle{\frac{1}{k}\sum_{i=1}^N} b_i \, g(\theta,x_i), \\
  g^{*}(\theta,x) & = \textstyle{\frac{1}{k}\sum_{i=1}^N} m_i \, g(\theta,x_i), 
\end{align}
where $g^*$ is the DM-SGD gradient and $g^F = \E[g^*]$ is the full gradient of the diversified risk. 

We denote the difference between the expected and stochastic gradient as
\begin{equation}
 \Delta g = g^* -g^F = \textstyle{\frac{1}{k}\sum_{i=1}^N} (b_i-m_i) \, g(\theta,x_i),
\end{equation}
By construction, this quantity has expectation zero. We are interested in the trace of the stochastic gradient covariance,
\begin{equation}
 Var(g^*) = {\rm Tr}\left( \, Cov (g^*) \right)= \E[\Delta g^\top \Delta g].
\end{equation}
This quantity can be expressed as
\vspace{-6pt}
\begin{align*}
 Var(g^*)    =  \frac{1}{k^2}\sum_{i,j=1}^N \underbrace{\E[(m_i - b_i)(m_j - b_j)]}_{\E[m_i m_j] - b_i b_j} g(x_i,\theta)^\top g(x_j,\theta)
\end{align*}
We can furthermore compute
 \begin{align*}
 \E[m_i m_j] & =  \E[m_i^2]\delta_{ij} +  \E[m_i m_j](1- \delta_{ij})\\
      & =  \E[m_i]\delta_{ij} + (C_{ij}+1) b_i b_j (1-\delta_{ij}),
 \end{align*}
where $\delta_{ij}$ is the Kronecker symbol (we used $m_i^2 = m_i$).

Collecting all terms, the variance can be written as
\vspace{-8pt}
\begin{multline*}
 Var(g^*)  \,  = \, \frac{1}{k^2}\sum_{i=1}^N (b_i - b_i^2) \,  || g(x_i,\theta)||^2_2 \\
  + \frac{1}{k^2}\sum_{i\neq j} C_{ij} b_i b_j g(x_i,\theta)^\top g(x_j,\theta).
\end{multline*}
\vspace{-3pt}
The first term is just the variance of regular mini-batch SGD, where we sample each data point with probability proportional to $b_i$, which also optimizes the diversified risk. This term is always positive because $b_i < 1$ and thus $b_i > b_i^2$. 

The second term can be both positive and negative. By a clever choice of similarity kernel and resulting correlation function $C_{ij}$  (as defined in Eq. \ref{eq:Correlation}), the second term may therefore reduce the variance.
We immediately see that this condition exactly corresponds to Eq.~\ref{eq:conditions_varreduction}. This proves our claim.
\end{proof}
\vspace{-5pt}
\paragraph{Discussion of Theorem~\ref{thrm:3}.\;} If the similarity kernel $L$ relies on spatial distances, nearby data points $x_i$ and $x_j$ have a negative correlation $C_{ij}$ under the $k$-DPP. However, if the loss function is smooth, $g(x_i,\theta)$ and $g(x_j,\theta)$ tend to align (i.e. have a positive scalar product). Eq.~\ref{eq:conditions_varreduction} is therefore naturally satisfied for these points.  $C_{ij}$ can also be positive: since some combinations of data points are less likely to co-occur, others must be more likely to co-occur. Since these points tend to be far apart, it is reasonable to assume that their gradients show no tendency to align. It is therefore plausible to assume that for these points, Eq.~\ref{eq:conditions_varreduction} also applies\footnote{We only need to assure that the negative contributions outweigh the positive ones to see variance reduction.}.

To summarize, if the condition in Eq.~\ref{eq:conditions_varreduction} is met, we can guarantee variance reduction relative to mini-batch SGD, and we have given arguments why it is plausible that these are met to some degree when using DM-SGD with a distance-dependent similarity kernel. In our experimental section we show that DM-SGD has a faster learning curve, which we attribute to this phenomenon.
\section{EXPERIMENTS}
\label{sec:exp}
We evaluate the performance of our method in different settings.  In Section \ref{sec:lda} we demonstrate the usage of DM-SGD for Latent Dirichlet Allocation (LDA) \cite{blei2003latent}, an unsupervised probabilistic topic model. We show that the learned diversified topic representations are better suited for subsequent text classification. In Section \ref{sec:softmax} we evaluate the supervised scenario based on multinomial (softmax) logistic regression with imbalanced data. We compare against stratified sampling, which emerges naturally in this example. In section \ref{sec:cnn} we show that our method also maintains performance on the balanced MNIST data set, where we tested convolutional neural networks.
%, and where we used fast $k$-DPP's~\cite{li2016fast} for better scalability. %Further results are shown in the supplementary material. 
In all the experiments, we pre-sample the mini-batch indices using the $k$-DPP implementation from \cite{kulesza2012determinantal} for small datasets, and from \cite{li2016fast} for big datasets. In this way, sampling is treated as a pre-scheduling step and can easily be parallelized.
We found that our approach finds more diversified feature representations (in unsupervised setups) and higher predictive accuracies (in supervised setups). We also found that the $k$-DPP converges within fewer passes through the data compared to standard minibatch sampling due to variance reduction.

%%%%%%%%%%%%%%%%%%%%%%%%%%%%%%%%%%%%%%%%%%%%%%%%%%%%%%%%%%%%%%%%%%%%%%%%%%%%%%%%%%%%%%%%%%%%%%%%%%%%%%%%%%%%%%%%%%%%%%%%%%%%%%%%%%%%%%%%%%%%%%%%%%%%%%
%\subsection{Topic learning with LDA}
\subsection{TOPIC LEARNING WITH LDA}
\label{sec:lda}
We follow Algorithm \ref{alg:BP-SVI-LDA} for LDA.   
Firstly, we demonstrate the performance of DM-SVI on synthetic data with LDA. We show that by balancing our mini-batches, we find a much better recovery of the  topics that were used to generate the data.
Second, we use a real-world news dataset. We demonstrate that we can learn more diverse topics that are also better features for text classification tasks.

In this setting, stratified sampling is not applicable since there is no discrete feature such as a class label available. With only word frequencies available, no simple criterion can be used to divide the data into meaningful strata. 

\subsubsection{SYNTHETIC DATA}
We generate a synthetic dataset (shown in the supplementary material) following the generative process of LDA with a fixed global latent parameter (the graphical topics). 
We choose distinct patterns as shown in Figure \ref{fig:syn_topic} (a), where each row represents a topic and each column represents a word. 
To generate an imbalanced data set, we use different Dirichlet priors for the per document topic distribution $\theta$. 300 documents are generated with prior (0.5 0.5 0.01 0.01 0.01); 50 with prior (0.01 0.5 0.5 0.5 0.01) and 10 with prior (0.01 0.01 0.01 0.5 0.5). Hence, the first two topics are used very often in the corpus. Topic 3 and 4 are shown a few times and topic 5 appears very rarely. 

\begin{figure}[t]
\begin{center}
\subfigure[Ground Truth]{
\includegraphics[width=2.51cm]{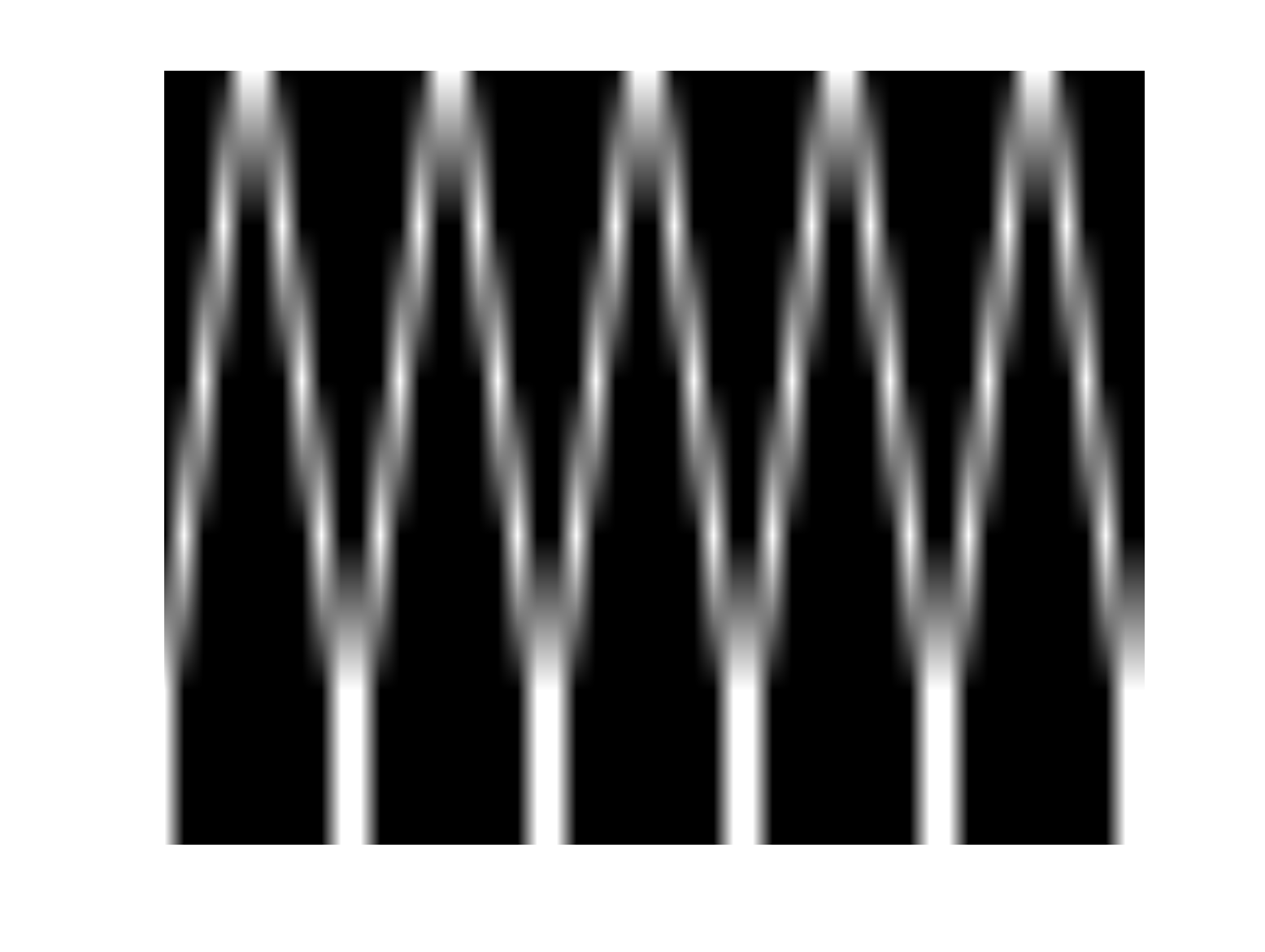}
}
\subfigure[Est w. SVI LDA]{
\includegraphics[width=2.51cm]{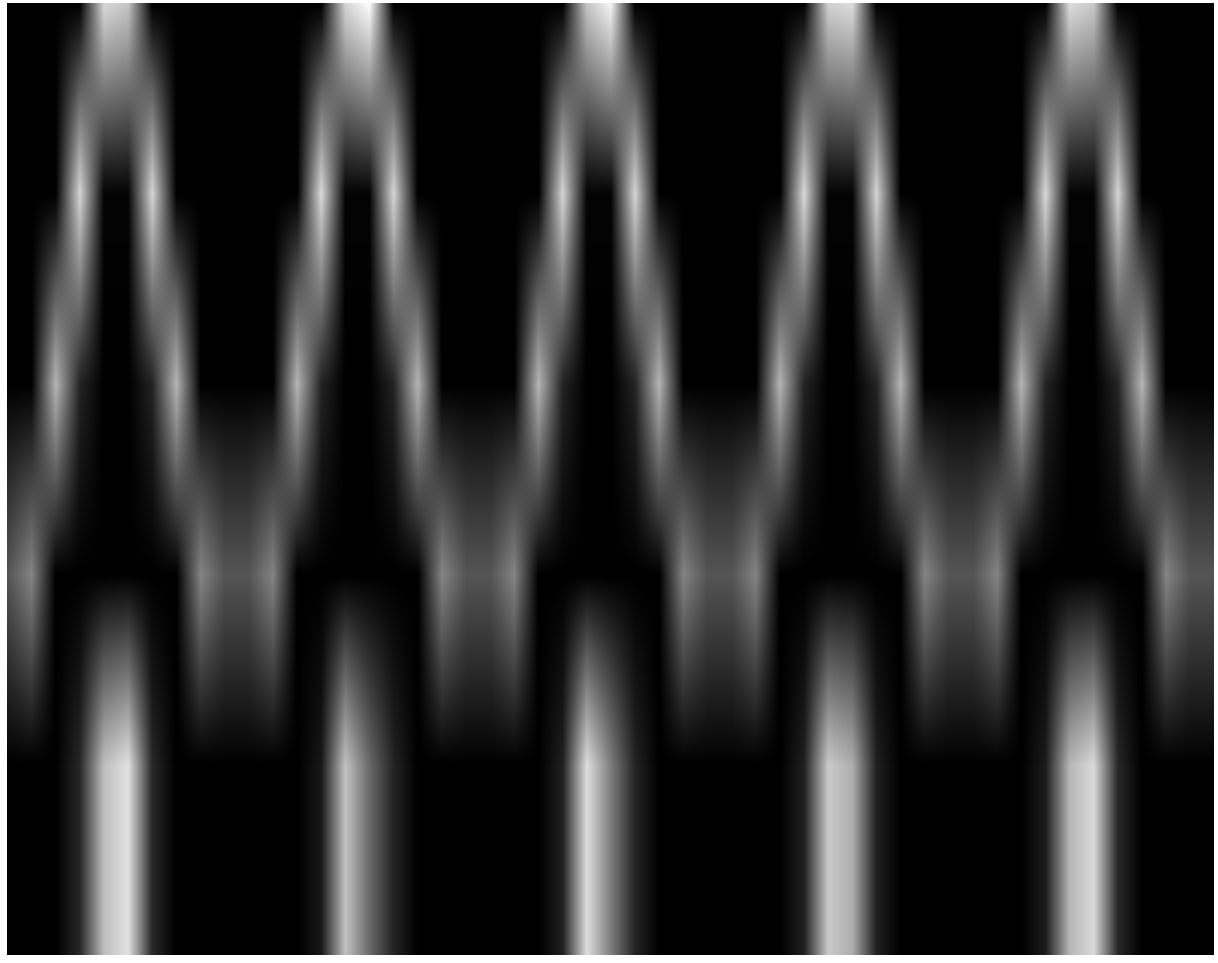}
}
\subfigure[Est w. \textbf{DM-SVI}]{
\includegraphics[width=2.51cm]{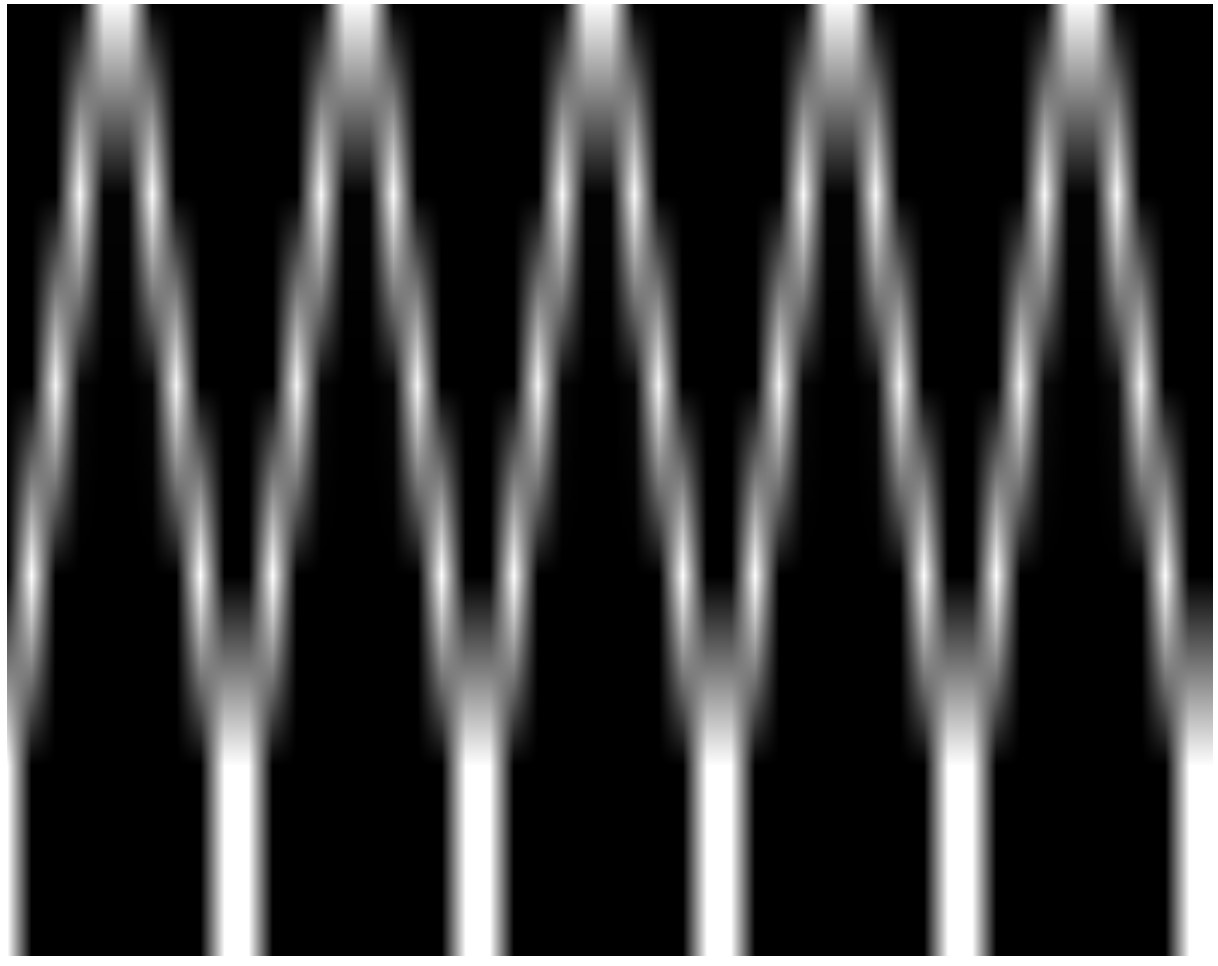}
}
\end{center}
\caption{Per topic word distribution for the synthetic data. Each row presents a topic and each column presents a word. (a) shows the ground truth with which the synthetic data is generated using LDA. (b) shows the estimation of this latent variable with LDA using traditional stochastic variational inference (SVI). (c) shows the estimation of this latent variable with DM-SVI }
\label{fig:syn_topic}
\end{figure}

We fit LDA to recover the topics of the synthetic data using traditional SVI and our proposed DM-SVI respectively. Here, the raw data occurence $x$ is used to construct the similarity matrix $L=xx^T$. We check how well the global parameters are recovered. Fully recovered latent variables indicate that the model is able to capture its underlying structure of the data. Figure   \ref{fig:syn_topic} (b) shows the estimated per topic words distribution with SVI and Figure \ref{fig:syn_topic} (c) shows the result with our proposed DM-SVI. 

In Figure \ref{fig:syn_topic} (b), we see that the first three topics are recovered using traditional SVI. Topic four is roughly recovered but with information from topic five mixed in. The last topic is not recovered at all, instead, it is a repetition of the first topic. This shows the drawback of the traditional method: when the data is not balanced, the model creates redundant topics to refine the likelihood of the dense data  but ignores the scarce data even when they carry important information. In Figure \ref{fig:syn_topic} (c),  we see that all the topics are correctly recovered thanks to the balanced dataset.
%%%%%%%%%%%%%%%%%
\subsubsection{R8 NEWS DATA EXPERIMENT}
We also evaluate the effect of DM-SVI on the Reuters news R8 dataset \cite{R8dataset}. 
This dataset contains eight classes of news with an extremely imbalanced number of documents per class, as shown in Figure \ref{fig:R8Res_hist}  (a).  To measure similarities between documents, we represent each document with a vector $x$ of the tf-idf \cite{robertson2004understanding} scores of each word in the document. Then define an annealed linear kernel
$L(x_i,x_j) = x_i^{\rho}x_j^{\rho}$ with parameter $\rho=0.1$, which is more sensitive to small feature overlap. 
We run LDA with SVI  and DM-SVI with  one effective pass through the data, where we set the mini-batch size to $80$ and use $30$ topics. %, which leads 69 iterations of updates. 
%To compare with the previous results, we ran the experiment with the same number of iterations using BM-SVI to sample these 80 documents for each mini-batch. 
%The number of topics was $30$ for each experiments. 
\begin{figure}[t]
\begin{center}
% \subfigure[Original data]{
% \includegraphics[width=4.01cm, height=3.2cm]{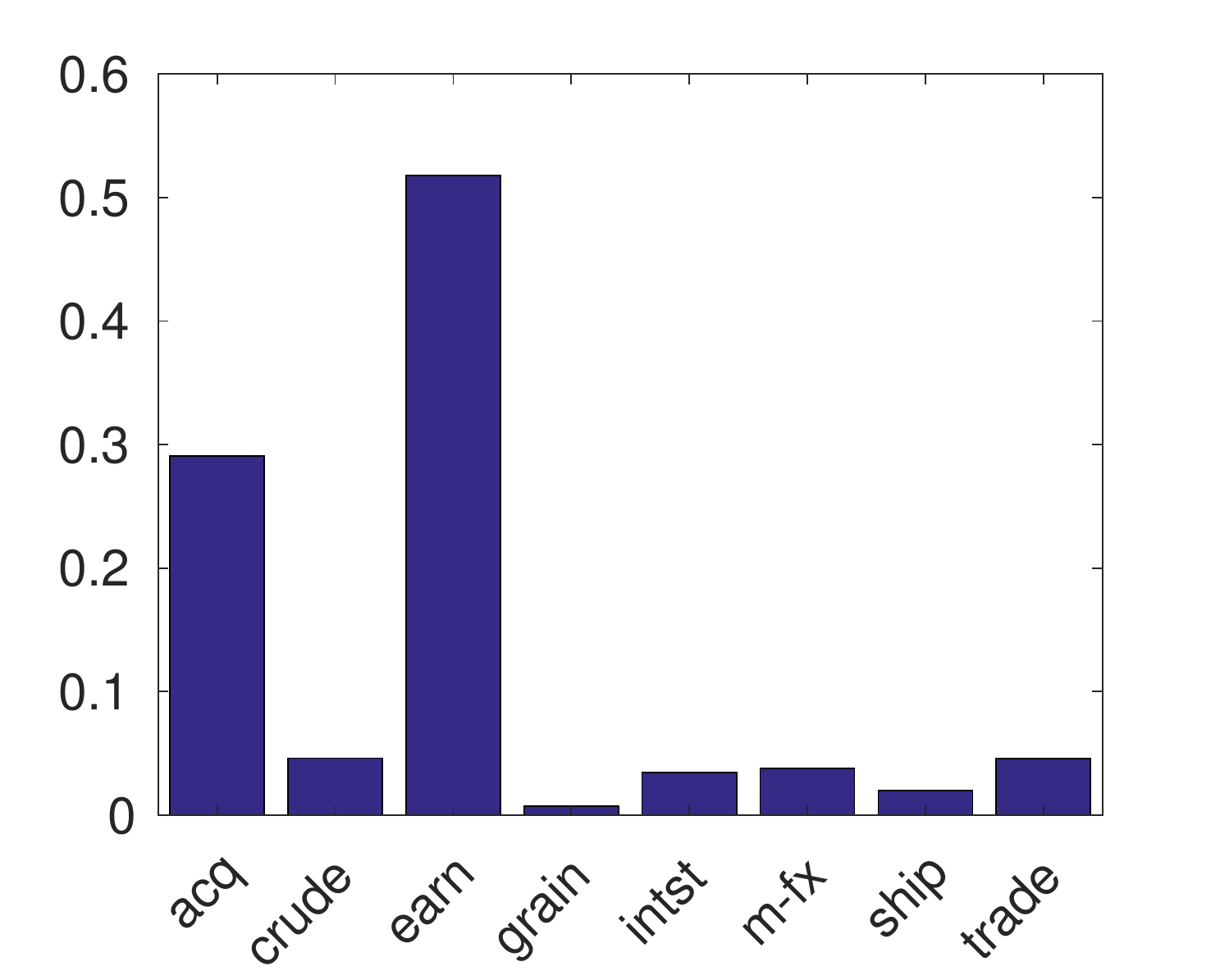}
% }
% \hspace{-15pt}
% \subfigure[Balanced data]{
% \includegraphics[width=4.01cm, height=3.2cm]{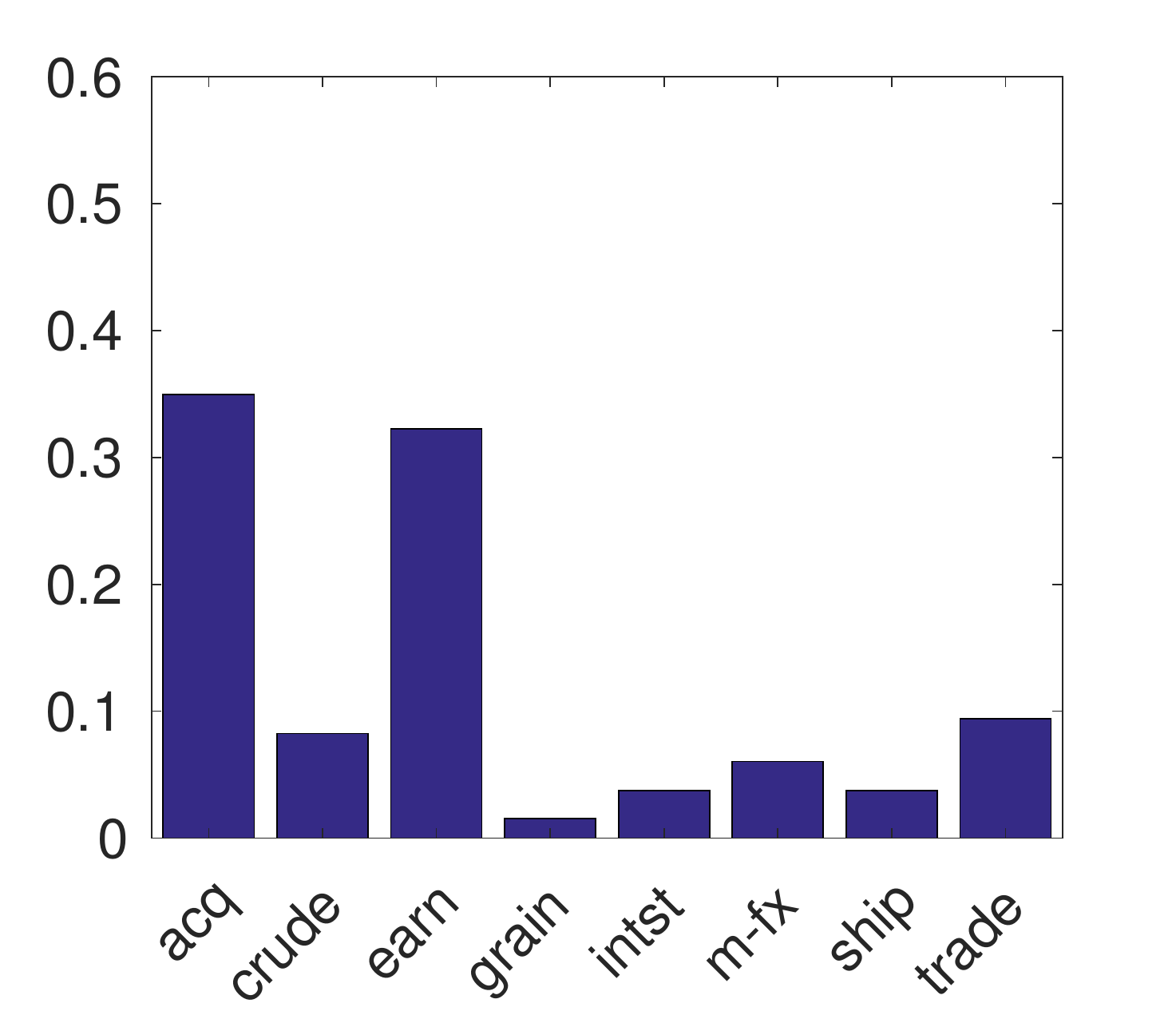}
% }
\includegraphics[width=7cm, height=3.2cm]{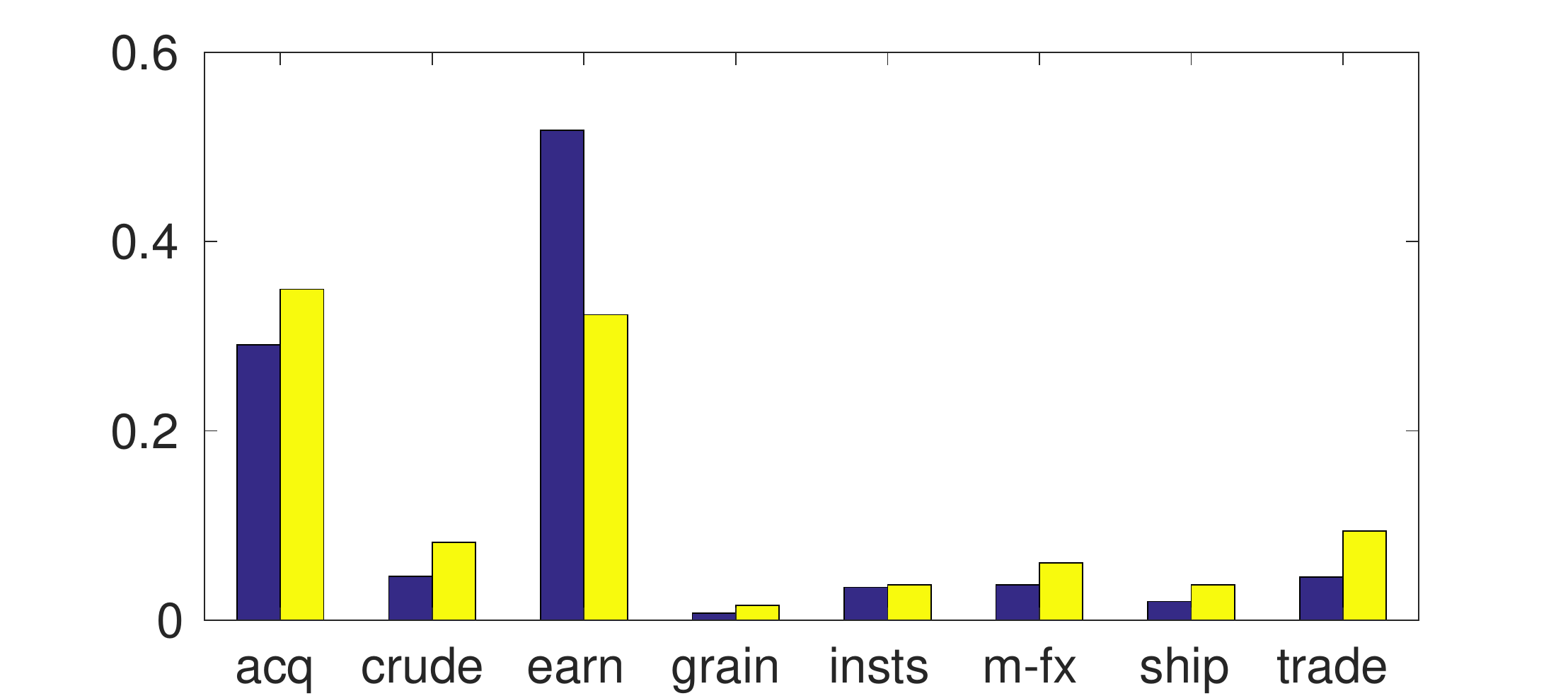}
\end{center}
\caption{
 The frequency of class labels of the training dataset (in blue) and of the balanced dataset (in yellow). While explicit class label information is withheld, the algorithm partially balances class contributions.}
\label{fig:R8Res_hist}
\end{figure}

\begin{figure}[h]
\begin{center}
\hspace{-15pt}
\subfigure[CM w.SVI]{
\includegraphics[width=4.01cm]{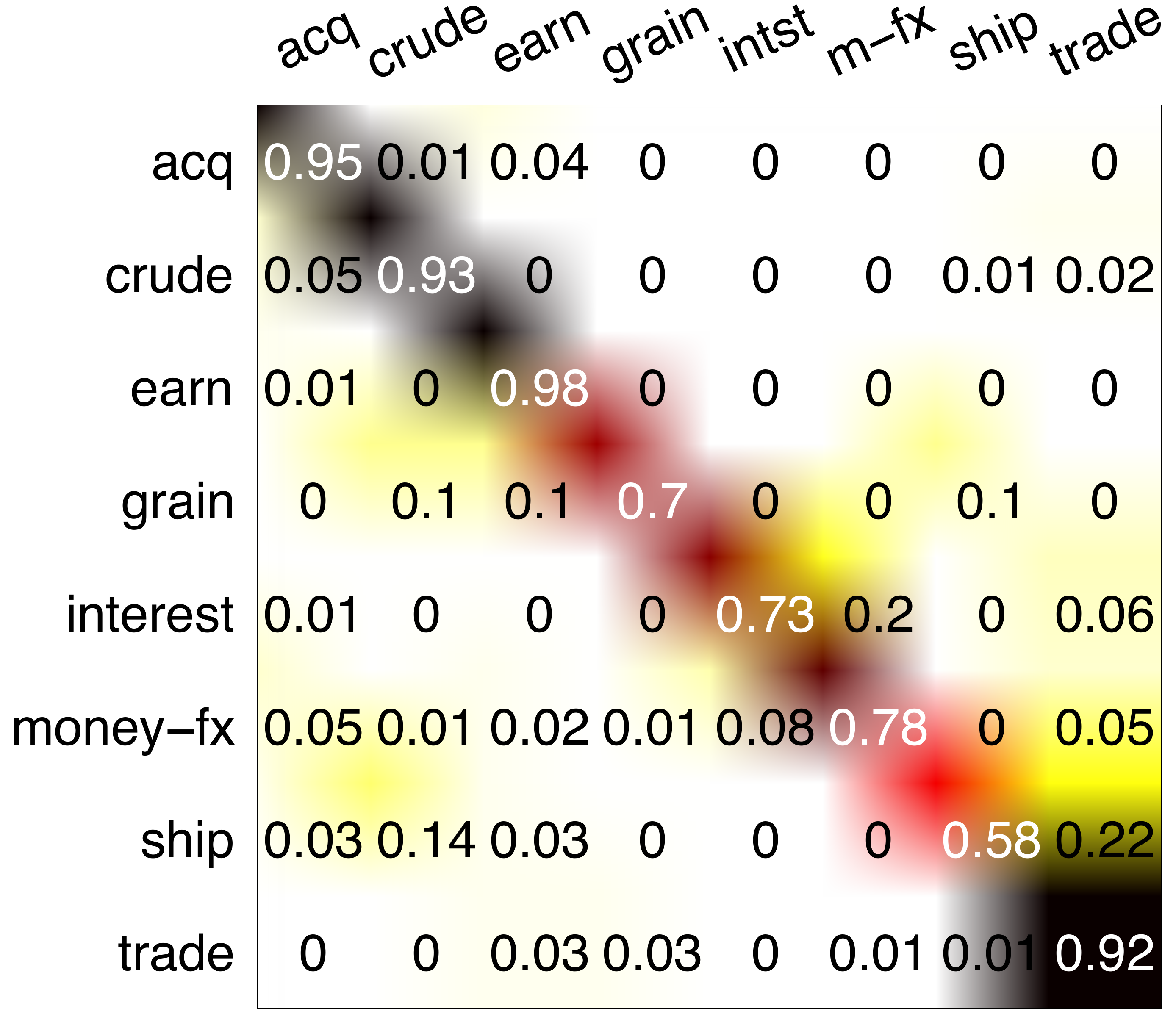}
}
\hspace{-5pt}
\subfigure[CM w.\textbf{DM-SVI}. ]{
\includegraphics[width=4.01cm]{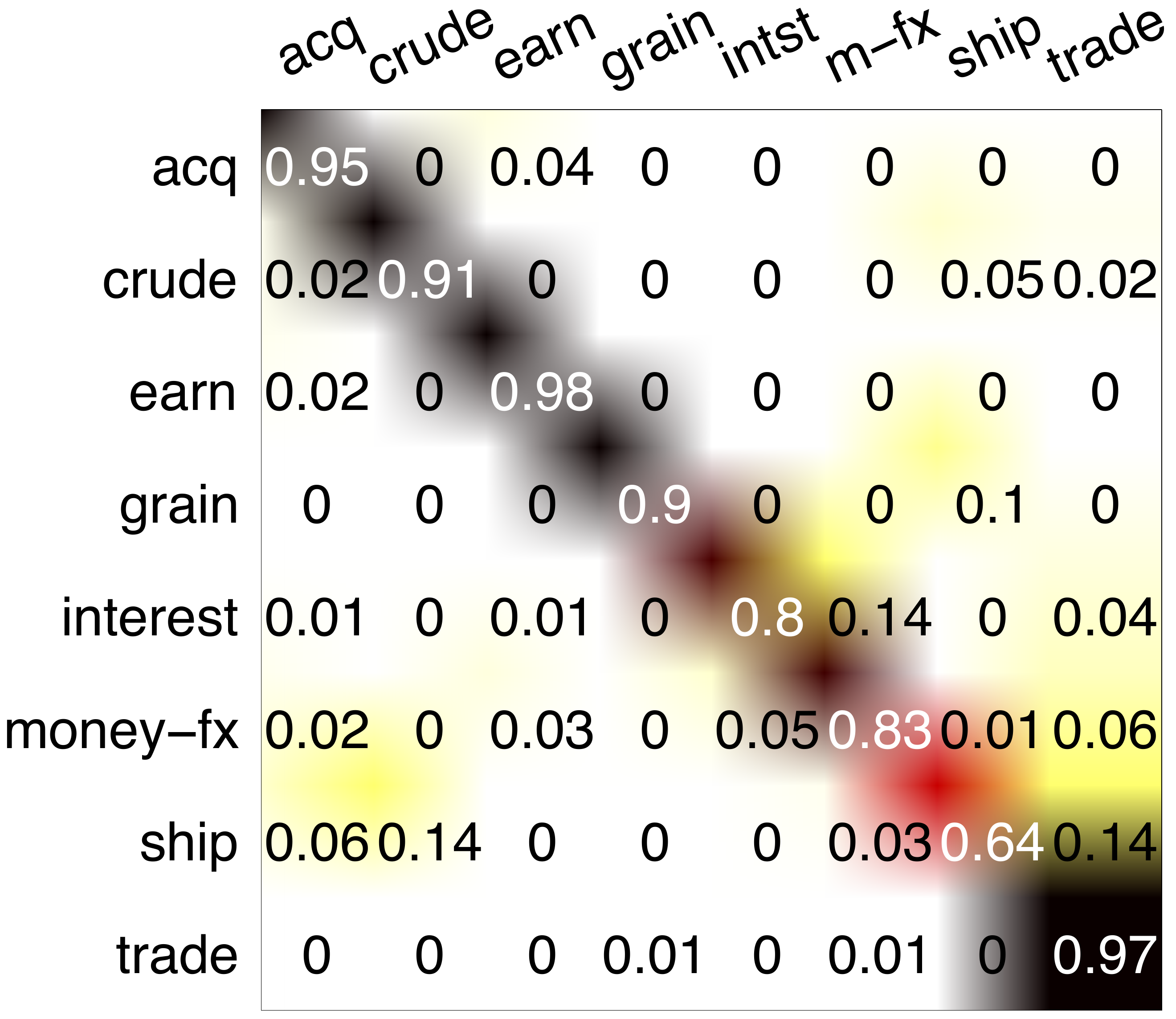}
}
\end{center}
\caption{Confusion matrix for text classification based on LDA features obtained from SVI (a) and the proposed DM-SVI (b). DM-SVI features lead to better accuracies.}
\label{fig:R8Res_CM}
\end{figure}
\begin{figure}[t]
\begin{center}
\hspace{-5pt}
\subfigure[ First 2 PC w. SVI]{
\includegraphics[width=3.9cm, height=2.5cm]{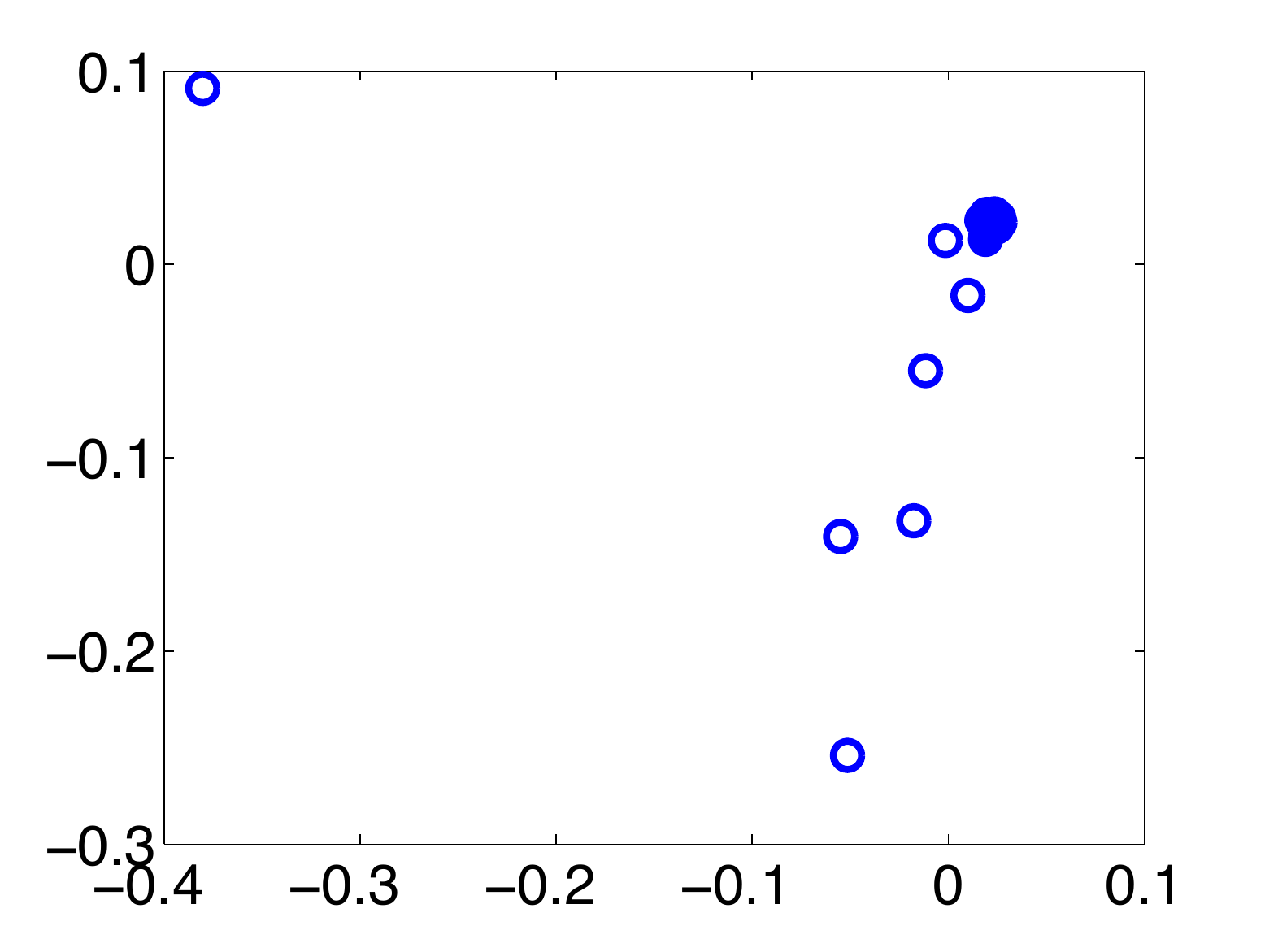}
}
\hspace{-15pt}
\subfigure[ First 2 PC w. \textbf{DM-SVI}]{
\includegraphics[width=3.9cm, height=2.5cm]{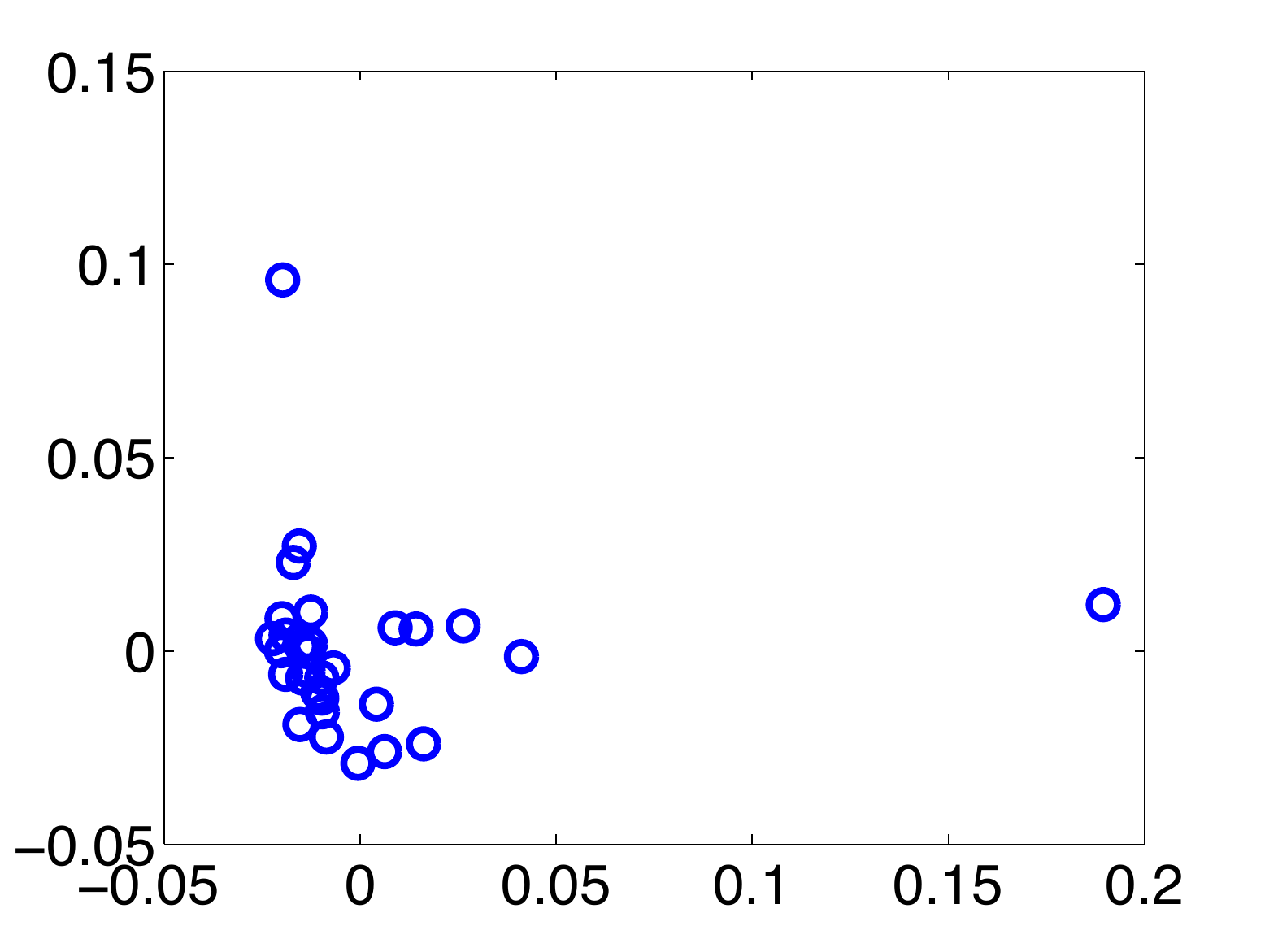}
}
\hspace{-15pt}
\end{center}
\caption{First 2 principle components of topic word distributions. DM-SVI topic vectors (b) are more diverse compared to SVI (a).}
\label{fig:R8Res_PCA}
\end{figure}
%%%%%%%%%%%%%%%%%%%%

We first compare the frequencies at which documents with particular labels were sub-sampled.  While Figure \ref{fig:R8Res_hist} shows the actual frequency of these classes in the original data set compared with the frequency of these classes over  the balanced dataset (a collection of sampled mini-batches using the $k$-DPP). We can see that the number of documents is more balanced among different classes.
%However, there are still quite big difference among numbers of documents of each class. We would like to emphasise that this is an advantage of our method. Because the goal is try to balance the population from different data clusters instead of classes. There exists sub-clusters of documents within the same class and it is desired that these clusters are balanced as well. Even in the scenario where the class label is available, the naive way of replicating documents based on the volume per class does not produce representative balanced data.  

To demonstrate that DM-SVI leads to a more useful topic representation, we classify each document in the test-set based on the learned topic proportions with a linear SVM. The global variable (per-topic word distribution) is only trained on the training set. The resulting confusion matrices  are shown in  Figure \ref{fig:R8Res_CM} using traditional SVI and DM-SVI respectively. 
With traditional SVI, the average performance over 8 classes is $82.11\%$; the total accuracy (number of correctly classified documents over number of test documents) is $94.11\%$. With DM-SVI, the average performance over 8 classes is $87.24\%$ and the total accuracy is $94.7 \%$.

Thus the overall classification performance is improved using DM-SVI features, and especially the performance on the classes with few documents (such as "grain" and "ship") is improved significantly. 

We also visualize the first two principal components (PC) of the the global topics in Figure \ref{fig:R8Res_PCA}. 
In traditional SVI, many topics are redundant and share large parts of their vocabulary, resulting in a single dense cluster. In contrast, we see that the topics in DM-SVI are more spread out. 
In this regard, DM-SVI achieves a similar effect as 
when using diversity priors as in \cite{kwok2012priors} without the need to grow the prior with the data. The top words from each topic are shown in the appendix,
where we present more evidence that the topics learned by DM-SVI are more diverse.

\begin{table}
\vspace{-10pt}
%\color{red}
\begin{tabular}{ l  c c c c} 
 \toprule
 Size  & k = 10 & k =30 & k=50	& k=80  \\
 \cmidrule(lr){1-5}
Relative cost & $0.114\%$ & $1.097\%$ & $3.191\%$ & $8.971\%$ \\ 
 \bottomrule
\end{tabular}
\medskip
\caption{\footnotesize LDA on the R8 dataset. Relative cost of mini-batch sampling as a fraction of the cost of a gradient update. Different values of mini-batch size $k$ are shown.}
\label{tab:R8time}
\end{table}

The relative costs of sampling per iteration for LDA is shown in Table \ref{tab:R8time}. Because every local update is expensive, the relative overhead of mini-batch sampling is small. More details are given in the appendix.
%%%%%%%%%%%%%%%%%%%%%%%%%%%%%%%%%%%%%%%%%%%%%%%%%%%%%%%%%%%%%%%%%%%%%%%%%%%%%%%%%%%%%%%%%%%%%%%%%%%%%%%%%%%%%%%%%%%%%%%%%%%%%%%%%%%%%%%%%%%%%%%%%%%%%%
\subsection{MULTICLASS LOGISTIC REGRESSION}
%\subsection{Fine-grained Classification with Softmax Classifier}
\label{sec:softmax}
% {\color{red} TODO discuss mini-batch size and benifit. Mini-batch size vs gradients variance trade-offs. For SGD, small mini-batch is desired
% due to the 1/sqrt(n) benefit growth.
% However, we still need a mini-batch to ensure that the gradient variance is not too big. With DPP sampling, we can take advantage of small-mini-batch more since our small-minibatch are diversified.  }

In this section, we demonstrate DM-SGD on a fine-grained classification task. The Oxford 102 flower dataset \cite{Nilsback08,sharif2014cnn} is used here for evaluation.

Many datasets in computer vision are balanced even though the true collected dataset is extremely imbalanced. The true reason is that the performance of machine learning models usually suffer from imbalanced training data.  One example is the Oxford 102 flower dataset which contains 1020 images in the training set with 10 images per class.  However, in the test set, 6149 images are available with high imbalance. In this experiment, we make the learning task harder. We use the original testing set for training and use the original training set for testing.  This setting demonstrates the real life scenario where we only can collect data with bias but wish the model to perform well in all different situations.

\begin{figure}[t]
\begin{center}
\hspace{-5pt}
\subfigure[Labels]{
\includegraphics[width=2.61cm, height =2.1cm]{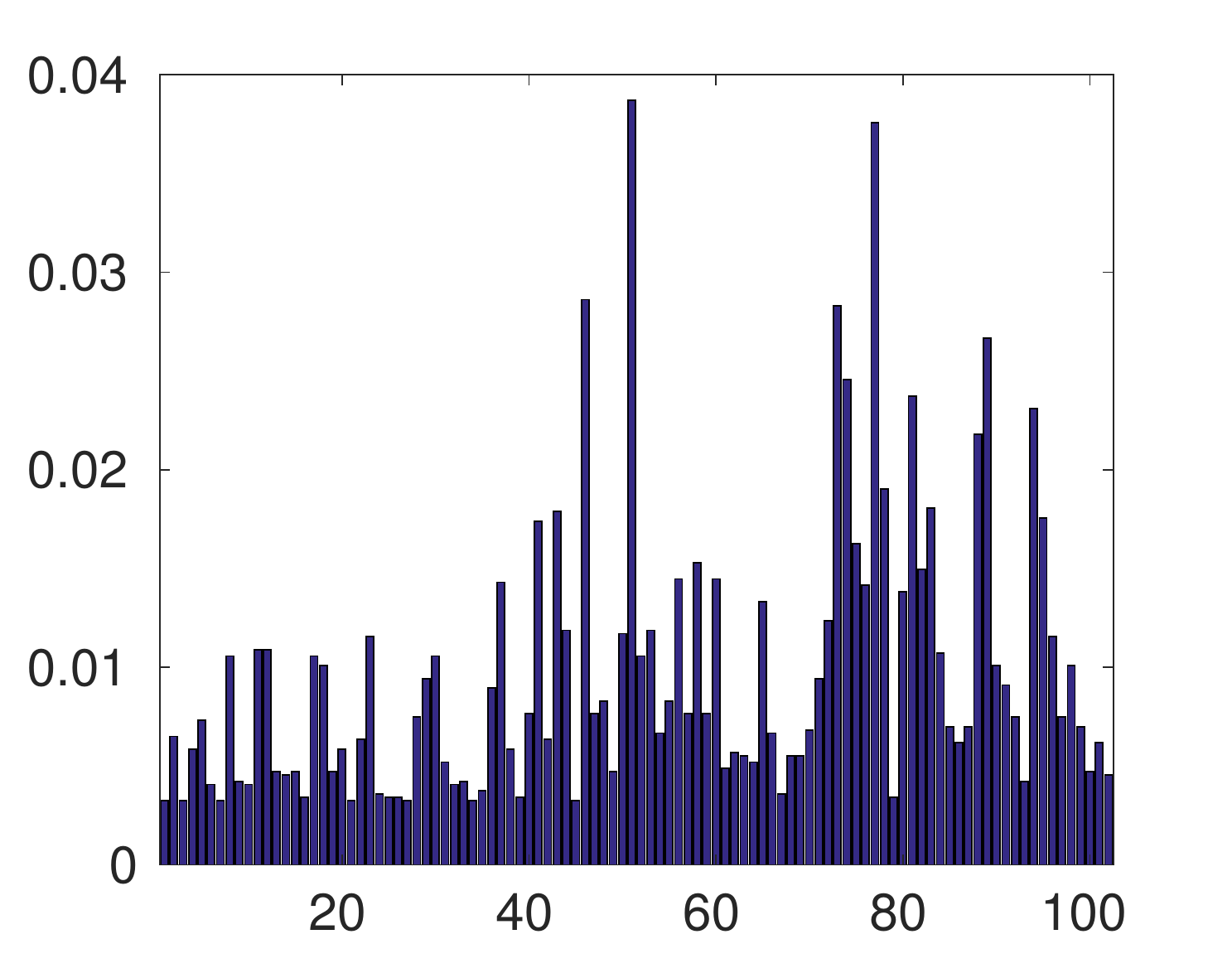}
}
\hspace{-15pt}
\subfigure[$w=0.3$]{
\includegraphics[width=2.61cm, height =2.1cm]{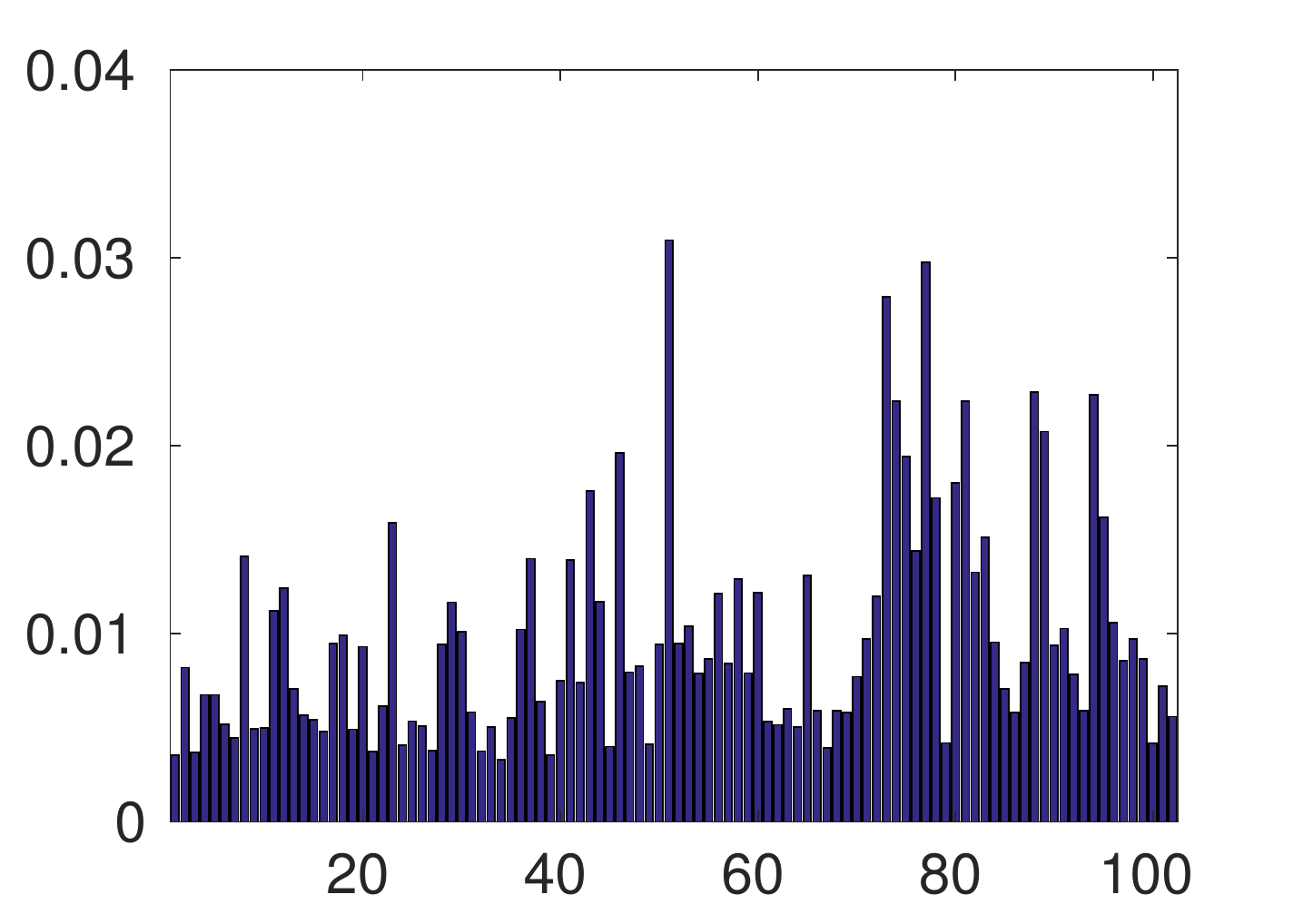}
}
\hspace{-15pt}
\subfigure[$w=0.9$]{
\includegraphics[width=2.61cm, height =2.1cm]{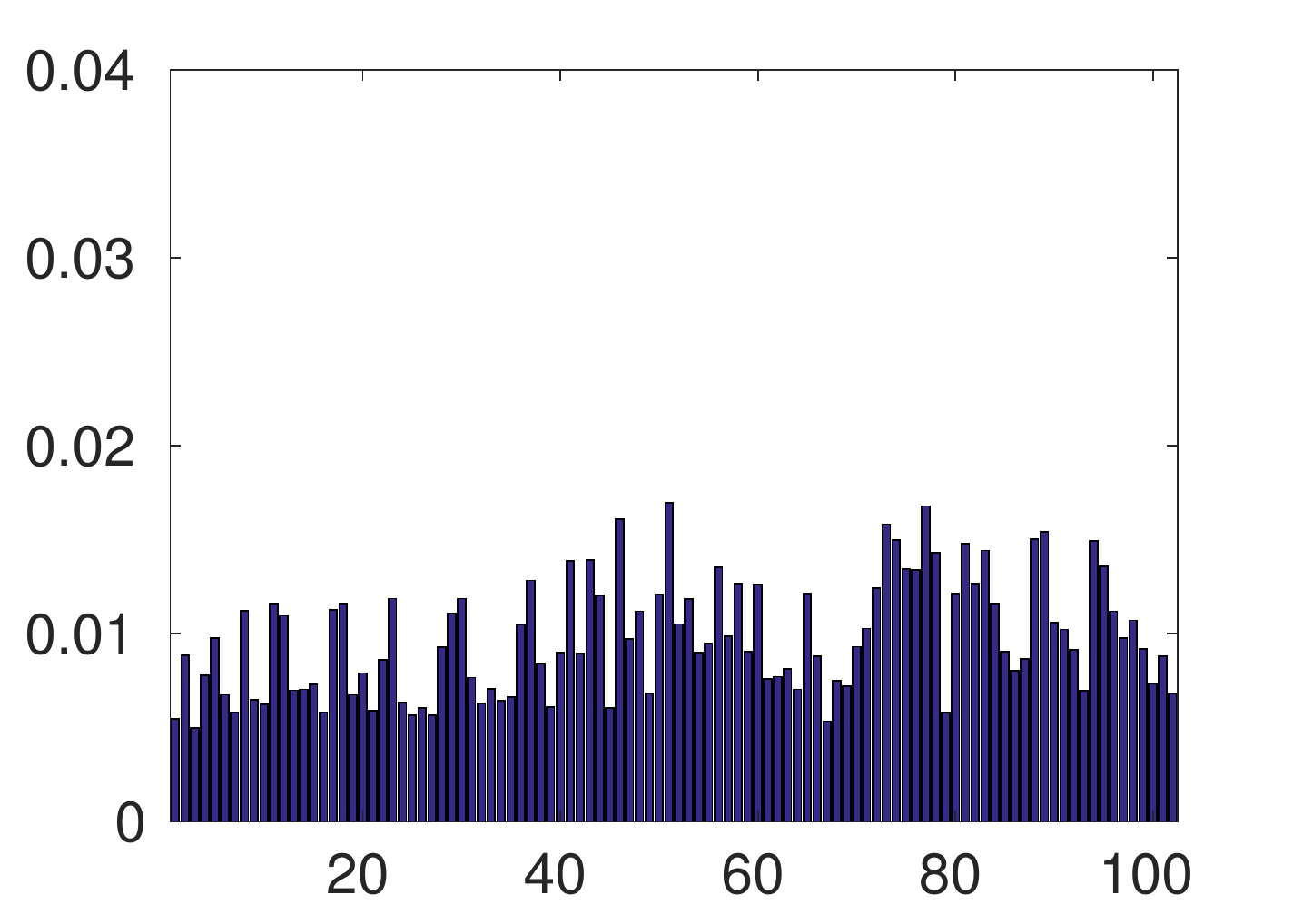}
}
\hspace{-5pt}
\end{center}
\caption{ The frequency  of data in each class of the training dataset and  of the balanced dataset using different weights $w$ (Eq.~\ref{eq:definition_w}). There are 102 classes of flowers in total and each bar presents the percentage of data belongs to one class. Minibatch size $50$ is used as an example here for (b) and (c).}
\label{fig:hist_flower} 
\end{figure}

\begin{figure}[t]
\begin{center}
\subfigure[k=50, Top3: 0.9, 0.7, 1; \protect\newline 
Best: $86.7\%$  Baseline:$84.7\%$]{
\includegraphics[width=4.cm, height = 2.85cm]{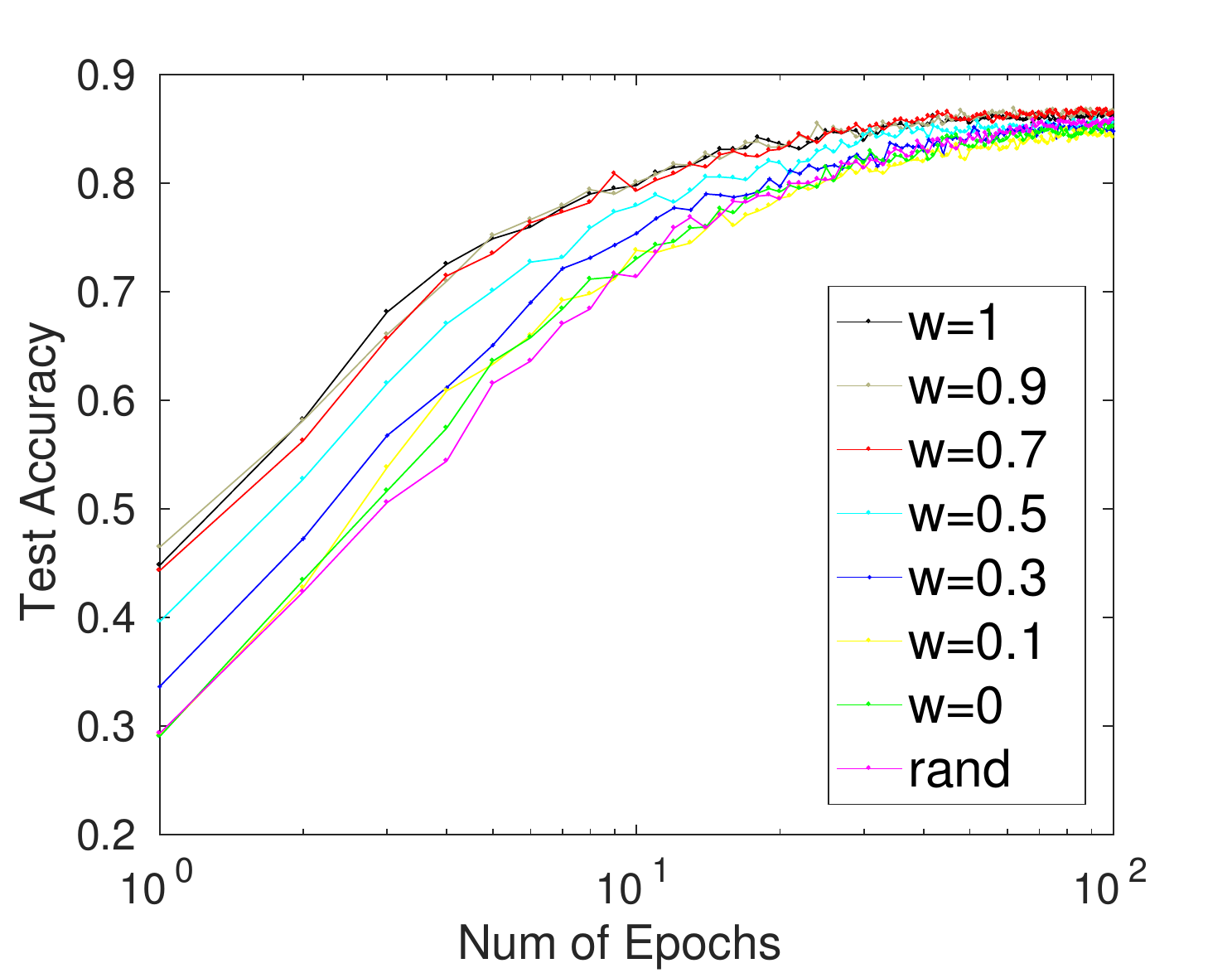}
}
\hspace{-10pt}
\subfigure[ k=80, Top3: 0.9, 0.7, 1;  \protect\newline
Best: $86.7\%$ Baseline:$81.8\%$]{
\includegraphics[width=4.cm, height = 2.85cm]{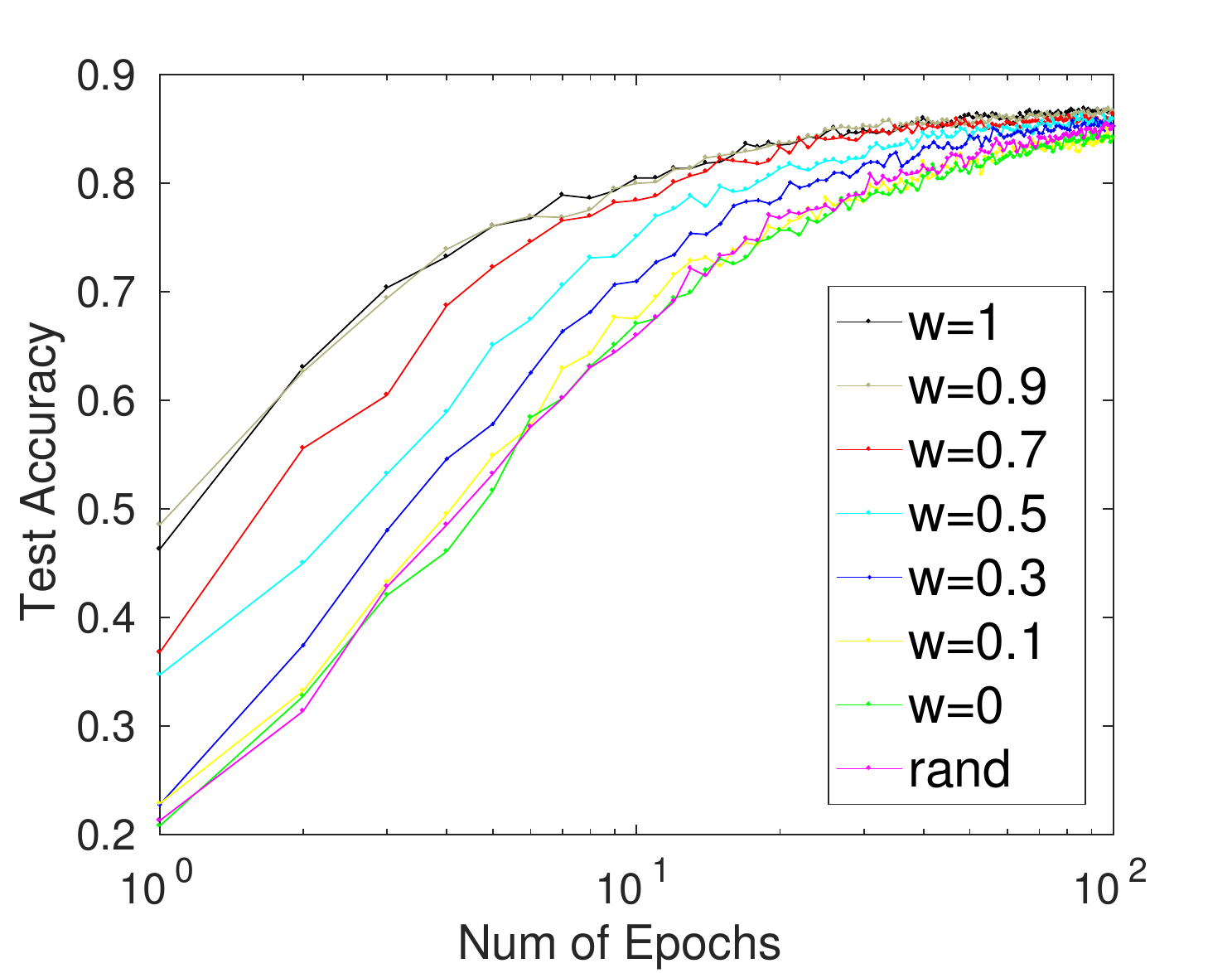}
}
\vspace{-8pt}
\subfigure[ k=102, Top3: 1, 0.9, 0.7; \protect\newline
Best: $86.5\%$ Baseline:$84.5\%$]{
\includegraphics[width=4.cm, height = 2.85cm]{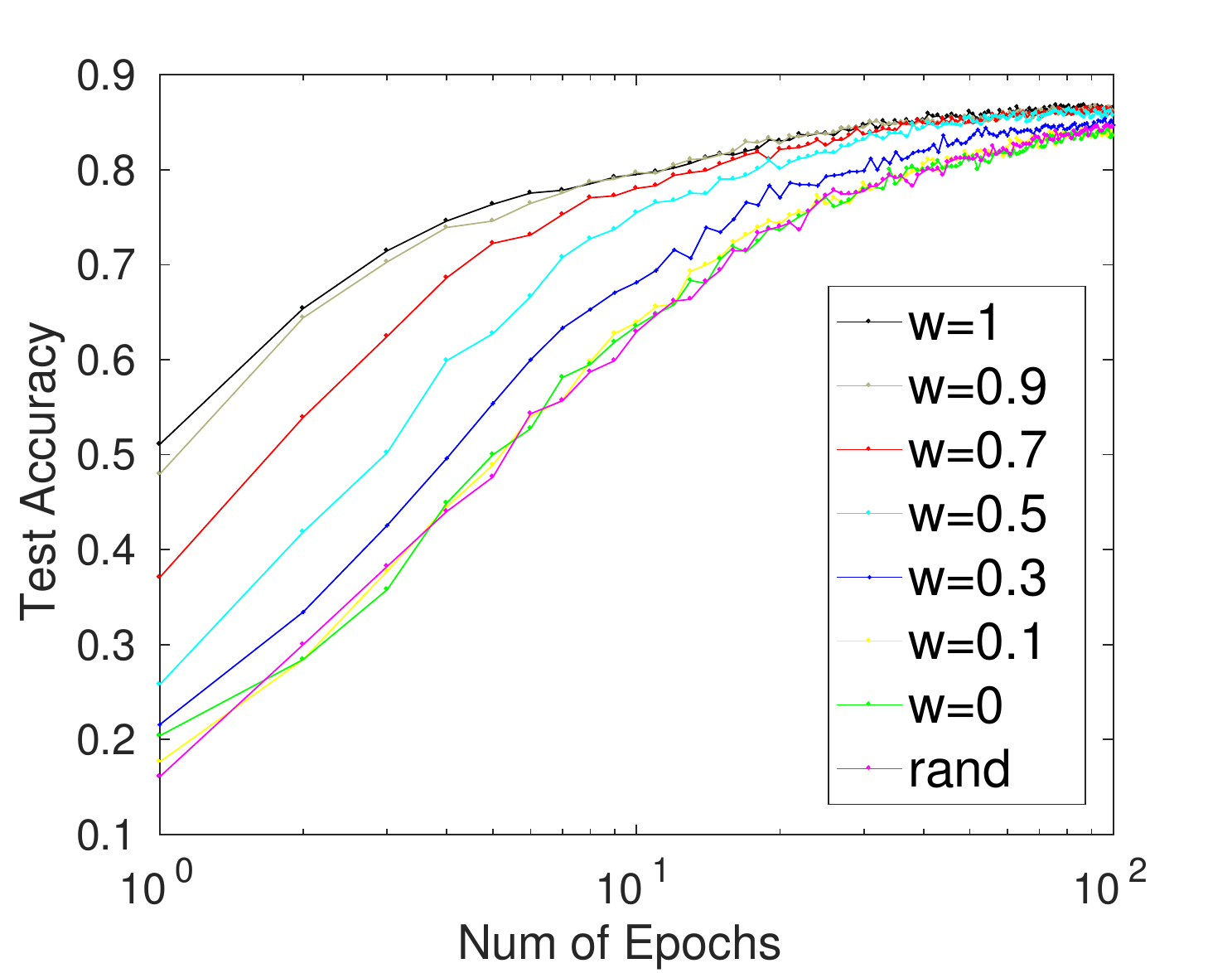}
}
\hspace{-10pt}
\subfigure[k=150, Top3: 0.7, 0.5, 0.9; \protect\newline
Best: $85.5\%$  Baseline:$83.1\%$]{
\includegraphics[width=4.cm, height = 2.85cm]{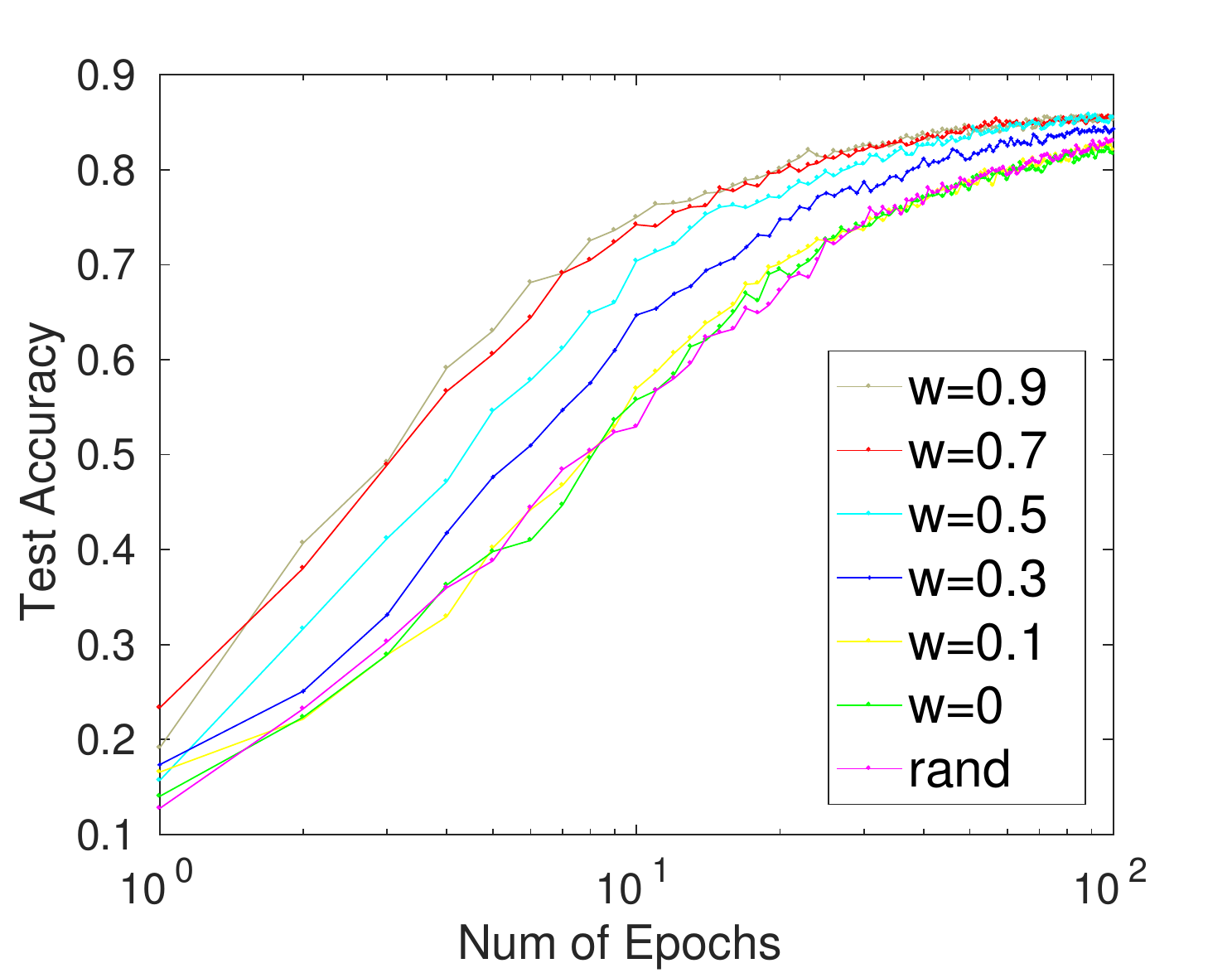}
}
%\subfigure[S 102]{
%\includegraphics[width=4.cm]{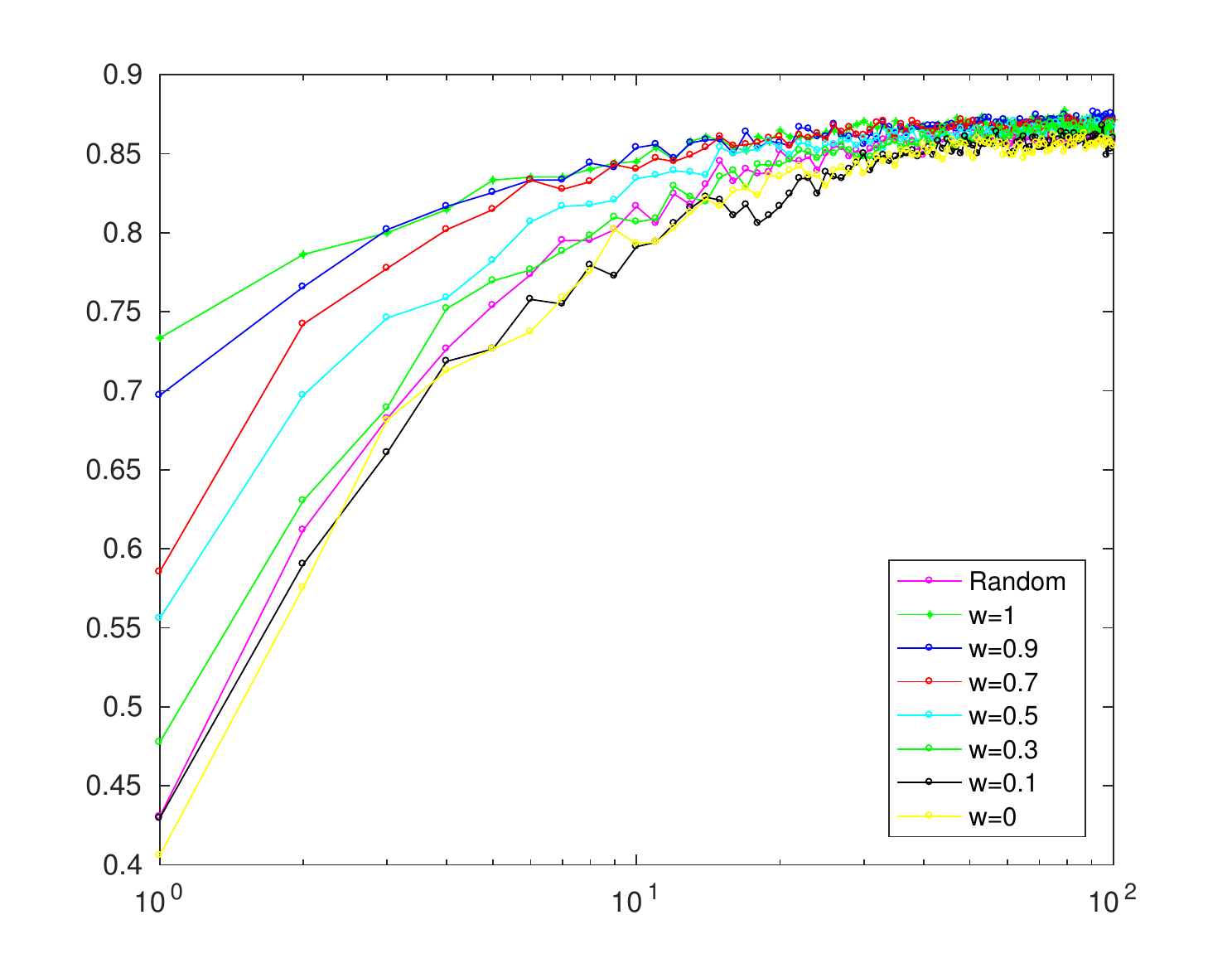}
%}
\caption{Test accuracy as a function of training epochs on the Oxford 102 multi-class classification task. We show DM-SGD for different values of $w$, with $w=1$ being biased stratified sampling (see Eq.~\ref{eq:definition_w} and the discussion below). The plot caption indicates the batch size $k$ and the three best performing values of $w$. 'Rand' indicates regular SGD sampling. We listed the final test accuracy after convergence, where "Best" indicates the best performance within our DM-SGD experiments, and "Baseline" indicates regular SGD as our baseline. The improvement is up to $5\%$.} 
\vspace{-8pt}
\label{fig:FlowerTestAcc}
\end{center}
\end{figure}
\vspace{-8pt}
Off-the-shelf CNN features \cite{sharif2014cnn} are used in this experiment. A pre-trained VGG16 network \cite{simonyan2014very} is used for the feature extraction. We use the first fully connected layer as features, since \cite{azizpour2015generic} shows that this layer is most robust. % among different tasks. 

The similarity kernel $L$ of the $k$-DPP was constructed as follows. We chose a linear kernel $L=FF^\top$, where $F$ is a weighted concatenation of the fc1 features $X_{fc1}$ and the labels a one-hot-vector representation of the class label $H$,
\begin{equation}
F = [(1-w)X_{fc1} ~~ wH], \quad 0 \leq w \leq 1.
\label{eq:definition_w}
\end{equation}
This kernel construction enables the population to be balanced both among classes and within classes. When $w$ is large, the algorithm focuses more on the class labels. When $w$ is small, balancing is performed mostly based on the features. The weighting factor $w$ is a free parameter. As $w=1$ results in stratified sampling (see Theorem~\ref{thrm:stratified}), this baseline is naturally captured in our approach.

%For diversified mini-batch sampling, we construct the similarity kernel matrix $L$ with a linear kernel $L=FF^\top$. When the class label information is available, we would like to take both the class label information and the data feature information into consideration. This way enables the population to be balanced both among classes and within classes. Hence, we construct $F$ by concatenating the fc1 feature and one-hot-vector representation of class label $H$  together with different weights $w$ as $F=[(1-w)X_{fc1} ~~ wH]$, where $0 \le w \le 1$. When $w$ is large, the balancing focuses more on the class label. When $w$ is small, the balancing is performed in an unsupervised manner.
%as in the previous experiment.

In this setting, the class label is a natural criterion to divide the data into strata. One can then re-sample the same amount of data from each stratum in order to re-balance the data set. Such a mechanism constrains the mini-batch size to be $k=sM$ where $M$ is the number of classes/strata and $s$ is a positive integer.
As proved in Section \ref{sec:theory}, when $k=M$ and $w=1$, DM-SGD is equivalent to this type of (biased) stratified sampling. 
%With $k<M$, DM-SGD is a direct generalization for stratified sampling with data being divided to strata by class labels. 

Figure \ref{fig:hist_flower} shows the percentage of data in each class for the original dataset and with  the balanced dataset. 
It shows that with larger $w$, the dataset is more balanced among classes. More examples are shown in the supplementary material.  

%\paragraph{Softmax.}~ 
We demonstrate this application with a standard linear Softmax classifier for multi-class classification. 
%A standard Softmax classifier is implemented using Tensorflow with cross-entropy loss. 
In our case, the inputs are the off-the-shelf CNN fc1 feature ($X_{fc1}$). We can also view this procedure as fine-tuning a neural network.

Figure \ref{fig:FlowerTestAcc} shows how the test accuracy changes with respect to each training epoch. We compare the DM-SGD with different weights against random sampling. The learning rate schedule is kept the same among different experiments. Different mini-batch sizes $k$ are used, which is shown in the caption of each panel in the figure. We can see that with DM-SGD, we can reach a high model performance more rapidly. Additionally, for a classification task, balancing data with respect to classes is important since the performance is better in general for bigger $w$. On the other hand, the feature information is essential as well since the best performance is mostly obtained with $w=0.9$ and $w=0.7$. 
Comparing these plots, we can see that the performance benefits more when the mini-batch size is comparably small. Small mini-batches in general are preferred due to low cost and our method can maximize the usage of small mini-batches.

%\begin{figure}
%\begin{center}
%\includegraphics[width=8.5cm]{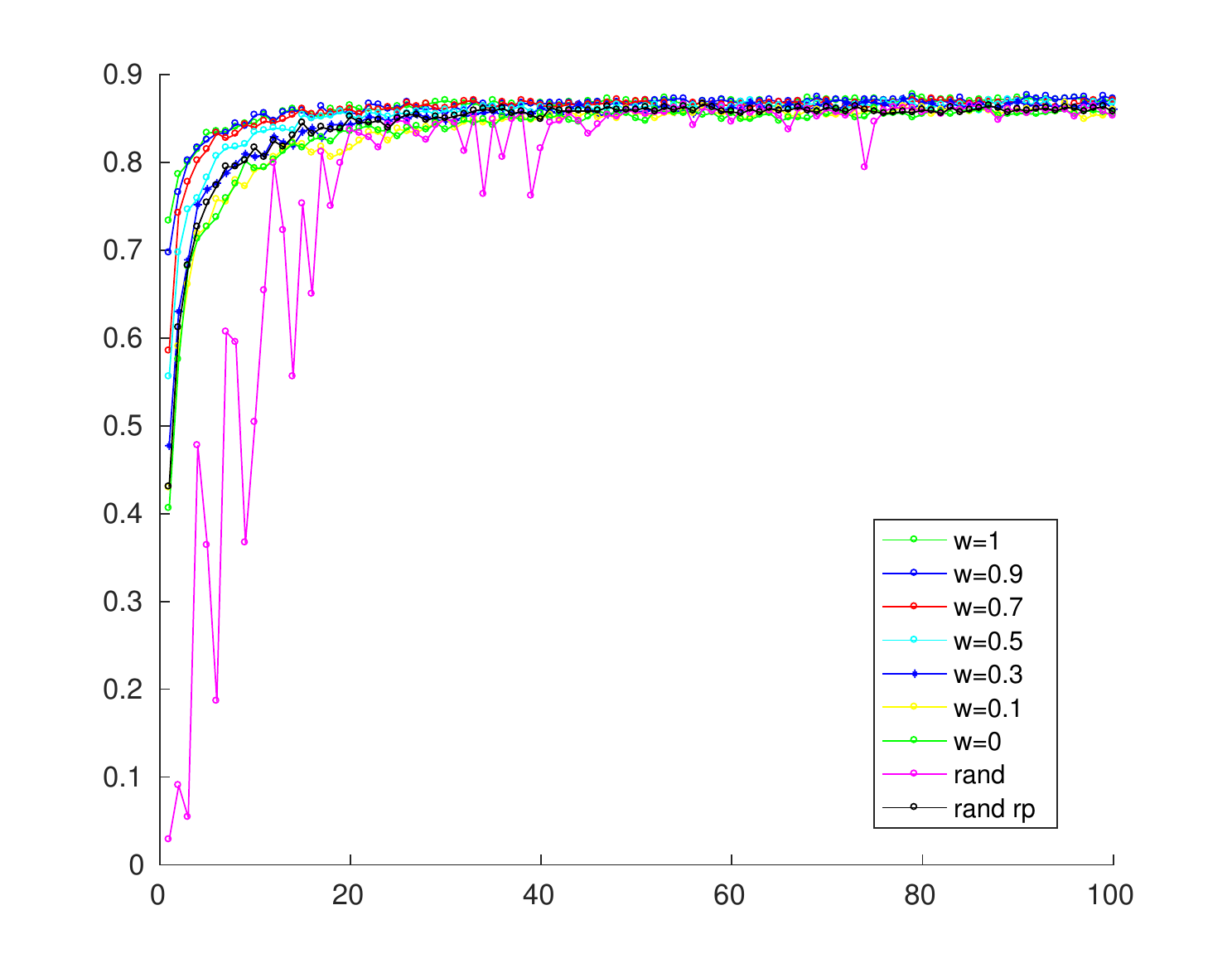}
%\label{K=102}
%\end{center}
%\end{figure}

%\begin{figure}
%\begin{center}
%\includegraphics[width=8.5cm]{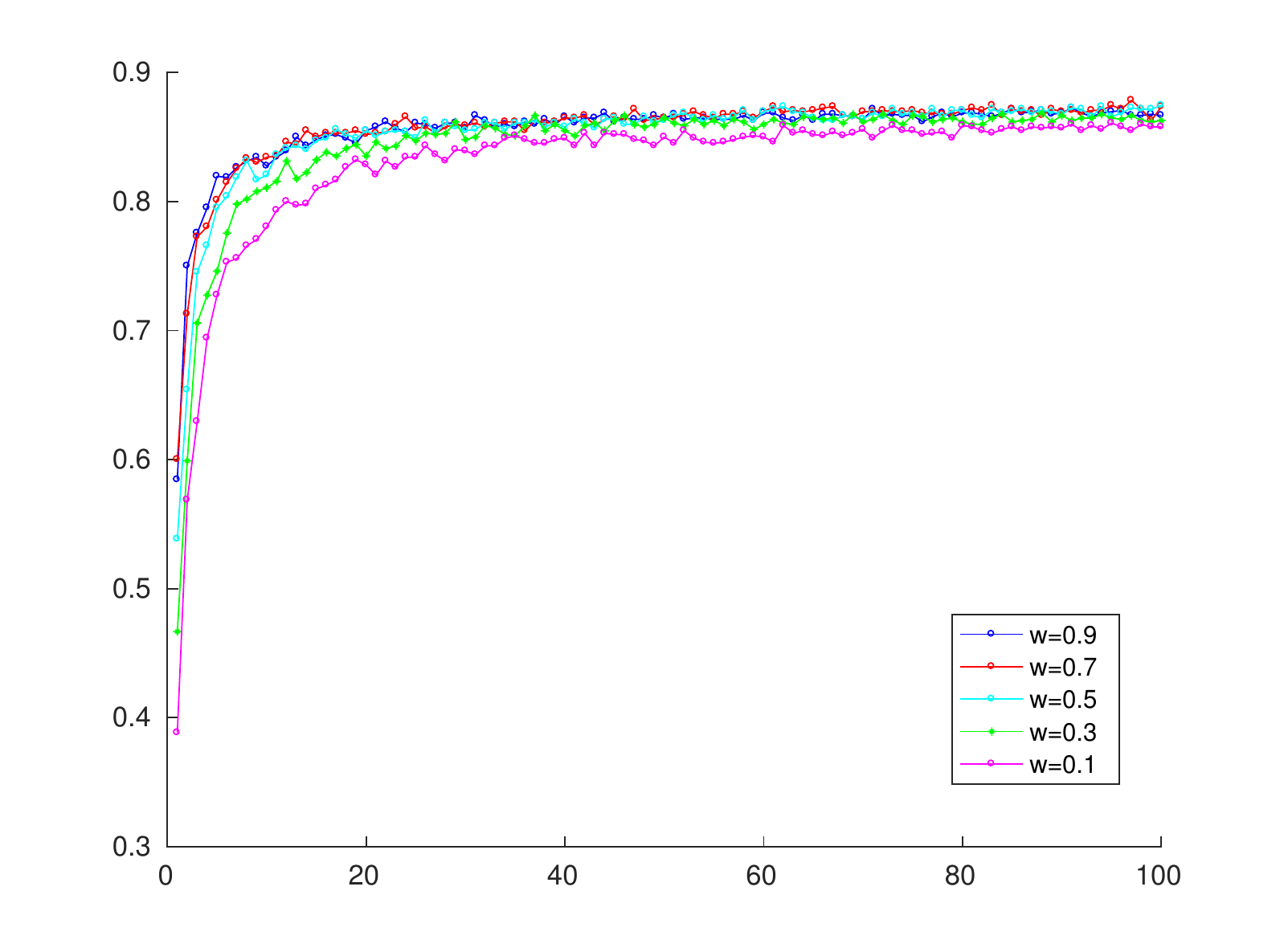}
%\label{K=150 partial}
%\end{center}
%\end{figure}

% \begin{figure}
% \begin{center}
% \subfigure[Orginal distribution]{
% \includegraphics[width=6.01cm]{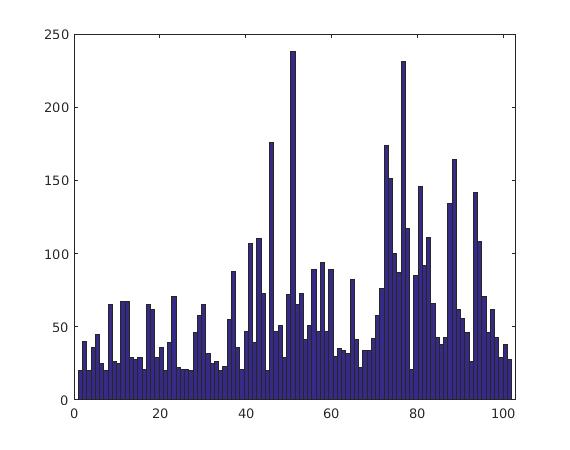}
% }
% \subfigure[resampled population]{
% \includegraphics[width=6.01cm]{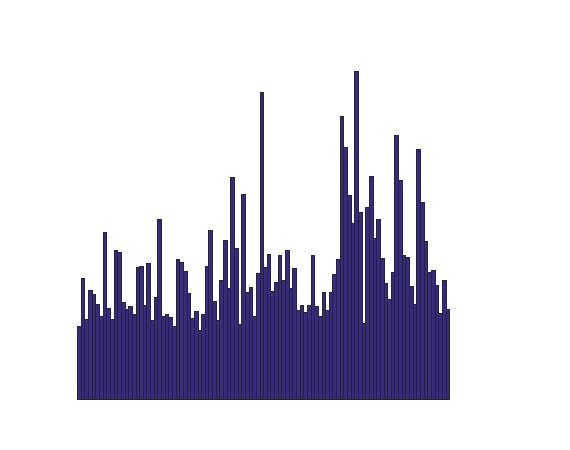}
% }
% \end{center}
% \end{figure}
% \begin{figure}
% \begin{center}
% \subfigure[Orginal distribution]{
% \includegraphics[width=6.01cm]{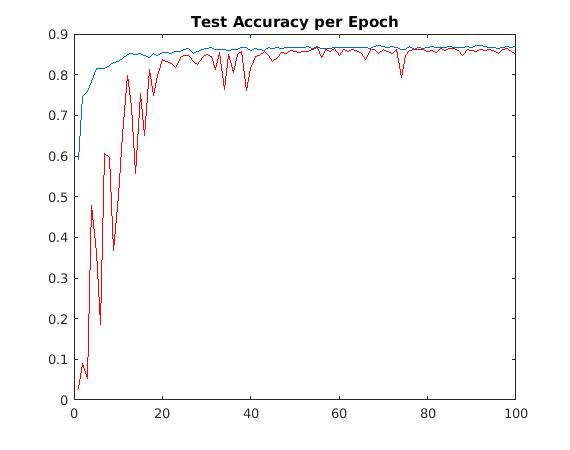}
% }
% \subfigure[resampled population]{
% \includegraphics[width=6.01cm]{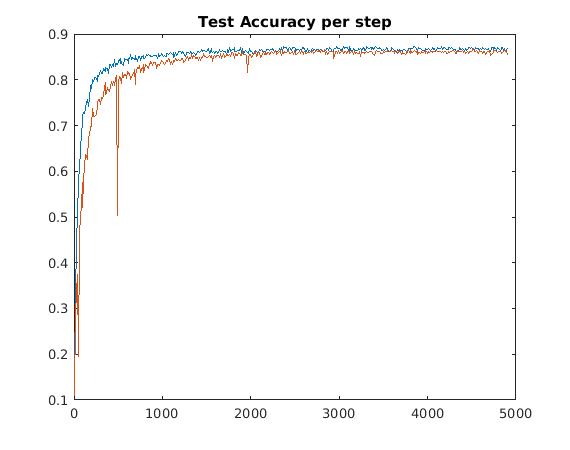}
% }
% \subfigure[resampled population]{
% \includegraphics[width=6.01cm]{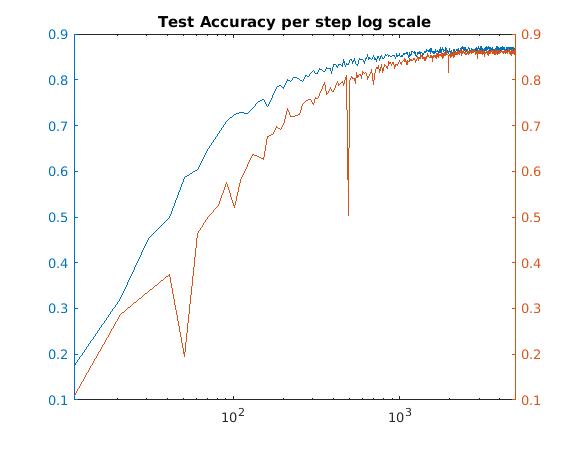}
% }
% \end{center}
% \end{figure}

%%%%%%%%%%%%%%%%%%%%%%%%%%%%%%%%%%%%%%%%%%%%%%%%%%%%%%%%%%%%%%%%%%%%%%%%%%%%%%%%%%%%%%%%%%%%%%%%%%%%%%%%%%%%%%%%%%%%%%%%%%%%%%%%%%%%%%%%%%%%%%%%%%%%%%
\subsection{CNN CLASSIFICATION ON MNIST}
\label{sec:cnn}
Finally, we show the performance of our method in a scenario where the dataset is balanced, which is less preferable scenario for DM-SGD. Here we consider the MNIST dataset \cite{lecun1998gradient}, which contains approximately the same number of examples per hand-written digits. 
\begin{figure}[t]
\begin{center}
\subfigure[k=10]{
\includegraphics[width=3.91cm, height = 2.8cm]{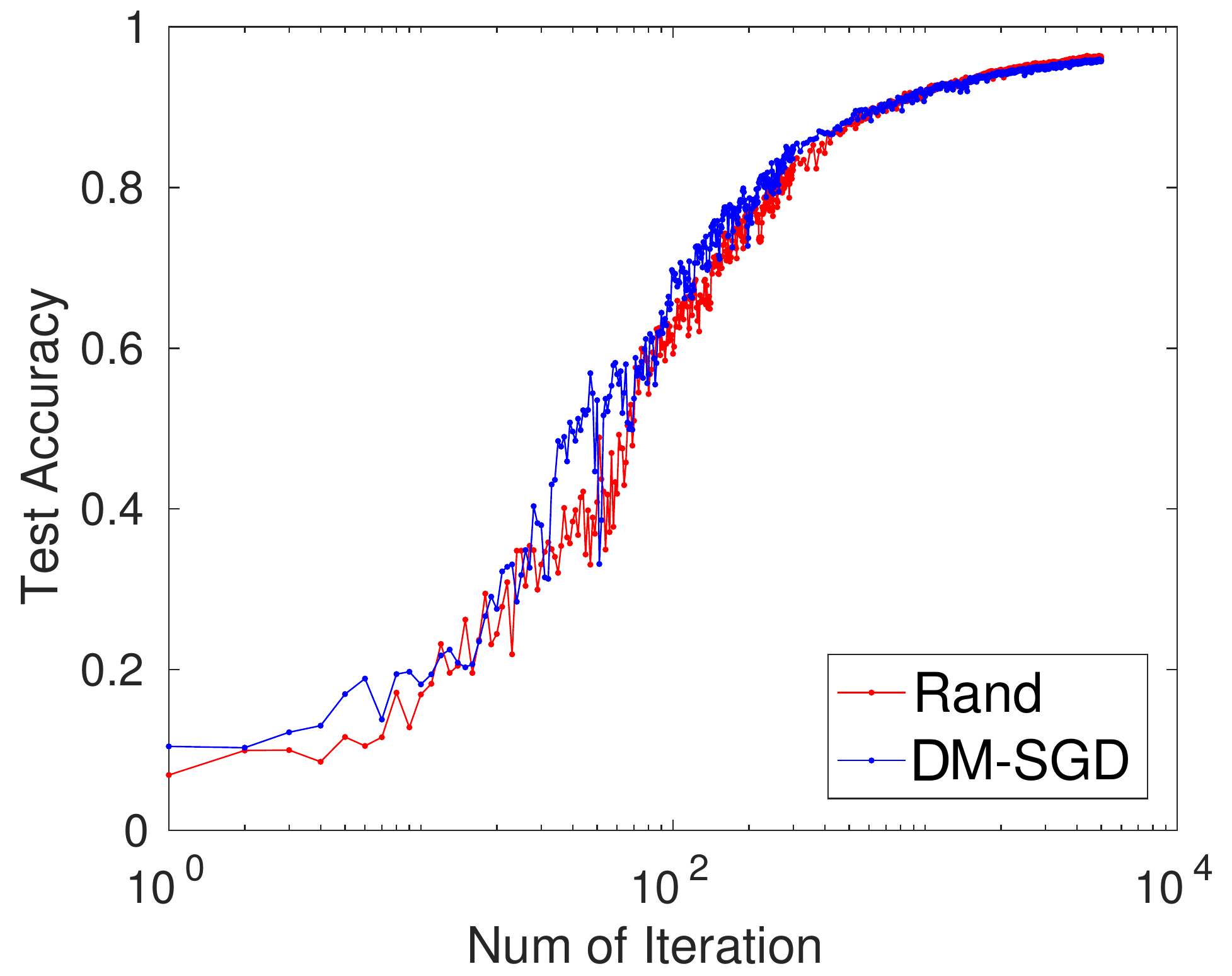}
}
\hspace{-5pt}
\subfigure[k=200]{
\includegraphics[width=3.91cm, height = 2.8cm]{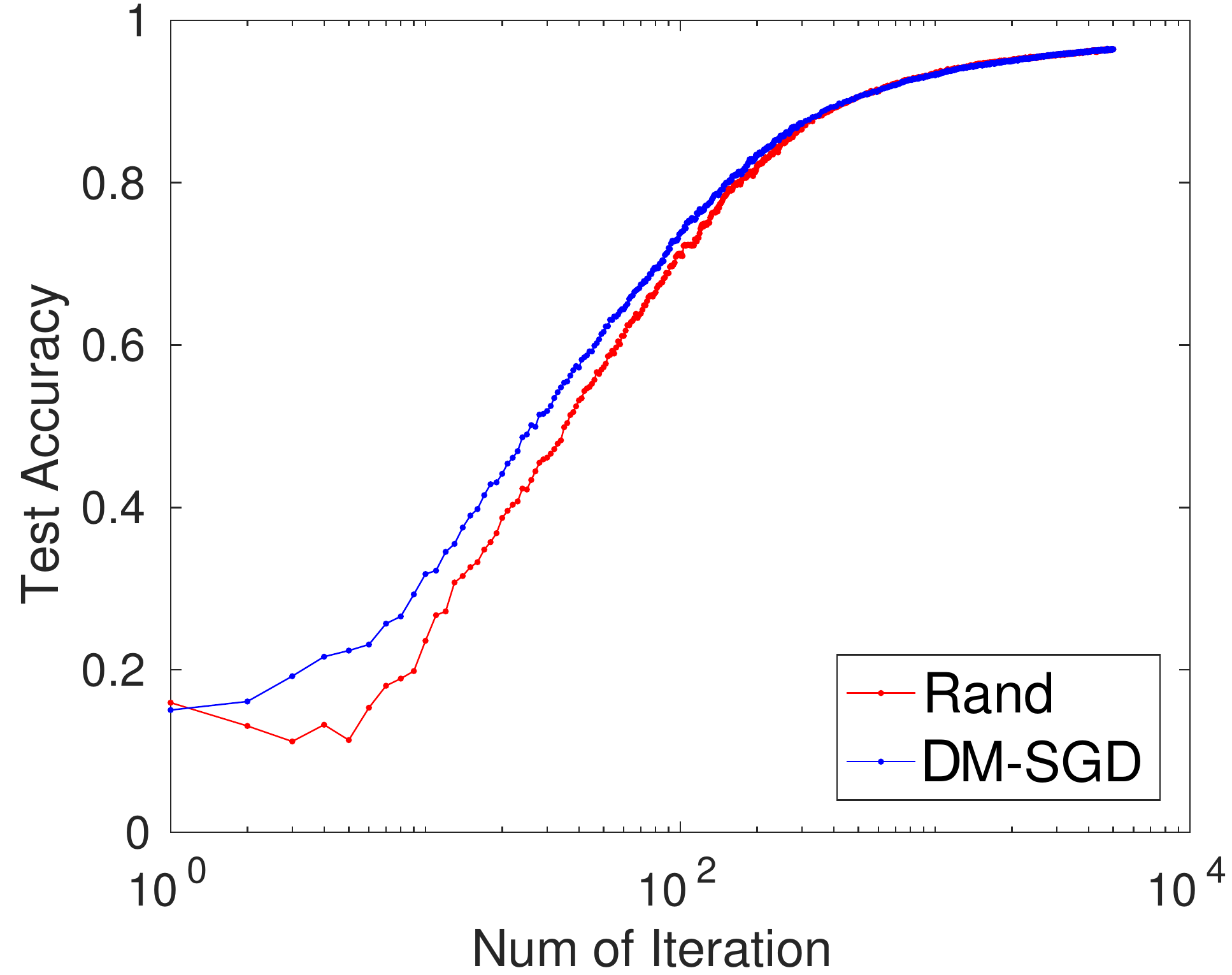}
}
\caption{Same quantities shown as in Fig.~\ref{fig:FlowerTestAcc}, but for the MNIST data set, which is more balanced.}
\label{fig:MNIST}
\end{center}
\vspace{-15pt}
\end{figure}
Since our method is independent of the model, we can use any low level data statistics. Here, we demonstrate DM-SGD with raw data features and apply it to training a CNN. 
Here, we construct the similarity kernel using a RBF kernel. For the low level feature, we use the normalized raw pixel value $X$ directly. To encode both class information and label information, we use  $F=[(1-w)X ~~ wH]$ to compute the similarities matrix, where $w=0.5$ for this experiment. 
We use half of the training data from MNIST to train a 5-layer CNN as in \cite{tensorflowCode}. Figure \ref{fig:MNIST} shows the test accuracy from each iteration with mini-batch size $10$ and $200$ respectively. We can see that even if the data are balanced, DM-SGD still performs better than random sampling due to its variance reduction property.  

%{\color{red} TODO w=0.9, S small. Believe that the performance will be better.}
\section{CONCLUSION}
We proposed a diversified mini-batch sampling scheme based on determinantal point processes. Our method, DM-SGD, builds on a similarity matrix between the data points and suppresses the co-occurance of similar data points in the same mini-batch. This leads to a training outcome which generalizes better to unseen data. We also derived sufficient conditions under which the method reduces the variance of the stochastic gradient, leading to faster learning. We showed that our approach generalizes both stratified sampling and pre-clustering. 
In the future, 
we will explore the possibility to further improve the efficiency of the algorithm with data reweighing
\cite{mandt2016variationalTem} and tackle imbalanced learning problems involving different modalities for supervised \cite{zhang14how} and multi-modal  \cite{joulin2016learning} settings. 

{
\footnotesize
\bibliography{ref}
\bibliographystyle{plain}
}
\clearpage
\appendix
\section{SUPPLEMENT}

Algorithm \ref{alg:EigVsample} shows the details of how to sample a mini-batch using $k$-DPP \cite{kulesza2012determinantal} which is used for the DM-SGD and DM-SVI algorithm in the paper.

\vspace{-7pt}
\begin{algorithm}[h]
\textbf{Input:} Mini-batch size $k$, eigendecomposition $\{ (v_n, \lambda_n)\}^N _{n=1} $ of similarity matrix $L$.\\
\textbf{Compute the elementary  symmetric polynomials}\\
$e_0^n \leftarrow 1 \forall n \in \{ 0,1,2, ..., N\} $\\
$e_0^l \leftarrow 1 \forall l\in \{ 1,2, ..., k\} $\\
\For{$l=1,2,...,k$}{
 \For{$n=1,2,..., N$}{
     $e_l^n \leftarrow e_l^{n-1} + \lambda_n e_{l-1}^{n-1}$
 }
}
\For{t=1 to Number of subset samples to generate}{
\textbf{Sampling $k$ eigenvectors $V$ with indices $J$}\\
$J \leftarrow \emptyset$\\
$l \leftarrow k$\\
\For{ $n= N,...,2,1$}{
\If{$l=0$}{
break;
}

\If{ $u \sim U[0,1]  \le \lambda_n \frac{ e_{l-1}^{n-1}} { e_l^n}$ }{
	$J \leftarrow J \cup \{n\}$ \\
	$l \leftarrow l -1$\\
}
}

\textbf{Sample  $k$ data points indexed by $Y$ using $V$.}\\
$ V \leftarrow \{ v_i\} _{i \in J}$\\
$Y \leftarrow \emptyset$\\
   \While{ $|V| > 0$}{
   	Select $i$ with $Pr(i) = \frac{1}{|V|} \sum_{v \in V} (v^{T} e_i)^2$\\
	$Y \leftarrow Y \cup i$
	$V \leftarrow V_{\perp}$, an orthonormal basis for the subspace of $V$ orthogonal to $e_i$\\
   }
\textbf{Output:} $Y$\\
}
\caption{Mini-batch Sampling }
 \label{alg:EigVsample}
\end{algorithm}
\vspace{-10pt}

Table \ref{tab:topwords_SVI}  and \ref{tab:topwords_BP_SVI} show the top words using $K=30$ for LDA using traditional SVI and our proposed  DM-SVI respectively. We can see that the topics that are learned by DM-SVI are more diverse and rare topics such as grain (colored in blue) are captured.

Figure \ref{fig:alldata} shows the synthetic data that are used in the LDA experiment. Each row represents a document and each column represents a word.  

\begin{figure}[h]
\centering
\includegraphics[width=8cm]{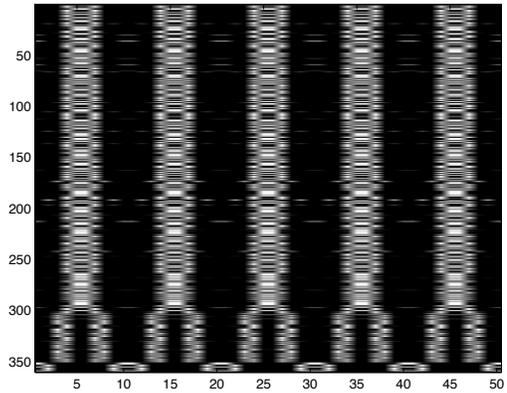}
\caption{Synthetic data used to train the LDA model in the main paper. Each row presents a document and each column represents a word. Documents share topics with highly imbalanced proportions.}
\label{fig:alldata}
\end{figure}
\begin{table}[p]
\small
\centering
		\begin{tabular}{| l | p{6cm} |}
		\hline
Topic 1 &	pct shares stake and group investment securities stock commission firm \\
Topic 2 &	year pct and for last lower growth debt profits company \\
Topic 3 &	and merger for will approval companies corp acquire into letter\\ 
Topic 4 &	and for canadian company management pacific bid southern court units \\
Topic 5 &	baker official and that treasury western policy administration study budget \\
Topic 6 &	and president for executive chief shares plc company chairman cyclops \\
Topic 7 &	bank pct banks rate rates money interest and reuter today \\
Topic 8 &	and unit inc sale sell reuter company systems corp terms \\
Topic 9 &	mln stg and reuter months year for plc market pretax \\
Topic 10 &	and national loan federal savings reuter association insurance estate real \\
Topic 11 &	trade and for bill not united imports that surplus south \\
Topic 12 &	and february for china january gulf issue month that last \\
Topic 13 &	market dollar that had and will exchange system currency west \\
Topic 14 &	dlrs quarter share for company earnings year per and fiscal \\
Topic 15 &	billion mln tax year profit credit marks francs net pct \\
Topic 16 &	usair inc twa reuter trust air department chemical diluted piedmont \\
Topic 17 &	and will union spokesman not two that reuter security port \\
Topic 18 &	offer share tender shares that general and gencorp dlrs not \\
Topic 19 &	and company for that board proposal group made directors proposed \\
Topic 20 &	that japan japanese and world industry government for told officials \\
Topic 21 &	american analysts and that analyst chrysler shearson express stock not \\
Topic 22 &	loss profit mln reuter cts net shr dlrs qtr year \\
Topic 23 &	mln dlrs and assets for dlr operations year charge reuter \\
Topic 24 &	mln net cts shr revs dlrs qtr year oper reuter \\
Topic 25 &	cts april reuter div pay prior record qtly march sets \\
Topic 26 &	dividend stock split for two reuter march payable record april \\
Topic 27 &	oil and prices crude for energy opec petroleum production bpd \\
Topic 28 &	agreement for development and years program technology reuter conditions agreed \\
Topic 29 &	and foreign that talks for international industrial exchange not since \\
Topic 30 &	corp inc acquisition will company common shares reuter stock purchase \\
		\hline
		\end{tabular}
		\caption{Top 10 words for each topics learned from LDA with traditional SVI.}
\label{tab:topwords_SVI}
\end{table}

\begin{table}[p]
\centering
\small
		\begin{tabular}{| l | p{6cm} |}
		\hline
		Topic 1 &	oil and that prices for petroleum dlrs energy crude field  \\
Topic 2 &	pct and that rate market banks term rates this will  \\
Topic 3 &	billion and pct mln group marks sales year capital rose \\ 
Topic 4 &	and saudi oil gulf that arabia december minister prices for \\
Topic 5 &	and dlrs debt for brazil southern mln will medical had \\
Topic 6 &	\color{blue} and grain that will futures for program farm certificates agriculture \\
Topic 7 &	bank banks rate and pct interest rates for foreign banking \\
Topic 8 &	and union for national seamen california port security that strike \\
Topic 9 &	and trade that for dollar deficit gatt not exports economic \\
Topic 10 &	and financial for sale inc services reuter systems agreement assets \\
Topic 11 &	dollar and for yen mark march that dealers sterling market \\
Topic 12 &	and for south unit equipment reuter two will state corp \\
Topic 13 &	and firm stock company will for pct not share that \\
Topic 14 &	and world that talks economic official for countries system monetary \\
Topic 15 &	and gencorp for offer general company partners that dlrs share \\
Topic 16 &	mln canada canadian stg and pct will air that royal \\
Topic 17 &	usair and twa that analysts not for pct analyst piedmont \\
Topic 18 &	and that for companies not years study this areas overseas \\
Topic 19 &	trade and bill for house that reagan foreign states committee \\
Topic 20 &	company dlrs offer stock and for corp share shares mln \\
Topic 21 &	dlrs year and quarter company for earnings will tax share\\ 
Topic 22 &	mln cts net loss dlrs profit reuter shr year qtr \\
Topic 23 &	exchange paris and rates that treasury baker allied for western \\
Topic 24 &	and shares inc for group dlrs pct offer reuter share \\
Topic 25 &	merger and that pacific texas hughes baker commerce for company \\
Topic 26 &	and american company subsidiary china french reuter pct for owned \\
Topic 27 &	japan japanese and that trade officials for government industry pact \\
Topic 28 &	oil opec mln bpd prices production ecuador and output crude \\
Topic 29 &	and that had shares block for mln government not san \\
Topic 30 &	mln pct and profits dlrs year for billion company will \\
		\hline
		\end{tabular}
		\caption{Top 10 words for each topics learned from LDA with DM-SVI.}
			\label{tab:topwords_BP_SVI}
\end{table}

The sampling time in seconds for the R8 dataset is listed in Table \ref{tab:R8time_sec}. There are 5485 training documents. The first row in the table shows the sampling time for different mini-batch sizes k and different versions of k-DPP sampling. In practice, we use the original implementation from \cite{li2016fast} with $M=100$. To compare with the traditional k-DPP, we listed the elapsed time with \cite{kulesza2012determinantal}. The last row shows the running time per local LDA update, excluding sampling.

% Table 3: The sampling time (in sec) for LDA on the R8 dataset with different mini-batch sizes

%     Size 			 	                   | k = 10 	 | k =30		| k=50		| k=80		| 
% Fast k-DPP CPU/elapsed time	   |0.001		| 0.0139		| 0.0541		| 0.2199		|
% k-DPP elapsed time   		           | 0.0098	|0.1468		| 0.6438		| 2.6698		|
% k-DPP CPU time        		           |0.0863	| 0.15228	| 6.8593		| 28.474		|
% LDA updates without sampling   |0.877695	| 1.25303	| 1.64137	|2.23116	|

\begin{table}
\begin{tabular}{ l  c c c c} 
 \toprule
 Size  & k = 10 & k =30 & k=50	& k=80  \\
 \cmidrule(lr){1-5}
Fast k-DPP  & 0.001	& 0.0139 &0.0541 & 0.2199 \\
k-DPP  & 0.0098 &0.1468 & 0.6438& 2.6698\\
LDA &0.8777& 1.2530& 1.6414	&2.2312 \\
 \bottomrule
\end{tabular}
\medskip
\caption{\footnotesize Sampling time (in sec) for LDA on the R8 dataset with different mini-batch sizes.}
\label{tab:R8time_sec}
\end{table}

The computational time for training a neural network highly depends on the network structure and implementation details. For example, when using only one softmax layer as in the flower experiment, the cost per gradient step is in the milliseconds. In this setup, k-DPP is not effective from a runtime perspective, but still results in better final classification accuracies. However, the cost for each gradient step for a simple 5 layer NN as in the MNIST experiment with $K=100$ is 1.294 seconds. In the latter case, this time is comparable to k-DPP sampling (0.7941 sec) see Table \ref{tab:MNIST_sec}. We thus expect our methods to benefit expensive models and imbalanced training datasets more.

\begin{table}
\begin{tabular}{ l  c c c} 
 \toprule
 Size  & k = 10 & k =100 & k=200  \\
 \cmidrule(lr){1-4}
Fast k-DPP  &  0.0012 & 0.7941 & 5.4216 \\
NN cost &0.166948 & 1.29452& 2.64811	 \\
 \bottomrule
\end{tabular}
\medskip
\caption{\footnotesize Five Layer NN trained on MNIST with different mini-batch sizes.
Top row: sampling time (in sec) using the fast k-DPP approach. Bottom row: run time for each update step (excluding mini-batch sampling).}
\label{tab:MNIST_sec}
\end{table}

\begin{figure}[h]
\begin{center}
\subfigure[Original]{
\includegraphics[width=3.2cm, height=2.3cm]{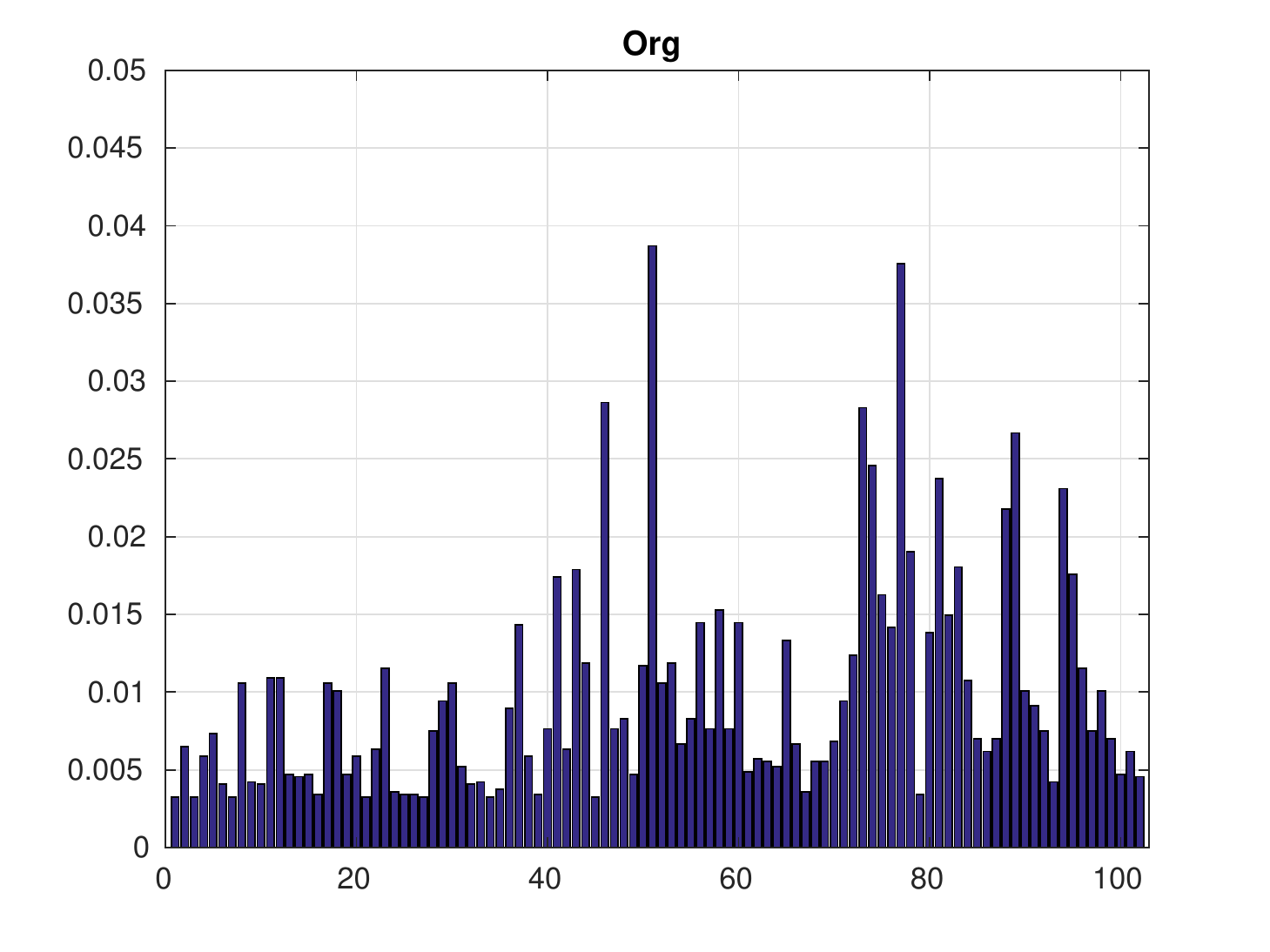}
}
\subfigure[w=0]{
\includegraphics[width=3.2cm, height=2.3cm]{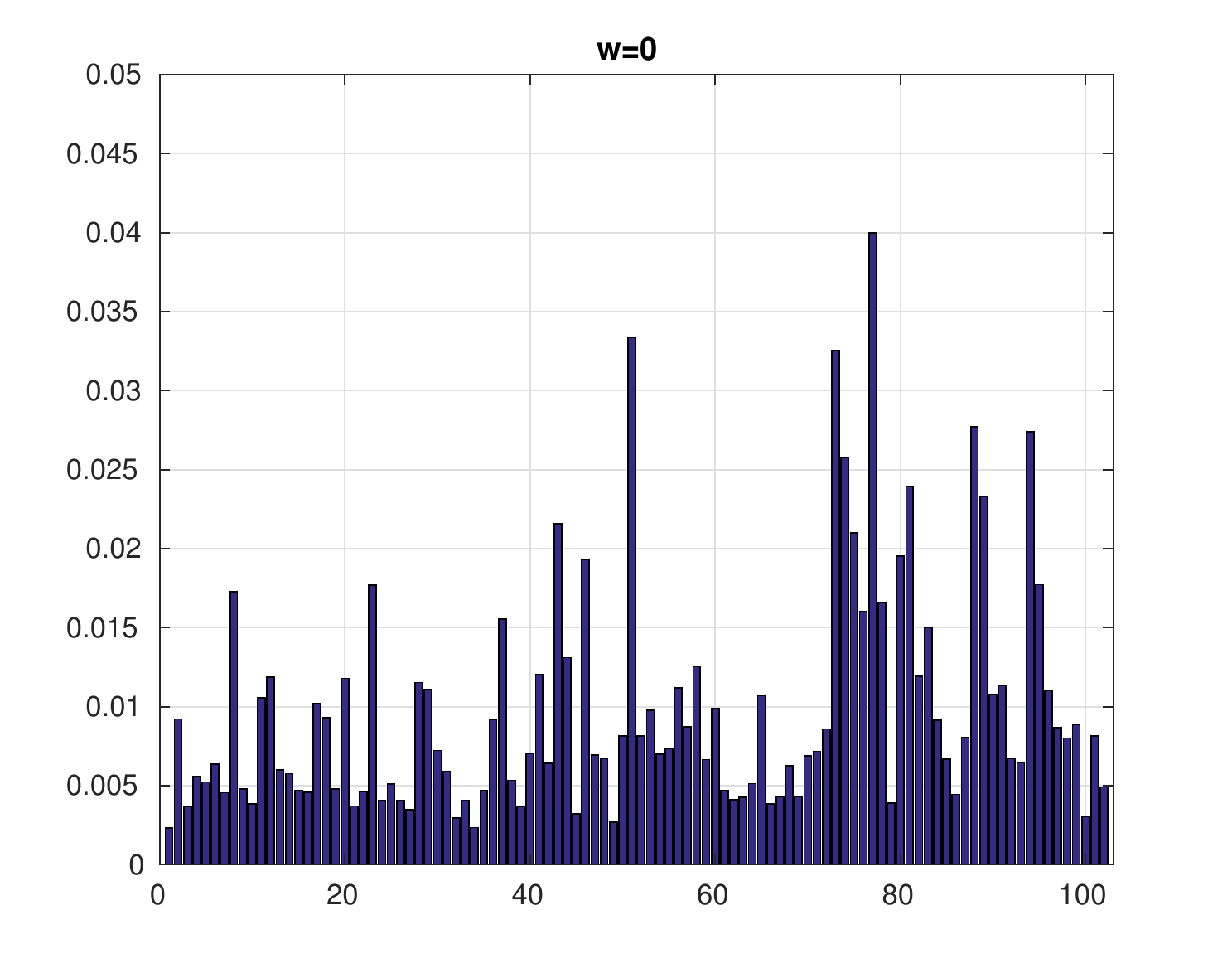}
}
\vspace{-10pt}

\subfigure[w=0.1]{
\includegraphics[width=3.2cm, height=2.3cm]{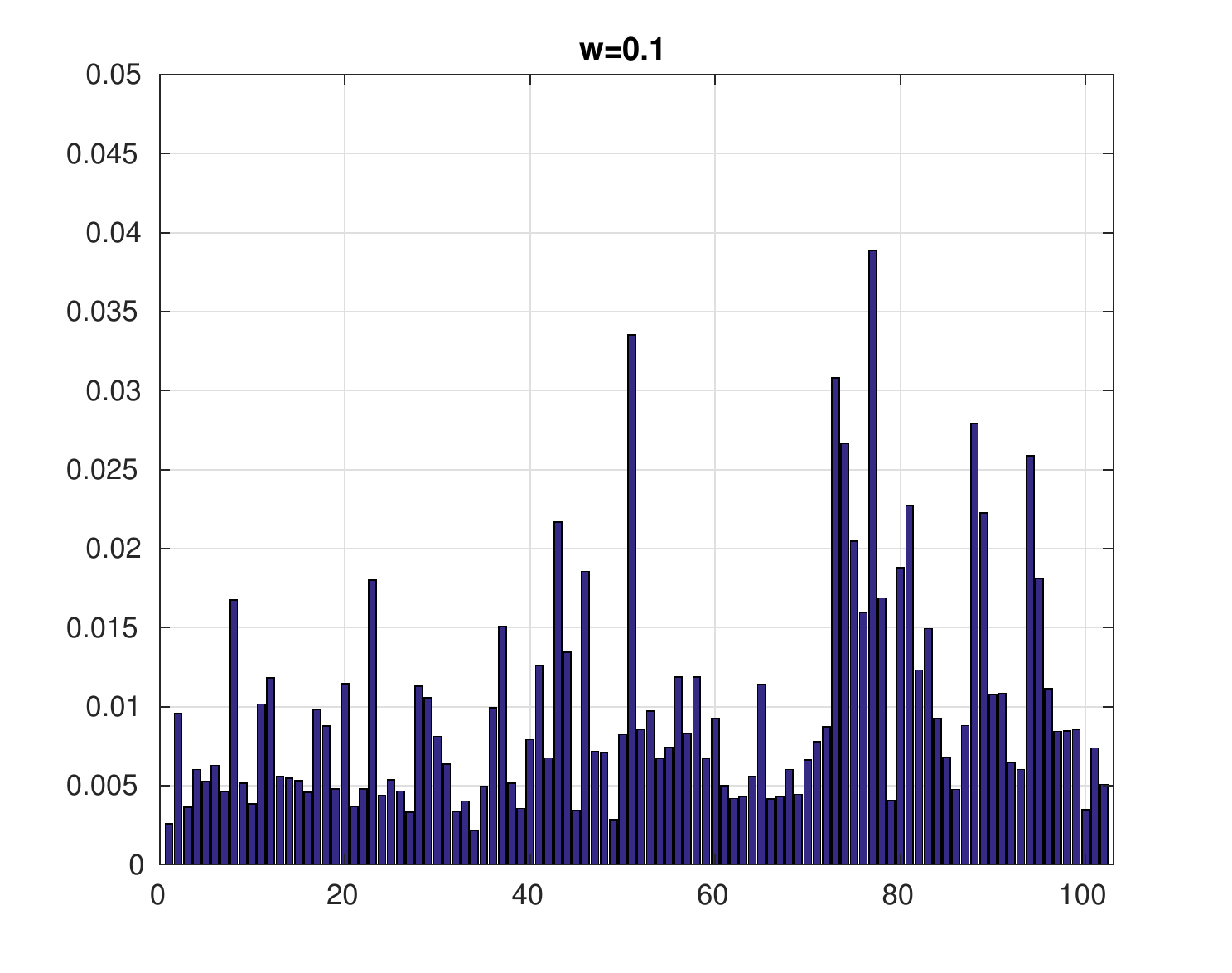}
}
\subfigure[w=0.3]{
\includegraphics[width=3.2cm, height=2.3cm]{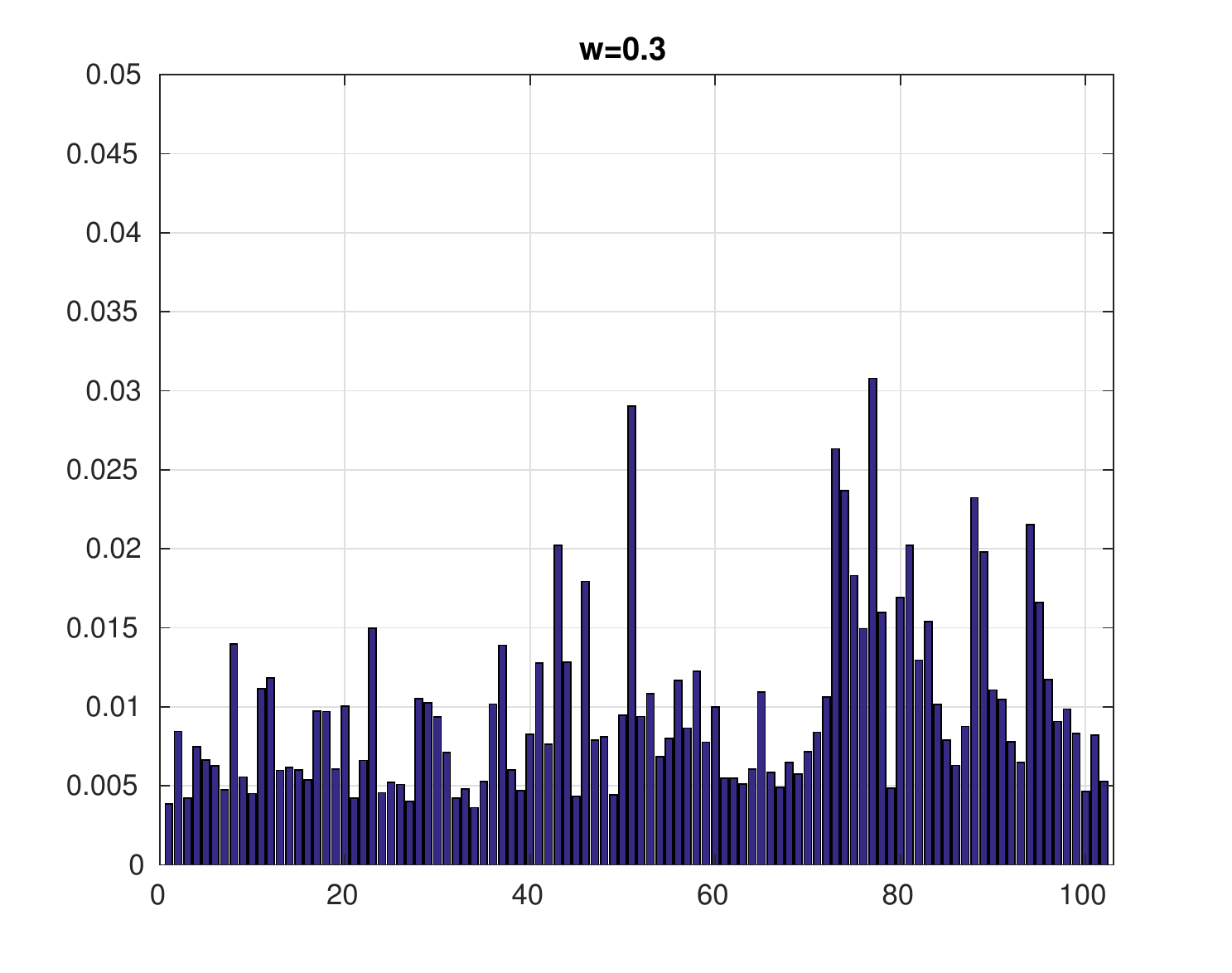}
}
\vspace{-10pt}

\subfigure[w=0.5]{
\includegraphics[width=3.2cm, height=2.3cm]{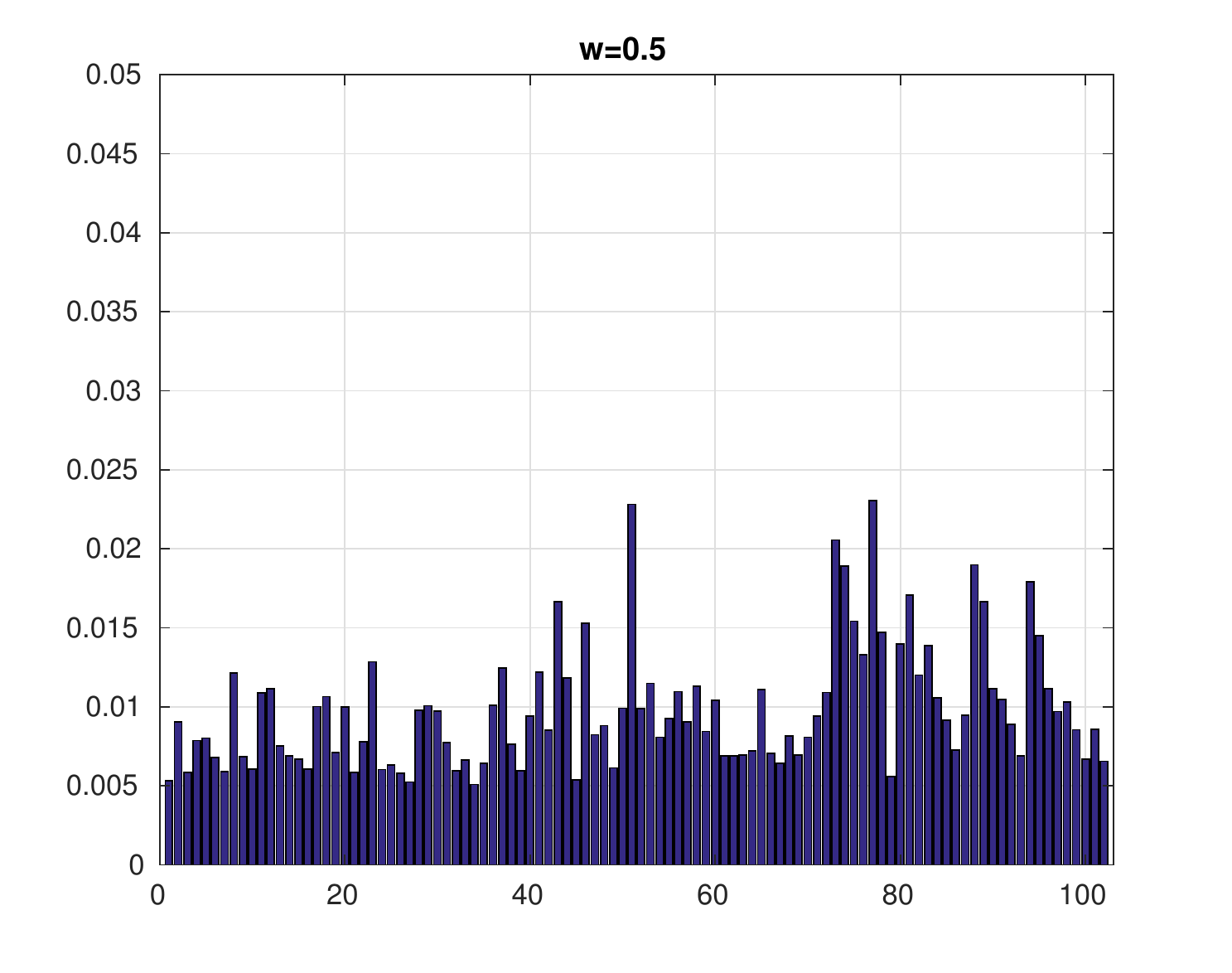}
}
\subfigure[w=0.7]{
\includegraphics[width=3.2cm, height=2.3cm]{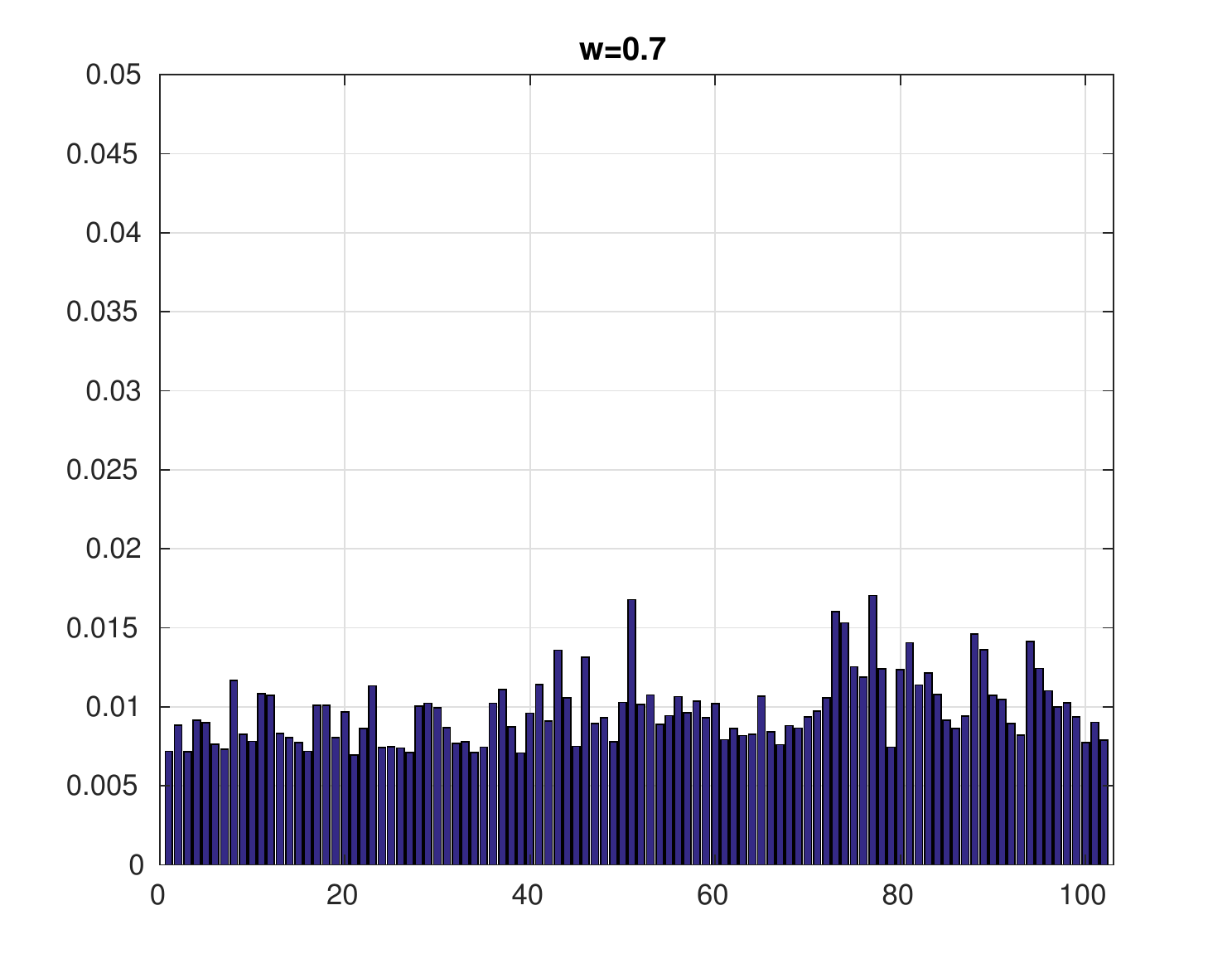}
}
\vspace{-10pt}

\subfigure[w=0.9]{
\includegraphics[width=3.2cm, height=2.3cm]{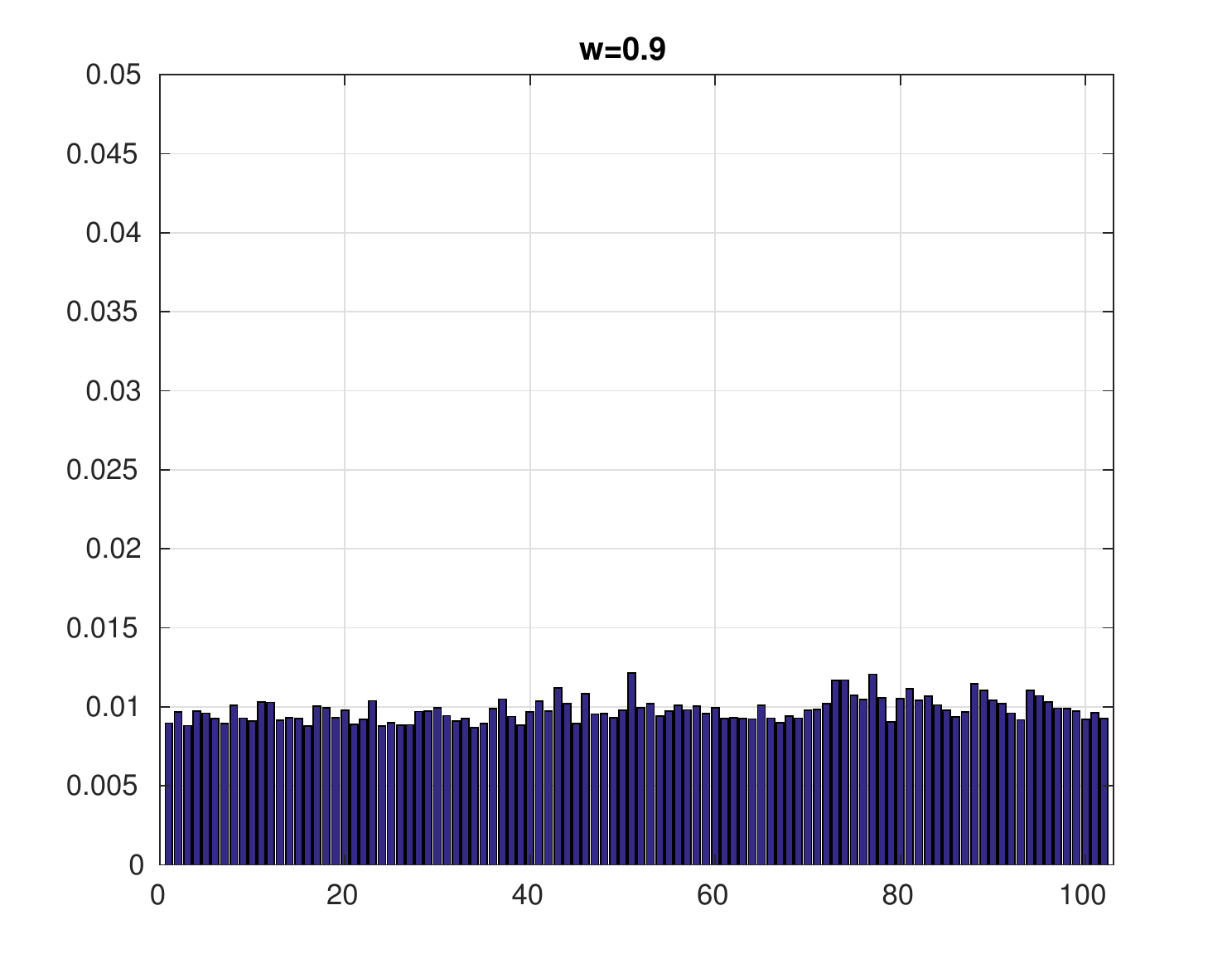}
}
\subfigure[w=1]{
\includegraphics[width=3.2cm, height=2.3cm]{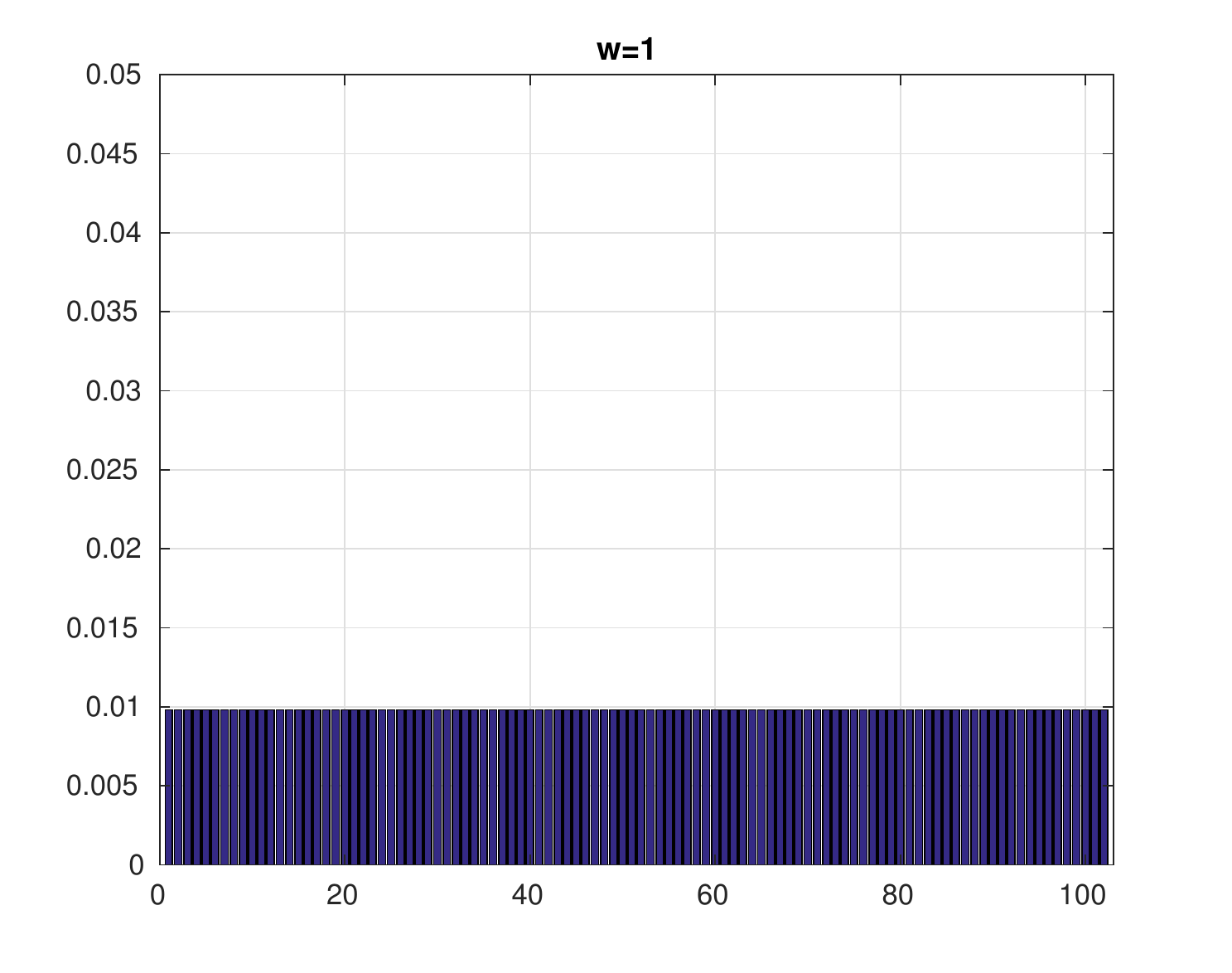}
}
\caption{The frequency of images in each class for Oxford Flower dataset, with $k=102$.
%$K=102$ is used here for comparison with stratification (a) shows the original training data. (b) shows the re-sampled data with $w=0$, which means the label information is not used, only the image off-the-shelf CNN feature is used. (c) -(g) shows the re-sampling results with different $w$ values and (h) shows the case with $w=1$, which means only the class label information is used. In this case, it is equivalent to stratified sampling.
}
\label{fig:sample_diff_weights}
\end{center}
\end{figure}

Figure \ref{fig:sample_diff_weights} shows the bar plots of the frequency of images in each class for Oxford Flower dataset using the number of classes as the mini-batch size. With this setting, we can see that when $w=1$, DM-SGD is equivalent to StS.

% {
% \bibliography{ref}
% \bibliographystyle{plain}
% }
% \end{document}
% \begin{align*}
% &\left|\begin{matrix}
% L_{ii} & L_{ij}\\
% L_{ji} & L_{jj}\\
% \end{matrix}\right| \\
% &= L_{ii} L_{jj} - L_{ij}L_{ji}
% \end{align*}

\end{document}